\documentclass[10pt]{article}
\usepackage[numbers,compress,sort]{natbib}
\usepackage[T1]{fontenc}
\usepackage[utf8]{inputenc}
\usepackage{booktabs,paralist}
\usepackage{amsthm}
\usepackage{mathtools}
\usepackage{amsmath}
\usepackage{bbm}
\usepackage{amssymb}
\usepackage{amsfonts,mathrsfs,bm}
\usepackage{nicefrac}
\usepackage{bbm}
\usepackage{graphicx}
\usepackage{thmtools,thm-restate}
\usepackage{color}
\usepackage{wrapfig}
\usepackage{mathtools}
\usepackage[left=1in,right=1in,top=1in,bottom=1in]{geometry}



\newcommand{\R}{\mathbb{R}}

\newcommand{\defeq}{\coloneqq}



\let\abs\undefined
\let\norm\undefined
\newcommand{\inner}[2]{\left\langle #1,\, #2 \right\rangle}
\DeclarePairedDelimiter{\abs}{\lvert}{\rvert} %
\DeclarePairedDelimiter{\brk}{[}{]}
\DeclarePairedDelimiter{\crl}{\{}{\}}
\DeclarePairedDelimiter{\prn}{(}{)}
\DeclarePairedDelimiter{\nrm}{\|}{\|}
\DeclarePairedDelimiter{\norm}{\|}{\|}
\DeclarePairedDelimiter{\tri}{\langle}{\rangle}

\let\P\undefined

\DeclareMathOperator{\P}{\mathbb{P}}

\DeclareMathOperator*{\argmin}{arg\,min} 

\newcommand{\mc}[1]{\mathcal{#1}}

\def\ddefloop#1{\ifx\ddefloop#1\else\ddef{#1}\expandafter\ddefloop\fi}
\def\ddef#1{\expandafter\def\csname 
bb#1\endcsname{\ensuremath{\mathbb{#1}}}}
\ddefloop ABCDEFGHIJKLMNOPQRSTUVWXYZ\ddefloop
\def\ddefloop#1{\ifx\ddefloop#1\else\ddef{#1}\expandafter\ddefloop\fi}
\def\ddef#1{\expandafter\def\csname 
b#1\endcsname{\ensuremath{\mathbf{#1}}}}
\ddefloop ABCDEFGHIJKLMNOPQRSTUVWXYZ\ddefloop
\def\ddef#1{\expandafter\def\csname 
c#1\endcsname{\ensuremath{\mathcal{#1}}}}
\ddefloop ABCDEFGHIJKLMNOPQRSTUVWXYZ\ddefloop
\def\ddef#1{\expandafter\def\csname 
h#1\endcsname{\ensuremath{\widehat{#1}}}}
\ddefloop ABCDEFGHIJKLMNOPQRSTUVWXYZ\ddefloop
\def\ddef#1{\expandafter\def\csname 
hc#1\endcsname{\ensuremath{\widehat{\mathcal{#1}}}}}
\ddefloop ABCDEFGHIJKLMNOPQRSTUVWXYZ\ddefloop
\def\ddef#1{\expandafter\def\csname 
t#1\endcsname{\ensuremath{\widetilde{#1}}}}
\ddefloop ABCDEFGHIJKLMNOPQRSTUVWXYZ\ddefloop
\def\ddef#1{\expandafter\def\csname 
tc#1\endcsname{\ensuremath{\widetilde{\mathcal{#1}}}}}
\ddefloop ABCDEFGHIJKLMNOPQRSTUVWXYZ\ddefloop





\usepackage{nicefrac}

\newsavebox\CBox

\newcommand{\expo}[1]{\exp\left( #1 \right)}
\usepackage{float}
\usepackage{subfigure}

\newtheorem{lemma}{Lemma}

\newtheorem{theorem}{Theorem}

\usepackage{tikz}
\usepackage{tkz-graph}
\tikzstyle{vertex}=[circle, draw, inner sep=0pt, minimum size=6pt]
\usetikzlibrary{arrows}
\newcommand{\vertex}{\node[vertex]}
\newcommand{\vertexp}{\node[vertex,fill=green]}
\newcommand{\vertexn}{\node[vertex,fill=cyan]}
\newcommand{\vertexb}{\node[vertex,fill=brown]}
\newcommand{\vertexinput}{\node[vertex,fill=lightgray]}

\usepackage[colorlinks,allcolors=blue]{hyperref}

\newcommand{\parens}[1]{\left( #1 \right)}
\newcommand{\pars}[1]{\left( #1 \right)}

\DeclareMathOperator{\arcsinh}{arcsinh}
\DeclareMathOperator{\diag}{diag}

\newcommand{\ytk}{y_{\textrm{TK}}}
\newcommand{\ftk}{f_{\textrm{TK}}}
\newcommand{\bUtk}{\bU_{\textrm{TK}}}
\newcommand{\bVtk}{\bV_{\textrm{TK}}}
\newcommand{\bWtk}{\bW_{\textrm{TK}}}
\newcommand{\ones}{\mathbf{1}}

\newcommand{\x}[0]{\mathbf{x}}
\newcommand{\bx}{\mathbf{x}}
\newcommand{\bu}{\mathbf{u}}
\newcommand{\bv}{\mathbf{v}}
\newcommand{\bw}{\mathbf{w}}
\newcommand{\by}{\mathbf{y}}
\renewcommand{\b}[1]{\bm{#1}}
\newcommand{\ie}{\emph{i.e.,}}
\newcommand{\eg}{\emph{e.g.,}}

\newcommand{\NTKphi}[0]{\phi}

\newcommand{\remove}[1]{{}}
\newcommand{\removed}[1]{{}} 
\newcommand{\bbeta}[0]{\bm{\beta}}

\newcommand{\lonesolution}[0]{\bbeta_{\ell_1}^*}
\newcommand{\ltwosolution}[0]{\bbeta_{\ell_2}^*}

\newcommand{\bMUV}{\textbf{M}_{\bU,\bV}}
\newcommand{\bMUVt}{\textbf{M}_{\bU(t),\bV(t)}}
\newcommand{\bMUVz}{\textbf{M}_{\bU(0),\bV(0)}}
\newcommand{\bbarMUV}{\bar{\textbf{M}}_{\bU,\bV}}
\newcommand{\bbarMUVt}{\bar{\textbf{M}}_{\bU(t),\bV(t)}}
\newcommand{\bbarMUVz}{\bar{\textbf{M}}_{\bU(0),\bV(0)}}

\title{\vspace{-2em}\rule{\linewidth}{1.5pt}\\ \textbf{Kernel and Rich Regimes in Overparametrized Models} \\\rule[8pt]{\linewidth}{1pt}\vspace{-0.7em}}
\author{\normalsize
\begin{minipage}{0.21\textwidth}
\centering
\textbf{Blake Woodworth}\\
\small Toyota Technological Institute at Chicago \\
\small \url{blake@ttic.edu}
\end{minipage}
\begin{minipage}{0.21\textwidth}
\centering
\textbf{Suriya Gunasekar}\\
\small Microsoft Research\\
\small \url{suriya@ttic.edu} \\
\small ${}$
\end{minipage}
\begin{minipage}{0.24\textwidth}
\centering
\textbf{Jason D. Lee}\\
\small Princton University\\
\small \url{jasonlee@princeton.edu}\\
\small ${}$
\end{minipage}
\begin{minipage}{0.27\textwidth}
\centering
\textbf{Edward Moroshko}\\
\small Technion\\
\small \url{edward.moroshko@gmail.com}\\
\small ${}$
\end{minipage}\\\\
\normalsize
\begin{minipage}{0.21\textwidth}
\centering
\textbf{Pedro Savarese}\\
\small Toyota Technological Institute at Chicago \\
\small \url{savarese@ttic.edu}
\end{minipage}
\begin{minipage}{0.22\textwidth}
\centering
\textbf{Itay Golan}\\
\small Technion\\
\small \url{itaygolan@gmail.com}\\
\small ${}$
\end{minipage}
\begin{minipage}{0.23\textwidth}
\centering
\textbf{Daniel Soudry}\\
Technion\\
\small \url{daniel.soudry@technion.ac.il}
\end{minipage}
\begin{minipage}{0.23\textwidth}
\centering
\textbf{Nathan Srebro}\\
\small Toyota Technological Institute at Chicago\\
\small \url{nati@ttic.edu}
\end{minipage}\vspace{-1em}
}
\date{}

\begin{document}
\maketitle
\begin{abstract}
A recent line of work studies overparametrized neural networks in the ``kernel regime,'' \ie~when  during training the network behaves as a kernelized linear predictor, and thus, training with gradient descent has the effect of finding the corresponding minimum RKHS norm solution.  This stands in contrast to other studies which demonstrate how gradient descent on overparametrized  networks can induce rich implicit biases that are not RKHS norms.  Building on an observation by \citet{chizat2018note}, we show how the \textbf{\textit{scale of the initialization}} controls the transition between the ``kernel'' (aka lazy) and ``rich'' (aka active) regimes and affects generalization properties in multilayer homogeneous models. We provide a complete and detailed analysis for a family of simple depth-$D$ linear networks that exhibit an interesting and meaningful transition between the kernel and rich regimes, and highlight an interesting role for the \emph{width}  of the models. We further demonstrate this transition empirically for matrix factorization  and multilayer non-linear networks.
\end{abstract}
 
\section{Introduction}\label{sec:introduction}
A string of recent papers study neural networks trained with gradient descent in the ``kernel regime.''  They observe that, in a certain regime, networks trained with gradient descent behave as kernel methods \citep{jacot2018neural,daniely2017sgd,yang2019scaling}.  This allows one to prove convergence to zero error solutions in overparametrized settings \citep{li2018learning,du2018global,du2018provably,allen2018convergence,zou2018stochastic,allen2018learning,arora2019fine,chizat2018note}. This also implies that  the learned function is the the minimum norm solution in the corresponding RKHS \citep{chizat2018note,arora2019fine,mei2019mean}, and more generally that models inherit the inductive bias and generalization behavior of the RKHS. This suggests that, in a certain regime, deep models can be equivalently replaced by kernel methods with the ``right'' kernel, and deep learning boils down to a kernel method with a fixed kernel determined by the architecture and initialization, and thus it can only learn problems learnable by appropriate kernel. 

This contrasts with other recent results that show how in deep models, including infinitely overparametrized networks, training with gradient descent induces an inductive bias that cannot be represented as an RKHS norm.  For example, analytic and/or empirical results suggest that gradient descent on deep linear convolutional networks implicitly biases toward minimizing the $L_p$ bridge penalty, for $p=2/\textrm{depth} \leq 1$, in the frequency domain \citep{gunasekar2018implicit}; on an infinite width single input ReLU network infinitesimal weight decay biases towards minimizing the second order total variations $\int \abs{f''(x)}dx$ of the learned function \citep{savarese2019infinite}, further, empirically it has been observed that this bias is implicitly induced by gradient descent without explicit weight decay \citep{savarese2019infinite,williams2019gradient}; and gradient descent on a overparametrized matrix factorization, which can be thought of as a two layer linear network, induces nuclear norm minimization of the learned matrix  and can ensure low rank matrix recovery \citep{gunasekar2017implicit,li2018algorithmic,arora2019implicit}.  None of these natural inductive biases 
are Hilbert norms, and therefore they {\em cannot} be captured by any kernel. 
This suggests that training deep models with gradient descent can behave very differently from kernel methods, and have richer inductive biases.


So, does the kernel approximation indeed capture the behavior of deep learning in a relevant and interesting regime, or does the success of deep learning come from escaping this regime to have \emph{richer} inductive biases that exploits the multilayer nature of neural networks?  In order to understand this, we must first understand when each of these regimes hold, and how the transition between the ``kernel regime'' and the ``rich regime'' happens.
    
Early investigations of the kernel regime emphasize the number of parameters (``width'') going to infinity as leading to this regime (see \eg~\citep{jacot2018neural,daniely2017sgd,yang2019scaling}).  However, \citet{chizat2018note} identified the \textit{scale} of the model at initialization as a quantity controlling entry into the kernel regime.  Their results suggest that for any number of parameters (any width), a homogeneous model can be approximated by a kernel when its {scale} at initialization goes to infinity (see the survey in Section \ref{sec:3}).  Considering models with increasing (or infinite) width, the relevant regime (kernel or rich) is determined by how the scaling at initialization behaves as the width goes to infinity.  In this paper we elaborate and expand of this view, carefully studying how the scale of initialization affects the model behaviour for $D$-homogeneous models. 

\paragraph{Our Contributions}
In Section \ref{sec:depth-2-model} we analyze in detail a simple 2-homogeneous model for which we can exactly characterize the implicit bias of training with gradient descent as a function of the
scale, $\alpha$, of initialization. We show: 
\begin{inparaenum}[(a)] 
\item the implicit bias transitions from the $\ell_2$ norm in the $\alpha \to \infty$ limit to $\ell_1$ in the $\alpha \to 0$ limit;
\item consequently, for certain problems \eg~high dimensional sparse regression, using a small initialization can be necessary for good generalization; and
\item we highlight how the ``shape'' of the initialization, \ie~the relative scale of the parameters, affects the $\alpha\to\infty$ bias but \emph{not} the $\alpha \to 0$ bias.
\end{inparaenum}
In Section \ref{sec:higher-order-models} we extend this analysis to analogous $D$-homogeneous models, showing that the order of homogeneity or the ``depth'' of the model hastens the transition into the $\ell_1$ regime. In Section \ref{sec:width-theory}, we analyze asymmetric matrix factorization models, and show that the ``width'' (\ie~the inner dimension of the factorization) has an interesting role to play in controlling the transition between kernel and rich behavior which is distinct from the scale. In Section \ref{sec:neural-network-experiments}, we show qualitatively similar behavior for deep ReLU networks.

\section{Setup and preliminaries}\label{sec:setup}

We consider models $f\!:\!\R^p\times\cX\rightarrow\R$ which map parameters $\bw \in \R^p$ and examples $\bx \in \cX$ to predictions $f(\bw,\bx) \in \R$. We denote the predictor implemented by the parameters $\bw$ as $F(\bw)\in\{f:\cX\to\R\}$, such that $F(\bw)(\bx) = f(\bw,\bx)$.  Much of our focus will be on models, such a linear  networks, which are linear in $\bx$ (but not in the parameters $\bw$), in which case $F(\bw)$ is a linear functional in the dual space  $\cX^*$ and can be represented as a vector $\b{\beta}_\bw$ with $f(\bw,\bx) = \inner{\b{\beta}_\bw}{\bx}$.  Such models are essentially alternate parametrizations of linear models, but as we shall see that the specific parametrization is crucial.

We focus on models that are $D$-positive homogeneous in the parameters $\bw$, for some integer $D\geq 1$, meaning that for any $c \in \R_+$, $F(c\cdot \bw) = c^D F(\bw)$.  We refer to such models simply as $D$-homogeneous.  Many interesting model classes have this property, including multi-layer ReLU networks with fully connected and convolutional layers, layered linear networks, and matrix factorization, where $D$ corresponds to the depth of the network.
 
We use $L(\bw) = \tilde{L}(F(\bw)) = \sum_{n=1}^N \prn*{f(\bw,\bx_n) - y_n}^2$ to denote the squared loss of the model over a training set $(\bx_1,y_1),\dots,(\bx_N,y_N)$.
We consider minimizing the loss $L(\bw)$ using gradient descent with infinitesimally small stepsize, \ie~gradient flow dynamics
\begin{equation}\label{eq:gradflow}
    \dot{\bw}(t) = - \nabla L(\bw(t)).
\end{equation}
We are particularly interested in the scale of initialization and capture it through a scalar parameter $\alpha \in \R_+$.  For scale $\alpha$, we will denote by $\bw_{\alpha,\bw_0}(t)$ the gradient flow path \eqref{eq:gradflow} with the initial condition $\bw_{\alpha,\bw_0}(0) = \alpha \bw_0$.
We consider underdetermined/overparameterized models (typically $N\ll p$), where there are many global minimizers of $L(\bw)$ with $L(\bw)=0$. Often, the dynamics of gradient flow converge to global minimizers of $L(\bw)$ which perfectly fits the data---this is often observed empirically in large neural network learning, though proving this is challenging and is not our focus. Rather, we want to understand \emph{which} of the many minimizers gradient flow converges to, \ie~$\bw_{\alpha,\bw_0}^\infty := \lim_{t\rightarrow\infty} \bw_{\alpha,\bw_0}(t)$ or, more importantly, the predictor $F(\bw_{\alpha,\bw_0}^\infty)$ reached by gradient flow depending on the scale $\alpha$.

\section{The Kernel Regime}\label{sec:3}
Locally, gradient descent/flow depends solely on the first-order approximation 
w.r.t.~$\bw$:
\begin{equation}
    f(\bw,\bx) = f(\bw(t),x) + \inner{\bw-\bw(t)}{\nabla_\bw f(\bw(t),\bx)} + O(\norm{\bw-\bw(t)}^2).
\end{equation}
That is, gradient flow operates on the model as if it were an affine model $f(\bw,\bx)\approx f_0(\bx) + \inner{\bw}{\NTKphi_{\bw(t)}(\bx)}$ with feature map $\NTKphi_{\bw(t)}(\bx)=\nabla_\bw f(\bw(t),\bx)$, corresponding to the {\em tangent kernel} $K_{\bw(t)}(\bx,\bx') = \inner{\nabla_{\bw} f(\bw(t),\bx)}{\nabla_{\bw} f(\bw(t),\bx')}$.  Of particular interest is the tangent kernel at initialization, $K_{\bw(0)}$ 
\citep{jacot2018neural,yang2019scaling}.

Previous work uses ``kernel regime'' to describe a situation in which the tangent
kernel $K_{\bw(t)}$ does not change over the course of optimization or, less formally, where it does not change
significantly, \ie~where $\forall t, K_{\bw(t)}\approx K_{\bw(0)}$. 
For $D$ homogeneous models with initialization $\bw_\alpha(0)=\alpha\bw_0$, $K_{\bw_\alpha(0)}=\alpha^{2(D-1)}K_0$,  where we denote $K_0=K_{\bw_0}$. 
Thus, in the kernel regime, training the model $f(\bw,\bx)$ is exactly equivalent to training
an affine model $f_{K}(\bw,\bx)=\alpha^D f(\bw(0),\bx) +
\inner{\phi_{\bw(0)}(\bx)}{\bw - \bw(0)}$ with kernelized gradient
descent/flow with the kernel $K_{\bw(0)}$ and a ``bias term''
of $f(\bw(0),\bx)$. 
Minimizing the loss of this affine model using gradient flow
reaches the solution nearest to the initialization where distance is measured
with respect to the RKHS norm determined by $K_0$. That is, 
$F(\bw_\alpha^\infty) = \argmin_{h} \nrm{h - F(\alpha \bw_0)}_{K_{0}}\ s.t.\ h(X) = \by$. 
To avoid handling this bias term, and in
particular its large scale as $\alpha$ increases, \citet{chizat2018note} suggest using
``unbiased'' initializations such that $F(\bw_0)=0$, so that the bias
term vanishes.  This is often achieved by replicating units with opposite signs at initialization (see, \eg~Section \ref{sec:depth-2-model}).  


But when does the kernel regime happen?  \citet{chizat2018note} showed that for any homogeneous\footnote{\citeauthor{chizat2018note} did not consider only homogeneous models, and instead of studying the scale of initialization they studied scaling the output of the model.  For homogeneous models, the dynamics obtained by scaling the initialization are equivalent to those obtained by scaling the output, and so here we focus on homogeneous models and on scaling the initialization.} model satisfying some technical conditions, the kernel regime is reached when $\alpha\rightarrow\infty$.  That is, as we increase the scale of initialization, the dynamics converge to the kernel gradient flow dynamics for the initial kernel $K_0$.
In Sections \ref{sec:depth-2-model} and \ref{sec:higher-order-models}, for our specific models, we prove this limit as a special case of our more general analysis for all $\alpha>0$, and we also demonstrate it empirically for matrix factorization and deep networks in Sections \ref{sec:width-theory} and \ref{sec:neural-network-experiments}. In Section \ref{sec:width-theory}, we additionally show how increasing the ``width'' of certain asymmetric matrix factorization models can also lead to the kernel regime, even when the initial scale $\alpha$ goes to zero at an appropriately slow rate.

In contrast to the kernel regime, and as we shall see in later sections, the
$\alpha\rightarrow 0$ small initialization limit often leads to very
different and rich inductive biases, \eg~inducing sparsity or low-rank structure \citep{gunasekar2017implicit,li2018algorithmic,gunasekar2018implicit},
that allow for generalization in  settings where kernel methods
would not.  
We will refer to the limit of this distinctly non-kernel behavior as
the ``rich limit.''
This regime is also called the ``active,'' ``adaptive,'' or
``feature-learning'' regime since the tangent kernel
$K_{\bw(t)}$ changes over the course of training, in a sense adapting
to the data.  
We argue that this rich limit is the one that truly allows
us to exploit the power of depth, and thus is the more relevant regime for
understanding the success of deep learning.


\section{Detailed Study of a Simple Depth-2 Model}\label{sec:depth-2-model}
Consider the class of  linear functions over $\cX=\R^d$, with squared parameterization as follows:
\begin{equation}\label{eq:2-homogeneous-model}
    f(\bw,\bx) = \sum\nolimits_{i=1}^d (\bw_{+,i}^2 - \bw_{-,i}^2) \bx_i = \inner{\bm{\beta}_\bw}{\bx}, \; \bw = [\begin{smallmatrix}\bw_+ \\ \bw_-\end{smallmatrix}]\in\R^{2d},\;\text{and}\;
\bm{\beta}_\bw = \bw_+^2-\bw_-^2
\end{equation}
where $\mathbf{z}^2$ for $\mathbf{z}\in\R^d$ denotes elementwise squaring.
The model can be thought of as a ``diagonal'' linear neural network (\ie~where the weight matrices have diagonal structure) with $2d$ units.  A ``standard'' diagonal linear network would have $d$ units, with each unit connected to just a single input unit with weight $\b{u}_i$ and the output with weight $\b{v}_i$, thus implementing the model $f((\b{u},\b{v}),\bx)=\sum_i \b{u}_i \b{v}_i \bx_i$ which is illustrated in Figure \ref{fig:biased} in Appendix \ref{app:diagonal-nn}. However, we also show in Appendix \ref{app:diagonal-nn} that if $\abs{\b{u}_i}=\abs{\b{v}_i}$ at initialization, then their magnitudes will remain equal and their signs will not flip throughout training. Therefore, we can equivalently parametrize the model in terms of a single shared input and output weight $\bw_i$ for each hidden unit, yielding the model $f(\bw,\bx)=\inner{\bw^2}{\bx}$.

The reason for using an ``unbiased model'' with two weights $\bw_+$ and $\bw_-$ (\ie.~$2d$ units, see illustration in Figure \ref{fig:unbiased} in Appendix \ref{app:diagonal-nn}) is two-fold. First, it ensures that the image of $F(\bw)$ is all (signed) linear functions, and thus the model is truly equivalent to standard linear regression.  Second, it allows for initialization at $F(\alpha \bw_0) = 0$ (by choosing $\bw_+(0)=\bw_-(0)$) without this being a saddle point from which gradient flow will never escape.\footnote{Our results can be generalized to ``biased" initialization (\ie~where $\bw_-\neq\bw_+$ at initialization), or the asymmetric parametrization $f((u,v),\bx)=\sum_i u_i v_i x_i$, however this complicates the presentation without adding much insight.}

The model \eqref{eq:2-homogeneous-model} is perhaps the simplest non-trivial $D$-homogeneous model for $D>1$, and we chose it for studying the role of scale of initialization because it already exhibits distinct and interesting kernel and rich behaviors, and we can also completely understand both the implicit regularization and the transition between regimes analytically. 

We study the
underdetermined $N \ll d$ case where there are many possible solutions
$X\bm{\beta}=\by$. We will use $\bbeta_{\alpha,\bw_0}^\infty$  to denote the solution reached by gradient flow when initialized at $\bw_+(0) = \bw_-(0) = \alpha \bw_0$. 
We will start by focusing on the special case where $\bw_0 = \ones$. 
In this case, the tangent kernel at initialization is
$K_{\bw(0)}(\bx,\bx') = 8\alpha^2\inner{\bx}{\bx'}$, which is just a scaling of the
standard inner product kernel, so $\norm{\bm{\beta}}_{K_{\bw(0)}} \propto
\norm{\bm{\beta}}_2$. Thus, in the kernel regime, $\bbeta_{\alpha,\mathbf{1}}^\infty$ will be the minimum $\ell_2$ norm solution, $\bm{\beta}_{\ell_2}^* \defeq
\argmin_{X\bm{\beta}=y} \norm{\bm{\beta}}_2$.  Following
\citet{chizat2018note} and the discussion in Section \ref{sec:3}, we
thus expect that $\lim_{\alpha\rightarrow\infty}
\bbeta_{\alpha,\mathbf{1}}^\infty = \bm{\beta}_{\ell_2}^*$.

In contrast, from Corollary $2$ in \citet{gunasekar2017implicit}, as $\alpha\to 0$, gradient flow leads instead to a rich limit of $\ell_1$ minimization, \ie~$\lim_{\alpha\rightarrow 0} \bbeta_{\alpha,\mathbf{1}}^\infty = \bbeta^*_{\ell_1} \defeq \argmin_{X\bm{\beta}=y} \norm{\bm{\beta}}_1$. 
Comparing this with the kernel regime, we already see two distinct behaviors and, in high dimensions, two very different inductive biases. In particular, the rich limit $\ell_1$ bias is {\em not} an RKHS norm for any choice of kernel. 
We have now described the asymptotic regimes where $\alpha \to 0$ or $\alpha \to \infty$, but can we characterize and understand the transition between the two regimes as $\alpha$ scales from very small to very large?  The following theorem does just that.

\begin{theorem}[Special case: $\bw_0 = \mathbf{1}$]\label{thm:gf-gives-min-q-ones}
For any $0 < \alpha < \infty$, if the gradient flow solution $\bbeta^\infty_{\alpha,\mathbf{1}}$ for the squared parameterization model in eq. \eqref{eq:2-homogeneous-model} satisfies $X \bbeta^\infty_{\alpha,\mathbf{1}} = \by$, then
\begin{equation}\label{eq:general-approach-opt}
\bbeta^\infty_{\alpha,\ones} = \argmin_{\bbeta} Q_{\alpha}\pars{\bbeta}\ \textrm{s.t.}\ X\bbeta = \by,
\end{equation}
where $Q_{\alpha}\pars{\bbeta} = \alpha^2 \sum_{i=1}^d q\pars{\frac{\b{\beta}_i}{\alpha^2}}$ and 
$q(z)= \int_{0}^{z} \arcsinh\pars{\frac{u}{2}}du
=  2 - \sqrt{4 + z^2} + z\arcsinh\parens{\frac{z}{2}}$.
\end{theorem}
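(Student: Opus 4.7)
The plan is to explicitly integrate the gradient flow ODEs---which decouple componentwise because the model is diagonal---and then match the closed-form expression for $\bbeta(t)$ against the KKT conditions for the convex problem (\ref{eq:general-approach-opt}). The key structural observation is that the time-integrated residual automatically lies in $\mathrm{range}(X^\top)$ and ends up playing the role of a Lagrange multiplier, while $\arcsinh$ arises naturally as the componentwise gradient of $q$.

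Writing $r(t) = X\bbeta_{\bw(t)} - \by$ and using $\nabla_{\bw_+} f(\bw,\bx) = 2\,\bw_+\odot\bx$, the gradient flow reads
\begin{equation*}
\dot{\bw}_+(t) = -4\,\bw_+(t)\odot X^\top r(t), \qquad \dot{\bw}_-(t) = +4\,\bw_-(t)\odot X^\top r(t),
\end{equation*}
which is linear in $\bw_\pm$ componentwise. Setting $\bm{\mu}(t) \defeq X^\top\!\int_0^t r(s)\,ds$ and using $\bw_\pm(0) = \alpha\ones$, componentwise integration gives $\bw_+(t) = \alpha\, e^{-4\bm{\mu}(t)}$ and $\bw_-(t) = \alpha\, e^{+4\bm{\mu}(t)}$, so subtracting the squares yields
\begin{equation*}
\bbeta_{\bw(t)} = -2\alpha^2\sinh\!\prn{8\bm{\mu}(t)}, \quad\text{equivalently}\quad \arcsinh\!\prn{\bbeta_{\bw(t)}/(2\alpha^2)} = -8\bm{\mu}(t).
\end{equation*}
By construction $\bm{\mu}(t)\in\mathrm{range}(X^\top)$ for every $t$, and since $\arcsinh$ is continuous and $\mathrm{range}(X^\top)$ is a closed subspace, this identity passes to the $t\to\infty$ limit under the hypothesis that $\bbeta^\infty_{\alpha,\ones}$ exists.

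To conclude, note that $Q_\alpha$ is separable, strictly convex (since $q'(z)=\arcsinh(z/2)$ is strictly increasing), and coercive, so on any nonempty affine constraint set it admits a unique minimizer characterized by the stationarity condition $\nabla Q_\alpha(\bbeta)\in\mathrm{range}(X^\top)$. Computing $\prn{\nabla Q_\alpha(\bbeta)}_i = q'(\beta_i/\alpha^2) = \arcsinh(\beta_i/(2\alpha^2))$, this stationarity condition is exactly the identity established above at $t=\infty$. Combined with the feasibility hypothesis $X\bbeta^\infty_{\alpha,\ones}=\by$, this identifies $\bbeta^\infty_{\alpha,\ones}$ as the unique minimizer in (\ref{eq:general-approach-opt}).

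I do not anticipate a serious obstacle. The substantive step is the closed-form integration of the decoupled exponential ODEs; once that is in hand, the interpretation of $-8\bm{\mu}(t)$ as a KKT multiplier is essentially automatic, given the convenient fact that the gradient of $Q_\alpha$ is again an $\arcsinh$. The only care needed is with the multiplicative constants (the $4$s, $8$s, and $2$s) and with using continuity of $\arcsinh$ together with closedness of $\mathrm{range}(X^\top)$ to push the stationarity identity through the $t\to\infty$ limit.
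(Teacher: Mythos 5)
Your proposal is correct and follows essentially the same route as the paper's proof: integrate the decoupled exponential ODEs to get $\bbeta(t)=c\,\alpha^2\sinh$ of a vector in $\mathrm{range}(X^\top)$, then identify $\arcsinh(\bbeta/(2\alpha^2))=\nabla Q_\alpha(\bbeta)$ with a KKT multiplier for the convex constrained problem (your constants differ from the paper's by an immaterial factor absorbed into the multiplier). Your handling of the $t\to\infty$ step (convergence of $\bm{\mu}(t)$ forced by convergence of $\bbeta(t)$, plus closedness of $\mathrm{range}(X^\top)$) and the explicit appeal to convexity/sufficiency of KKT are slightly more careful than the paper's presentation, but not a different argument.
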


\paragraph{A General Approach for Deriving the Implicit Bias}
Once given an expression for $Q_\alpha$, it is straightforward to analyze the dynamics of $\bbeta_{\alpha,\ones}$ and show that it is the minimum $Q_\alpha$ solution to $X\bbeta=\by$. However, a key contribution of this work is in developing a method for determining what the implicit bias is \emph{when we do not already have a good guess}. First, we analyze the gradient flow dynamics and show that if $X\bbeta_{\alpha,\ones}^\infty = \by$ then $\bbeta_{\alpha,\ones}^\infty = b_\alpha(X^\top \nu)$ for a certain function $b_\alpha$ and vector $\nu$. It is \textit{not} necessary to be able to calculate $\nu$, which would be very difficult, even for our simple examples. Next, we suppose that there is some function $Q_\alpha$ such that \eqref{eq:general-approach-opt} holds. The KKT optimality conditions for \eqref{eq:general-approach-opt} are $X\bbeta^* = \by$ and $\exists \nu$ s.t.~$\nabla Q_\alpha\left(\bbeta^* \right) = X^\top \nu$. Therefore, if indeed $\bbeta_{\alpha,\ones}^\infty = \bbeta^*$ and $X\bbeta_{\alpha,\ones}^\infty = \by$ then
$\nabla Q_\alpha\left(\bbeta_{\alpha,\ones}^\infty \right) = \nabla Q_\alpha\left( b_\alpha(X^\top \nu) \right) = X^\top \nu$. We solve the differential equation $\nabla Q_\alpha = b_\alpha^{-1}$ to yield $Q_\alpha$. Theorem \ref{thm:gf-gives-min-q} in Appendix \ref{app:grad-flow-minimizes-q} is proven using this method. 

\begin{figure}[t!]
\subfigure[\small Generalization]{\label{fig:error-vs-alpha-n-klogd}
        \includegraphics[width=0.33\textwidth]{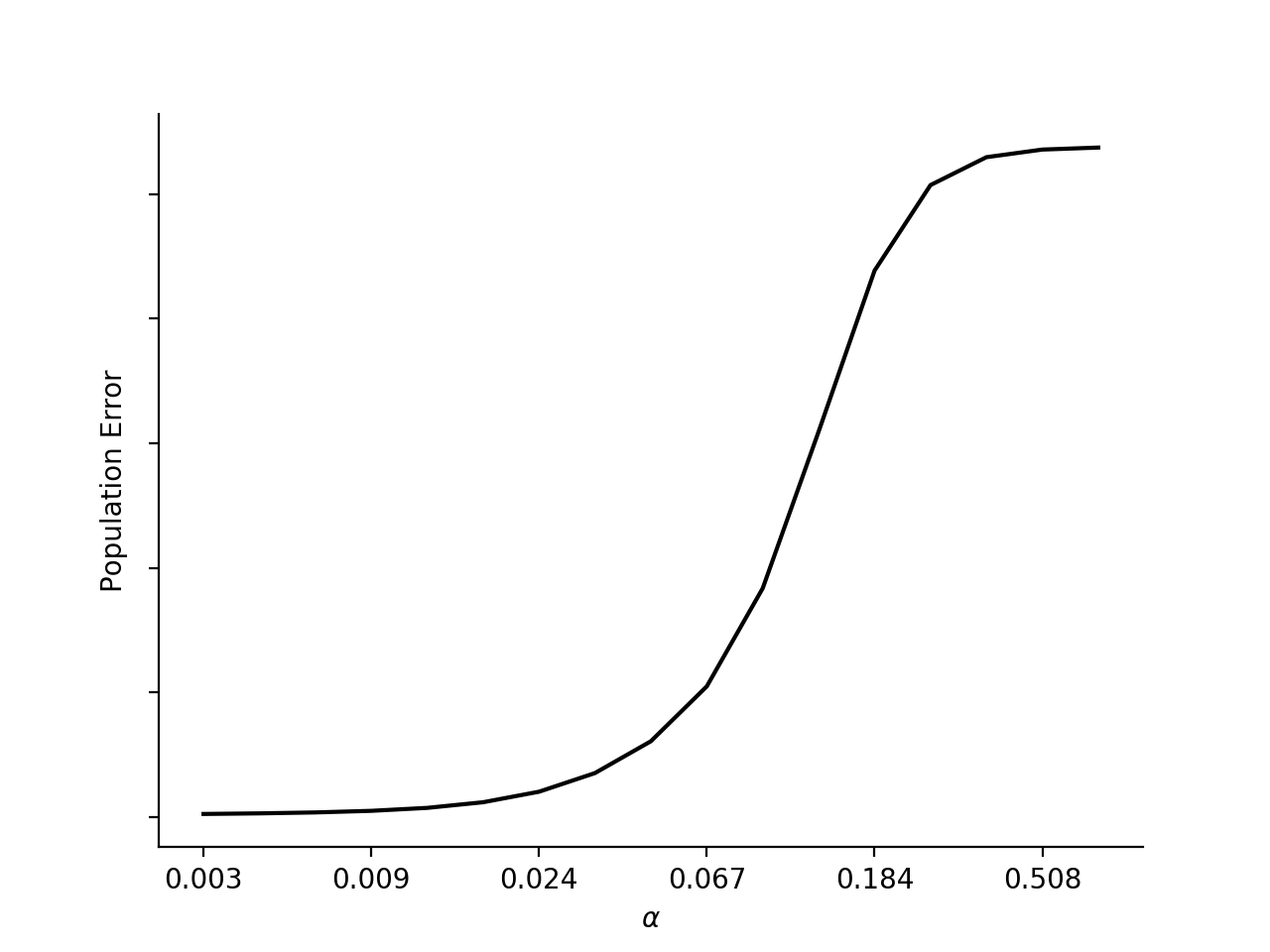}}%
\subfigure[\small Norms of solution]{\label{fig:L1-L2-distance-vs-alpha}
        \includegraphics[width=0.33\textwidth]{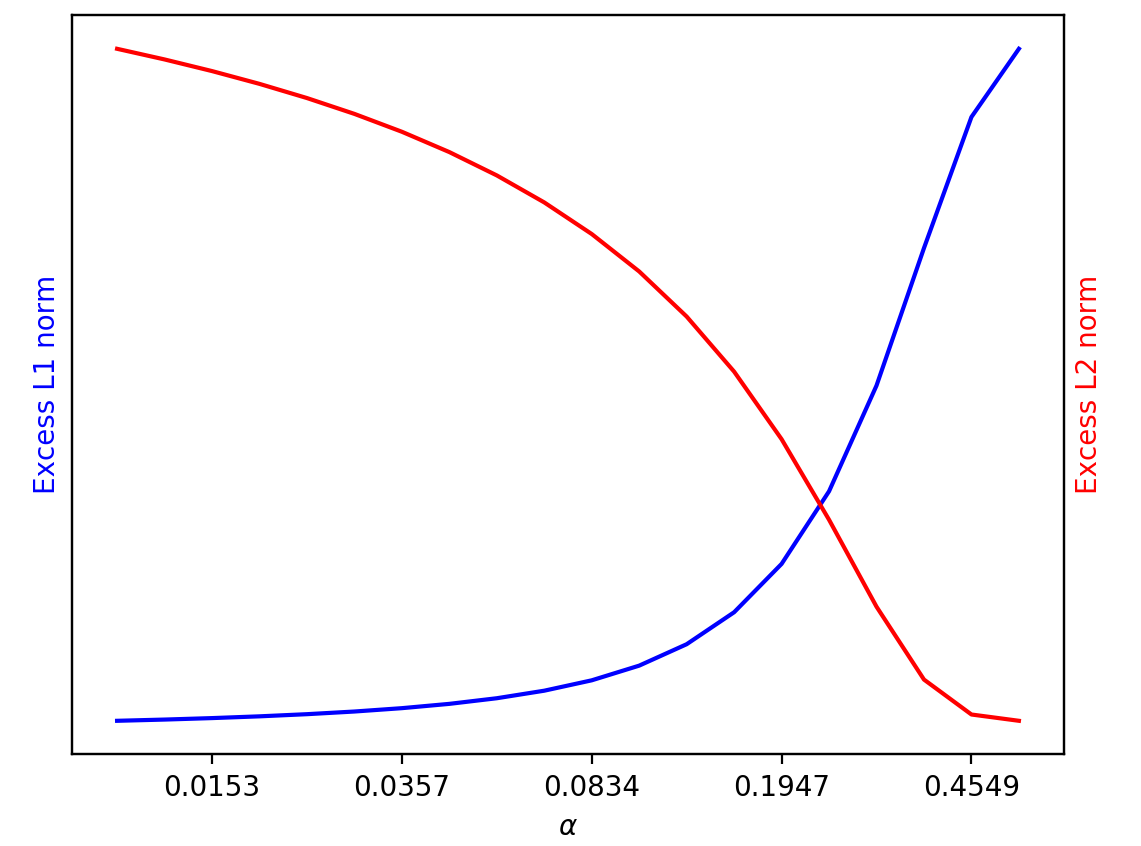}}
\subfigure[\small Sample complexity]{\label{fig:alpha-needed-for-given-N}
        \includegraphics[width=0.3\textwidth]{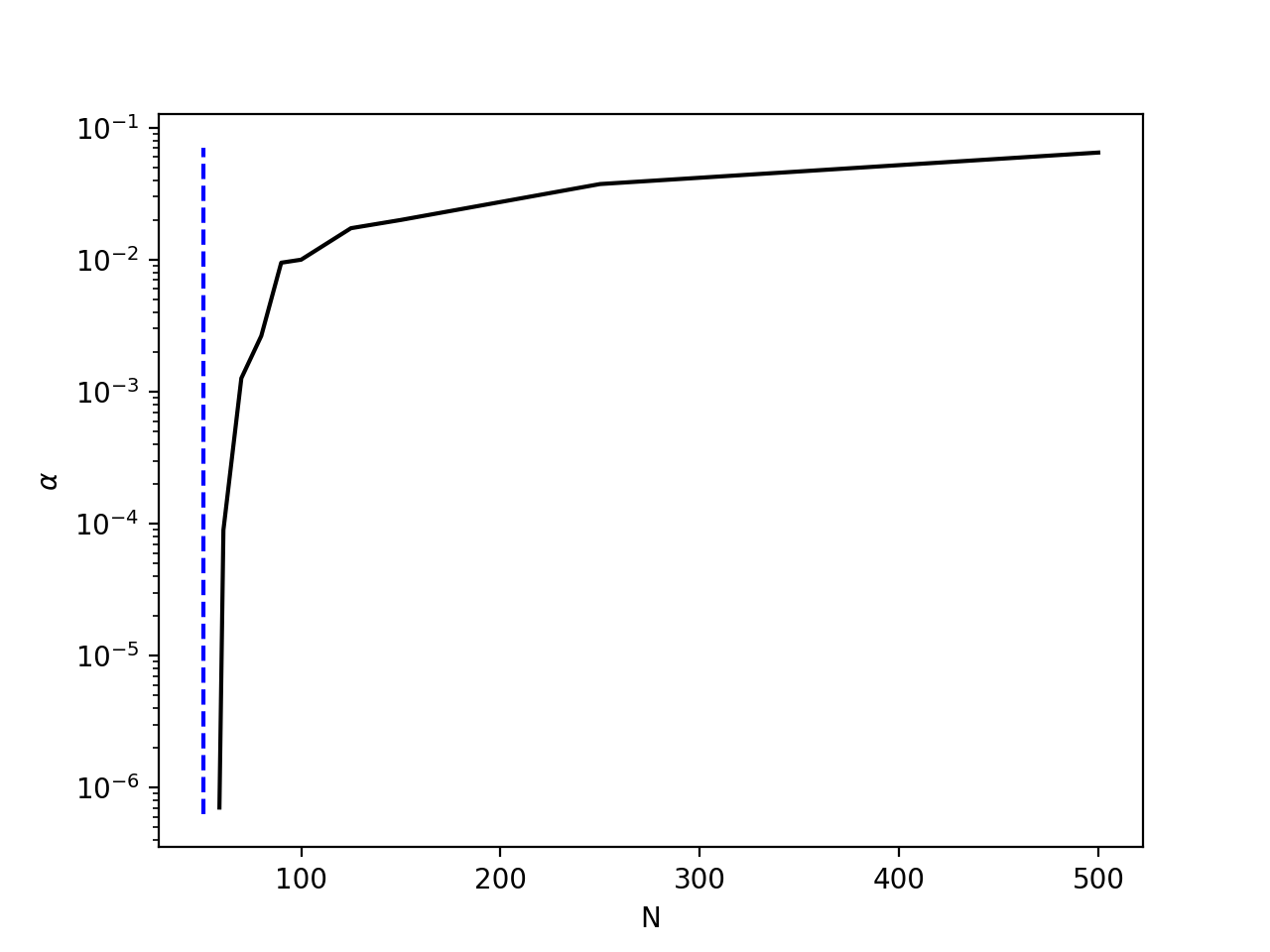}}
{\caption{\small In (a) the population error of the gradient flow solution vs.~$\alpha$ in the sparse regression problem described in Section \ref{sec:depth-2-model}. 
In (b), we plot $\norm{\bbeta_{\alpha,\ones}^\infty}_1 - \norm{\lonesolution}_1$ in blue and $\norm{\bbeta_{\alpha,\ones}^\infty}_2 - \norm{\ltwosolution}_2$ in red vs.~$\alpha$.
In (c), the largest $\alpha$ such that $\bbeta_{\alpha,\ones}^\infty$ achieves population error at most $0.025$ is shown. The dashed line indicates the number of samples needed by $\lonesolution$.\label{fig:depth-2-all}}}
\vspace{-5mm}
\end{figure}
In light of Theorem \ref{thm:gf-gives-min-q}, the function $Q_\alpha$ (referred to as the ``hypentropy'' function in  \citet{ghai2019exponentiated}) can be understood as an implicit regularizer which biases the gradient flow solution towards one particular zero-error solution out of the many possibilities. As $\alpha$ ranges from $0$ to $\infty$, the $Q_\alpha$ regularizer interpolates between the $\ell_1$ and $\ell_2$ norms, as illustrated in Figure \ref{fig:qD-regularizer} (the line labelled $D=2$ depicts the coordinate function $q$).  As $\alpha\rightarrow\infty$ we have that $\b{\beta}_i/\alpha^2\rightarrow 0$, and so the behaviour of $Q_\alpha(\b{\beta})$ is governed by $q(z) = \Theta(z^2)$ around $z=0$, thus $Q_\alpha(\b{\beta}) \propto \sum_i\b{\beta}_i^2$.  On the other hand when $\alpha\rightarrow 0$, $\abs{\b{\beta}_i/\alpha^2}\rightarrow \infty$ is determined by $q(z)=\Theta(\abs{z} \log \abs{z})$ as $\abs{z}\rightarrow \infty$.  In this regime $\frac{1}{\log(1/\alpha^2)}Q_\alpha(\b{\beta}) \propto \frac{1}{\log(1/\alpha^2)}\sum_i \abs{\b{\beta}_i} \log \abs{\frac{\b{\beta}_i}{\alpha^2}} = \norm{\b{\beta}}_1 + O(1/\log(1/\alpha^2))$.
The following Theorem, proven in Appendix \ref{app:q-vs-L1-and-L2}, quantifies the scale of $\alpha$ which guarantees that $\bbeta_{\alpha,\ones}^\infty$ approximates the minimum $\ell_1$ or $\ell_2$ norm solution:
\begin{restatable}{theorem}{alphaforlonetwosolution} \label{thm:alpha-for-l1-l2-solution}
For any $0 < \epsilon < d$, under the setting of Theorem~\ref{thm:gf-gives-min-q} with $\bw_0 = \ones$, 
\begin{gather*}
\alpha \leq \min\crl*{\pars{2(1+\epsilon)\norm{\lonesolution}_1}^{-\frac{2+\epsilon}{2\epsilon}}, \expo{-d/(\epsilon\norm{\lonesolution}_1)}} 
\implies
\|\bbeta_{\alpha,\ones}^\infty\|_1 \leq \parens{1+\epsilon}\norm{\lonesolution}_1 \\
\alpha \geq \sqrt{2(1+\epsilon)(1+2/\epsilon)\norm{\ltwosolution}_2}
\implies
\|\bbeta_{\alpha,\ones}^\infty\|_2^2 \leq \parens{1+\epsilon}\norm{\ltwosolution}_2^2
\end{gather*}
\end{restatable}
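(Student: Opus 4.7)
The plan is to use Theorem \ref{thm:gf-gives-min-q-ones}, which characterizes $\bbeta_{\alpha,\ones}^\infty$ as the minimizer of $Q_\alpha(\bbeta) = \alpha^2 \sum_{i=1}^d q(\beta_i/\alpha^2)$ over the affine subspace $\{\bbeta : X\bbeta = \by\}$; then both $\lonesolution$ and $\ltwosolution$ are feasible, so $Q_\alpha(\bbeta_{\alpha,\ones}^\infty) \leq Q_\alpha(\lonesolution)$ and $Q_\alpha(\bbeta_{\alpha,\ones}^\infty) \leq Q_\alpha(\ltwosolution)$. From here I would sandwich $Q_\alpha$ between a $(4\alpha^2)^{-1}\|\cdot\|_2^2$ surrogate in the large-$\alpha$ regime and a $\log(1/\alpha^2)\|\cdot\|_1$ surrogate in the small-$\alpha$ regime, calibrating the hypotheses on $\alpha$ so that the sandwich error is at most $\epsilon$ times the main term.

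For the \textbf{large-$\alpha$ ($\ell_2$) case}, the key fact is $q''(z) = 1/\sqrt{4+z^2} \leq 1/2$, so integrating twice from $q(0) = q'(0) = 0$ gives the global bound $q(z) \leq z^2/4$, hence $Q_\alpha(\ltwosolution) \leq \|\ltwosolution\|_2^2/(4\alpha^2)$. The matching lower bound $q(z) \geq (1 - O(z^2))\,z^2/4$ follows from $q''(z) \geq 1/2 - O(z^2)$ for bounded $|z|$, and it is activated via a one-step bootstrap: the coordinatewise inequality $Q_\alpha(\bbeta) \geq \alpha^2 q(|\beta_i|/\alpha^2)$ together with $q(u) \gtrsim u^2$ for bounded $u$ gives $\|\bbeta_{\alpha,\ones}^\infty\|_\infty/\alpha^2 \lesssim \|\ltwosolution\|_2/\alpha^2$, and plugging this into the refined lower bound yields $\|\bbeta_{\alpha,\ones}^\infty\|_2^2 \leq \|\ltwosolution\|_2^2\,(1 + O(\|\ltwosolution\|_2^2/\alpha^4))$. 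The stated condition $\alpha \geq \sqrt{2(1+\epsilon)(1+2/\epsilon)\|\ltwosolution\|_2}$ is exactly what makes the correction at most $\epsilon$.

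For the \textbf{small-$\alpha$ ($\ell_1$) case}, the relevant asymptotic is $q(z) = z\arcsinh(z/2) - \sqrt{4+z^2} + 2 \sim |z|\log|z|$ as $|z| \to \infty$. Using the elementary inequalities $\log z \leq \arcsinh(z/2) \leq \log(z+1)$ for $z \geq 2$, I would derive matching scalar bounds of the form
\[
|\beta|\log(|\beta|/\alpha^2) - |\beta| \;\leq\; \alpha^2 q(\beta/\alpha^2) \;\leq\; |\beta|\log(|\beta|/\alpha^2) + O(\alpha^2)
\]
whenever $|\beta| \gtrsim \alpha^2$, with coordinates where $|\beta| \lesssim \alpha^2$ swept into an aggregate $O(d\alpha^2)$ error. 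Summing and applying $\sum_i |\beta_i^*|\log|\beta_i^*| \leq \|\lonesolution\|_1 \log\|\lonesolution\|_\infty$ on the upper side and $\sum_i |\beta_i|\log|\beta_i| \geq -d/e$ (from $x\log x \geq -1/e$) on the lower side, the inequality $Q_\alpha(\bbeta_{\alpha,\ones}^\infty) \leq Q_\alpha(\lonesolution)$ reduces to
\[
\|\bbeta_{\alpha,\ones}^\infty\|_1 \log(1/\alpha^2) \;\leq\; \|\lonesolution\|_1 \log(1/\alpha^2) + \|\lonesolution\|_1 \log\|\lonesolution\|_\infty + O(d) + O(d\alpha^2 \log(1/\alpha^2)).
\]
Dividing through by $\log(1/\alpha^2)$, the first hypothesis $\alpha \leq (2(1+\epsilon)\|\lonesolution\|_1)^{-(2+\epsilon)/(2\epsilon)}$ forces $\log\|\lonesolution\|_\infty / \log(1/\alpha^2) \leq \epsilon/2$ (via $\|\lonesolution\|_\infty \leq \|\lonesolution\|_1$), while the second $\alpha \leq \expo{-d/(\epsilon\|\lonesolution\|_1)}$ forces $d/\log(1/\alpha^2) \leq \epsilon\|\lonesolution\|_1$; jointly they close the bound at $(1+\epsilon)\|\lonesolution\|_1$.

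The hard part will be the $\ell_1$ direction, because the sandwich produces an inequality that is implicit in $\|\bbeta_{\alpha,\ones}^\infty\|_1$ (through the $|\beta|\log|\beta|$ terms), and matching the precise exponent $(2+\epsilon)/(2\epsilon)$ requires either a self-referential bootstrap---first a crude polynomial-in-$1/\alpha$ bound on $\|\bbeta_{\alpha,\ones}^\infty\|_1$, then refinement via the clean estimate---or careful accounting in which $\|\lonesolution\|_\infty$ is bounded by $\|\lonesolution\|_1$ at exactly the right step so that no $\log\|\lonesolution\|_\infty$ leftover remains. The large-$\alpha$ direction is comparatively easier because everything reduces to the quadratic Taylor expansion of $q$ at the origin together with a single bootstrap on $\|\bbeta_{\alpha,\ones}^\infty\|_\infty$.
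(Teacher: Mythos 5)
Your proposal is correct and follows essentially the same route as the paper: characterize $\bbeta_{\alpha,\ones}^\infty$ as the $Q_\alpha$-minimizer over $\{X\bbeta=\by\}$, compare $Q_\alpha(\bbeta_{\alpha,\ones}^\infty)\leq Q_\alpha(\lonesolution)$ (resp.\ $Q_\alpha(\ltwosolution)$), and sandwich $Q_\alpha$ between $\log(1/\alpha^2)\,\norm{\cdot}_1$ for small $\alpha$ and $\tfrac{1}{4\alpha^2}\norm{\cdot}_2^2$ for large $\alpha$, exactly the content of the paper's Lemmas~\ref{lem:small-alpha-l1-approximation} and~\ref{lem:large-alpha-l2-approximation}. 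The only real difference is how the a~priori control on the unknown norm of $\bbeta_{\alpha,\ones}^\infty$ is obtained---the paper localizes to the ball of radius $(1+2\epsilon)$ times the optimal norm via convexity of $Q_\alpha$, whereas you use a coordinatewise bootstrap (for $\ell_2$) and a uniform per-coordinate floor $x\log x - x\geq -1$ (for $\ell_1$)---which is a minor variation, with the remaining work being the constant bookkeeping you already flag.
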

Looking carefully at Theorem \ref{thm:alpha-for-l1-l2-solution}, we notice a certain asymmetry between reaching the kernel regime versus the rich limit: polynomially large $\alpha$ suffices to approximate $\ltwosolution$ to a very high degree of accuracy, but \emph{exponentially} small $\alpha$ is needed to approximate $\lonesolution$.\footnote{Theorem \ref{thm:alpha-for-l1-l2-solution} only shows that exponentially small $\alpha$ is \emph{sufficient} for approximating $\lonesolution$ and is not a proof that it is necessary. However, Lemma \ref{lem:alpha-small-enough} in Appendix \ref{app:q-vs-L1-and-L2} proves that $\alpha \leq d^{-\Omega(1/\epsilon)}$ is indeed \emph{necessary} for $Q_\alpha$ to be proportional to the $\ell_1$ norm for every unit vector simultaneously. This indicates that $\alpha$ must be exponentially small to approximate $\lonesolution$ for certain problems.}
This suggests an explanation for the difficulty of empirically demonstrating rich limit behavior in matrix factorization problems \citep{gunasekar2017implicit,arora2019implicit}: since the initialization may need to be exceedingly small, conducting experiments in the truly rich limit may be infeasible for computational reasons.

\paragraph{Generalization} 
In order to understand the effect of the initialization on generalization,
consider a simple sparse regression problem, where $\bx_1,\dots,\bx_N \sim \mc{N}(0,I)$ and $y_n \sim \mc{N}(\inner{\bm{\beta}^*}{\bx_n}, 0.01)$ where $\bm{\beta}^*$ is $r^*$-sparse with non-zero entries equal to $1/\sqrt{r^*}$. When $N \leq d$, gradient flow will generally reach a zero training error solution, however, not all of these solutions will generalize the same.  In the rich limit, $N=\Omega(r^* \log d)$ samples suffices for $\lonesolution$ to generalize well. On the other hand, even though we can fit the training data perfectly well, the kernel regime solution $\ltwosolution$ would not generalize at all with this sample size ($N=\Omega(d)$ samples would be needed), see Figure \ref{fig:alpha-needed-for-given-N}. Thus, in this case good generalization requires using very small initialization, and generalization will tend to improve as $\alpha$ decreases. From an optimization perspective this is unfortunate because $\bw = 0$ is a saddle point, so taking $\alpha \to 0$ will likely increase the time needed to escape the vicinity of zero.

Thus, there seems to be a tension between generalization and optimization: a smaller $\alpha$ might improve generalization, but it makes optimization trickier.  This suggests that one should operate just on the edge of the rich limit, using the largest $\alpha$ that still allows for generalization. This is borne out by our experiments with deep, non-linear neural networks (see Section \ref{sec:neural-network-experiments}), where standard initializations correspond to being right on the edge of entering the kernel regime, where we expect models to both generalize well and avoid serious optimization difficulties. Given the extensive efforts put into designing good initialization schemes, this gives further credence to the idea that models will perform best when trained in the intermediate regime between rich and kernel behavior.


This tension can also be seen through a tradeoff between the sample size and the largest $\alpha$ we can use and still generalize.  In Figure \ref{fig:alpha-needed-for-given-N}, for each sample size $N$, we plot the largest $\alpha$ for which the gradient flow solution $\bbeta_{\alpha,\ones}^\infty$ achieves population risk below some threshold. As $N$ approaches the minimum number of samples for which $\lonesolution$ generalizes (the vertical dashed line), $\alpha$ must become extremely small. However, generalization is much easier if the number of samples is only slightly larger, and much larger $\alpha$ suffices.


\paragraph{The ``Shape'' of $\bw_0$ and the Implicit Bias}
So far, we have discussed the implicit bias in the special case $\bw_0 = \mathbf{1}$, but we can also characterize it for non-uniform initialization $\bw_0$:
\setcounter{theorem}{0}
\begin{restatable}[General case]{theorem}{gradflowQminimizer}\label{thm:gf-gives-min-q}
For any $0<\alpha<\infty$ and $\bw_0$ with no zero entries,  if the gradient flow solution $\bbeta^\infty_{\alpha,\bw_0}$  satisfies $X \bbeta^\infty_{\alpha,\bw_0} = \by$, then
\begin{equation}
\bbeta^\infty_{\alpha,\bw_0} = \argmin_{\b{\beta}} Q_{\alpha,\bw_0}\pars{\b{\beta}}\ \textrm{s.t.}\ X\b{\beta} = \by,
\end{equation}
where $Q_{\alpha,\bw_0}\pars{\bbeta} = \sum_{i=1}^d \alpha^2 \bw_{0,i}^2 q\big(\frac{\b{\beta}_i}{\alpha^2\bw_{0,i}^2}\big)$ and 
$q(z) =  2 - \sqrt{4 + z^2} + z\arcsinh\parens{\frac{z}{2}}$.
\end{restatable}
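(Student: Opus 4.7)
I would follow the KKT-based recipe the paper sketches for the uniform case: first solve the gradient flow exactly and read off an implicit dual variable, then match this against the KKT conditions of the candidate variational problem, and finally invoke strict convexity to upgrade necessity to sufficiency and uniqueness.

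\emph{Step 1 (closed-form dynamics).} Let $r(t) := X\bbeta_{\bw(t)} - \by$. The gradient flow for $L(\bw)=\|X\bbeta_\bw-\by\|^2$ decouples coordinatewise into $\dot{\bw}_{+,i} = -4\, \bw_{+,i}\, (X^\top r)_i$ and $\dot{\bw}_{-,i} = +4\, \bw_{-,i}\, (X^\top r)_i$, so with $\bw_+(0)=\bw_-(0)=\alpha\bw_0$ they integrate exactly to $\bw_{\pm,i}(t) = \alpha \bw_{0,i}\, \exp(\pm (X^\top \tilde{\nu}(t))_i)$, where $\tilde{\nu}(t) := -4\int_0^t r(s)\, ds$. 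Consequently,
\begin{equation*}
\bbeta_{\bw(t),i} = \bw_{+,i}(t)^2 - \bw_{-,i}(t)^2 = 2\alpha^2 \bw_{0,i}^2\, \sinh\!\bigl(2(X^\top \tilde{\nu}(t))_i\bigr).
\end{equation*}

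\emph{Step 2 (limit and dual representation).} The hypothesis $X\bbeta^\infty=\by$ means gradient flow reaches an interpolator, so $\bw_\pm(t)$ converge to some finite limits. Because $\bw_{0,i}\neq 0$ for every $i$, the sign of each $\bw_{\pm,i}(t)$ is preserved for all $t$, and the explicit formula from Step~1 can be inverted to give $(X^\top \tilde{\nu}(t))_i = \log\bigl(\bw_{+,i}(t)/(\alpha \bw_{0,i})\bigr)$, which therefore converges coordinatewise as $t\to\infty$. The column space of $X^\top$ is closed, so the limit lies in it; pick $\nu\in\R^N$ with $X^\top\nu$ equal to this limit. Passing to the limit in Step~1 yields the dual representation $\bbeta^\infty_i = 2\alpha^2 \bw_{0,i}^2\, \sinh\bigl(2(X^\top \nu)_i\bigr)$ for every $i$.

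\emph{Steps 3 and 4 (KKT matching and convexity).} Differentiating $Q_{\alpha,\bw_0}(\bbeta) = \sum_i \alpha^2 \bw_{0,i}^2\, q(\bbeta_i/(\alpha^2 \bw_{0,i}^2))$ and using $q'(z) = \arcsinh(z/2)$ gives $\partial Q_{\alpha,\bw_0}(\bbeta)/\partial \bbeta_i = \arcsinh\bigl(\bbeta_i/(2\alpha^2 \bw_{0,i}^2)\bigr)$. Substituting $\bbeta^\infty$ from Step~2 and using $\arcsinh\circ\sinh=\mathrm{id}$ yields $\nabla Q_{\alpha,\bw_0}(\bbeta^\infty) = X^\top(2\nu)$, which together with the assumption $X\bbeta^\infty=\by$ are exactly the KKT conditions for $\min_{\bbeta} Q_{\alpha,\bw_0}(\bbeta)$ subject to $X\bbeta=\by$. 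Since $q''(z) = 1/\sqrt{4+z^2} > 0$, the regularizer $Q_{\alpha,\bw_0}$ is strictly convex and the constraint is affine, so these KKT conditions are sufficient and identify $\bbeta^\infty$ as the unique minimizer. The only delicate point in the whole argument is Step~2: the raw integrated residual $\tilde{\nu}(t)$ need not converge in $\R^N$, but what the KKT matching actually needs is convergence of its image $X^\top\tilde{\nu}(t)$, which the coordinatewise logarithmic identity above provides. This is precisely why the hypothesis $\bw_{0,i}\neq 0$ for every $i$ is essential---otherwise that coordinate of $\bw_\pm$ is pinned at zero for all time, forcing $\bbeta^\infty_i=0$ and breaking the $\sinh$ parametrization.
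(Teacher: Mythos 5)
Your proposal is correct and follows essentially the same route as the paper's own proof: integrate the decoupled gradient-flow dynamics to get the $\sinh$ representation $\bbeta_i(t)=2\alpha^2\bw_{0,i}^2\sinh\bigl(c\,(X^\top\!\int_0^t r)_i\bigr)$, then verify the KKT conditions of the $Q_{\alpha,\bw_0}$-minimization problem using $q'(z)=\arcsinh(z/2)$ and convexity (the unimportant constant factors in the dynamics are simply absorbed into the dual variable). If anything, your Step 2 is slightly more careful than the paper, which writes $\nu=-4\int_0^\infty r(s)\,ds$ without comment, whereas you only require convergence of $X^\top\tilde{\nu}(t)$ and obtain it from the invertibility of the $\sinh$ parametrization together with closedness of the row space of $X$.
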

\setcounter{theorem}{2}
Consider the asymptotic behavior of $Q_{\alpha,\bw_0}$. For small $z$, $q(z) = \frac{z^2}{4} + O(z^4)$ so for $\alpha\to\infty$
\begin{equation}
Q_{\alpha,\bw_0}(\bbeta) 
= \sum_{i=1}^d \alpha^2 \bw_{0,i}^2\, q\Big(\frac{\bbeta_i}{\alpha^2\bw_{0,i}^2}\Big)
= \sum_{i=1}^d \frac{\bbeta_i^2}{4\alpha^2\bw_{0,i}^2} + O\prn*{\alpha^{-6}}
\end{equation}
In other words, in the $\alpha \to \infty$ limit, $Q_{\alpha,\bw_0}(\bbeta)$ is proportional to a quadratic norm weighted by $\diag\prn*{1/\bw_{0}^2}$.
On the other hand, for large $\abs{z}$, $q(z) = \abs{z}\log\abs{z} + O(1/\abs{z})$ so as $\alpha \to 0$
\begin{equation}
\frac{1}{\log(1/\alpha^2)}Q_{\alpha,\bw_0}(\bbeta) 
= \frac{1}{\log(1/\alpha^2)}\sum_{i=1}^d \alpha^2 \bw_{0,i}^2\, q\Big(\frac{\bbeta_i}{\alpha^2\bw_{0,i}^2}\Big)
= \sum_{i=1}^d \abs*{\bbeta_i} + O\prn*{1/\log(1/\alpha^2)}
\end{equation}
So, in the $\alpha \to 0$ limit, $Q_{\alpha,\bw_0}(\bbeta)$ is proportional to $\nrm*{\bbeta}_1$ regardless of the shape of the initialization $\bw_0$! The specifics of the initialization, $\bw_0$, therefore affect the implicit bias in the kernel regime (and in the intermediate regime) but \emph{not} in the rich limit. 

For wide neural networks with i.i.d.~initialized units, the analogue of the ``shape'' is the distribution used to initialize each unit, including the relative scale of the input weights, output weights, and biases.  Indeed, as was explored by \citet{williams2019gradient} and as we elaborate in Section \ref{sec:neural-network-experiments},
changing the unit initialization distribution changes the tangent kernel at initialization and hence the kernel regime behavior. However, 
we also demonstrate empirically that changing the initialization distribution (``shape'') does {\em not} change the rich regime behavior. These observations match the behavior of $Q_{\alpha,\bw_0}$ analyzed above.


\paragraph{Explicit Regularization} From the geometry of gradient descent, it is tempting to imagine that its implicit bias would be minimizing the Euclidean norm from initialization:
\begin{gather}\label{eq:betaR}
\bbeta^R_{\alpha,\bw_0} \defeq F\Big(\argmin_\bw \norm{\bw - \alpha \bw_0}_2^2\ \textrm{s.t.}\ L(\bw) = 0\Big) = \argmin_{\bbeta} R_{\alpha,\bw_0}(\bbeta)\ \textrm{s.t.}\ X\bbeta=y \\
\textrm{where}\quad \smash{R_{\alpha,\bw_0}(\bbeta) = \min_\bw \norm{\bw - \alpha \bw_0}_2^2\; \textrm{s.t.}\; F(\bw)=\bbeta.}
\end{gather}
\begin{wrapfigure}{r}{0.26\textwidth}
\centering
\vspace{-5mm}
\includegraphics[width=\linewidth]{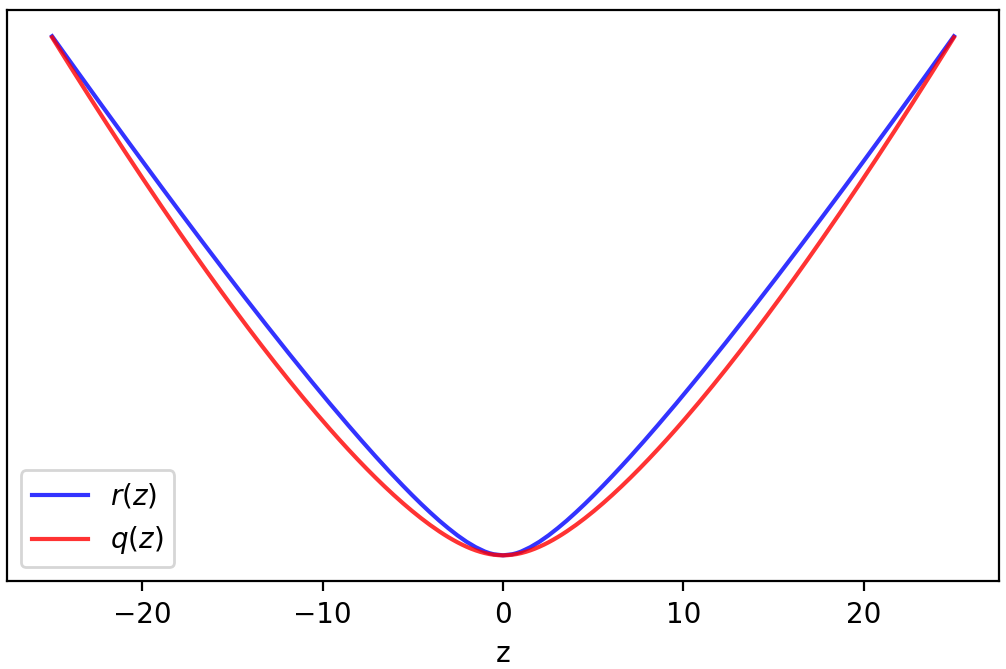}
\vspace{-10mm}
\caption{$q(z)$ and $r(z)$.}
\label{fig:q-r-plot}

\end{wrapfigure}
It is certainly the case for standard linear regression $f(\bw,\bx)=\inner{\bw}{\bx}$, where from  standard analysis, it can be shown that $\bbeta_{\alpha,\bw_0}^\infty=\bbeta^R_{\alpha,\bw_0}$ so the bias is captured by $R_{\alpha,\bw_0}$. But does this characterization fully explain the implicit bias for our 2-homogeneous model? Perhaps the behavior in terms of $Q_{\alpha,\bw_0}$ can also be explained by $R_{\alpha,\bw_0}$?  
Focusing on the special case $\bw_0 = \ones$, it is easy to verify that the limiting behavior when $\alpha \to 0$ and $\alpha\to\infty$ of the two approaches match.
We can also calculate $R_{\alpha,\ones}(\bbeta)$, which decomposes over the coordinates, as: 
$
R_{\alpha,\ones}(\bbeta) 
= \sum_i r(\bbeta_i/\alpha^2) 
$
where $r(z)$ is the unique real root of $p_z(u) = u^4 - 6u^3 + (12-2z^2)u^2 - (8+10z^2)u + z^2 + z^4$.

This function $r(z)$ is shown next to $q(z)$ in Figure \ref{fig:q-r-plot}. They are similar but not the same since $r(z)$ is algebraic (even radical), while $q(z)$ is transcendental.  Thus, $Q_{\alpha,\ones}(\bbeta) \neq R_{\alpha,\ones}(\bbeta)$ and they are not simple rescalings of each other either. Furthermore, while $\alpha$ needs to be exponentially small in order for $Q_{\alpha,\ones}$ to approximate the $\ell_1$ norm, the algebraic $R_{\alpha,\ones}(\bbeta)$ approaches $\nrm{\bbeta}_1$ polynomially in terms of the scale of $\alpha$.  
Therefore, the bias of gradient descent and the transition from the kernel regime to the rich limit is more complex and subtle than what is captured simply by distances in parameter space.

\section{Higher Order Models}\label{sec:higher-order-models}
So far, we considered a 2-homogeneous model,
corresponding to a simple depth-2 ``diagonal'' network.  Deeper models
correspond to higher order homogeneity (\eg~a depth-$D$ ReLU
network is $D$-homogeneous), motivating us to understand the effect of
the order of homogeneity on the transition between the regimes.  We
therefore generalize our model and consider:
\begin{equation}\label{eq:D-homogeneous-model}
F_D(\bw) = \bbeta_{\bw,D} = \bw_+^D - \bw_-^D \quad\textrm{and}\quad f_D(\bw,\bx) = \inner{\bw_+^D - \bw_-^D}{\bx}
\end{equation}
As before, this is just a linear regression model with an
unconventional parametrization, equivalent to a depth-$D$ matrix
factorization model with commutative measurement matrices, as studied
by \citet{arora2019implicit}, or a depth-$D$
diagonal linear network.
We can again study the effect of the scale of
$\alpha$ on the implicit bias.  Let $\bbeta_{\alpha,D}^\infty$ denote
the limit of gradient flow on $\bw$ when $\bw_+(0) = \bw_-(0) = \alpha \ones$.  In Appendix \ref{app:higher-order-proof} we prove:
\begin{restatable}{theorem}{higherorderthm}\label{thm:higher-order}
For any $0<\alpha<\infty$ and $D \geq 3$, if $X\bbeta_{\alpha,D}^\infty=y$, then
\[
    \smash{\bbeta_{\alpha,D}^\infty = \argmin\nolimits_{\bbeta} Q_\alpha^D(\bbeta)\ \ \textrm{s.t.}\ \ \mathbf{X}\bbeta=\by}
\]
where $Q_\alpha^D(\bbeta) = \alpha^D\sum_{i=1}^d q_D(\bbeta_i / \alpha^D)$ and
$q_D=\int h_D^{-1}$ is the antiderivative of the unique
inverse of $h_D(z) = (1-z)^{-\frac{D}{D-2}} -
(1+z)^{-\frac{D}{D-2}}$ on $[-1,1]$.  Furthermore, $\lim_{\alpha\to
  0}\bbeta_{\alpha,D}^\infty = \lonesolution$ and $\lim_{\alpha\to
  \infty}\bbeta_{\alpha,D}^\infty = \ltwosolution$.
\end{restatable}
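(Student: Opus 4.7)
The plan is to follow the general recipe the paper lays out for Theorem~\ref{thm:gf-gives-min-q}: integrate the gradient flow ODE coordinatewise to obtain an explicit functional form for $\bbeta_{\alpha,D}^\infty$ in terms of a dual vector lying in the row span of $X$, then invoke convexity of $Q_\alpha^D$ and KKT sufficiency to identify that form with the minimizer of the convex program. The two limits will drop out of an asymptotic analysis of $q_D$ at $0$ and at infinity.

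For the dynamics, writing $r(t)\in\R^N$ for the residual vector, gradient flow gives $\dot{\bw}_{+,i}=-D\bw_{+,i}^{D-1}(X^\top r)_i$ and $\dot{\bw}_{-,i}=+D\bw_{-,i}^{D-1}(X^\top r)_i$. Because $D\geq 3$, each of these is a separable ODE in $\bw_{\pm,i}^{-(D-2)}$, so setting $\nu(t)=-\int_0^t r(s)\,ds$ and $z_i(t)=D(D-2)\alpha^{D-2}(X^\top\nu(t))_i$ one integrates to
\[
\bw_{+,i}(t)=\alpha(1-z_i(t))^{-1/(D-2)},\qquad \bw_{-,i}(t)=\alpha(1+z_i(t))^{-1/(D-2)}.
\]
Substituting into $\bbeta=\bw_+^D-\bw_-^D$ yields $\bbeta_i(t)=\alpha^D h_D(z_i(t))$ with the $h_D$ of the statement, so in the limit $\bbeta_{\alpha,D,i}^\infty=\alpha^D h_D\prn*{D(D-2)\alpha^{D-2}(X^\top\nu^\infty)_i}$ for some $\nu^\infty\in\R^N$.

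Next, observe that $h_D'(z)=\tfrac{D}{D-2}\brk*{(1-z)^{-\frac{D}{D-2}-1}+(1+z)^{-\frac{D}{D-2}-1}}>0$ on $(-1,1)$, so $h_D$ is a strict bijection $(-1,1)\to\R$ and $h_D^{-1}$ is well-defined and increasing on all of $\R$. Hence $q_D=\int h_D^{-1}$ is strictly convex, $Q_\alpha^D$ is strictly convex, and the constrained program has a unique solution characterized by KKT. A direct chain-rule computation gives $\nabla Q_\alpha^D(\bbeta)_i=h_D^{-1}(\bbeta_i/\alpha^D)$. Plugging in $\bbeta=\bbeta_{\alpha,D}^\infty$ inverts the $h_D$ from the dynamics to give $\nabla Q_\alpha^D(\bbeta_{\alpha,D}^\infty)=X^\top\prn{D(D-2)\alpha^{D-2}\nu^\infty}$, which together with the assumed feasibility $X\bbeta_{\alpha,D}^\infty=y$ is exactly the KKT condition, proving the first claim.

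For the limits, Taylor expanding $h_D$ at $0$ gives $h_D(z)=\tfrac{2D}{D-2}z+O(z^3)$, hence $q_D(z)=\tfrac{D-2}{4D}z^2+O(z^4)$. As $\alpha\to\infty$, $\bbeta_i/\alpha^D\to 0$, so $Q_\alpha^D(\bbeta)$ is asymptotically proportional to $\nrm{\bbeta}_2^2$, and its argmin converges to $\ltwosolution$. For $\alpha\to 0$, inverting the leading behavior $h_D(z)\sim\pm(1\mp z)^{-D/(D-2)}$ near $z=\pm 1$ yields $h_D^{-1}(u)=\mathrm{sign}(u)\prn*{1-|u|^{-(D-2)/D}}+o(|u|^{-(D-2)/D})$ as $|u|\to\infty$, and integrating gives $q_D(z)=|z|-\tfrac{D}{2}|z|^{2/D}+O(1)$ for $|z|$ large. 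Therefore
\[
Q_\alpha^D(\bbeta)=\nrm{\bbeta}_1-\tfrac{D}{2}\alpha^{D-2}\textstyle\sum_i|\bbeta_i|^{2/D}+O(\alpha^D),
\]
so on any bounded set the rescaled regularizer converges uniformly to $\nrm{\cdot}_1$. An a priori bound $\nrm{\bbeta_{\alpha,D}^\infty}_1\leq \nrm{\lonesolution}_1+o(1)$ obtained by plugging $\lonesolution$ into $Q_\alpha^D$ confines the family to a compact set, and strict convexity of the $\ell_1$ program among minimum $\ell_1$ solutions (or an epi-convergence argument) identifies the limit as $\lonesolution$; the $\ell_2$ limit is analogous and simpler.

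The routine part is the ODE integration (the assumption $D\geq 3$ is exactly what makes $\bw^{-(D-2)}$ the right change of variables); the main obstacle is the limit step, specifically upgrading the pointwise/uniform asymptotics of $Q_\alpha^D$ to convergence of \emph{argmins}. This requires both the compactness argument above and care that the additive $O(\alpha^D)$ correction does not break strict convexity of the limiting problem—handled by noting that $\lonesolution$ (resp.\ $\ltwosolution$) is characterized by a KKT condition that the $Q_\alpha^D$-KKT condition manifestly converges to as $\alpha\to 0$ (resp.\ $\alpha\to\infty$).
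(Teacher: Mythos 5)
Your argument for the main claim follows essentially the same route as the paper: integrate the coordinatewise ODEs for $\bw_+,\bw_-$ (valid precisely because $D\geq 3$ makes $\bw^{-(D-2)}$ the right variable), obtain $\bbeta_i(t)=\alpha^D h_D(z_i(t))$ with $z$ in the row span of $X^\top$, and conclude via convexity of $q_D$ and KKT sufficiency; you are in fact slightly more explicit than the paper, which invokes its ``general approach'' without spelling out $h_D'>0$ and strict convexity. Where you genuinely diverge is the two limits. The paper never expands $q_D$: it verifies the KKT conditions of the $\ell_1$ (resp.\ $\ell_2$) problem directly at the limit, using the dual variable produced by the dynamics together with a separate lemma showing $\nrm*{X^\top\int_0^t r}_\infty\leq\frac{\alpha^{2-D}}{D(D-2)}$, i.e.\ $\abs{z_i(t)}\leq 1$, which serves as the dual feasibility certificate ($\nrm{X^\top\nu}_\infty\le 1$ with $[X^\top\nu]_i\to\pm 1$ on coordinates with nonvanishing limit). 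Your route instead derives the asymptotics $q_D(z)=\frac{D-2}{4D}z^2+O(z^4)$ and $q_D(z)=\abs{z}-\frac{D}{2}\abs{z}^{2/D}+O(1)$ (both correct) and passes to the limit of argmins by uniform convergence on bounded sets plus a compactness bound obtained by plugging in $\lonesolution$ (resp.\ $\ltwosolution$). Both work; the paper's certificate argument avoids any epi-convergence machinery, while your expansion is more quantitative and in fact recovers the paper's Section~5 observation that for $D\geq 3$ the deviation from $\nrm{\cdot}_1$ decays polynomially ($\alpha^{D-2}$) rather than logarithmically.

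Two caveats you should patch. First, your closed-form integration silently assumes $\bw_{\pm,i}(t)>0$ (equivalently $\abs{z_i(t)}<1$) for all $t$; the paper proves this as a lemma (coordinates start at $\alpha>0$ and cannot cross zero since $\bw_{\pm,i}=0$ forces $\dot\bw_{\pm,i}=0$), and without it the change of variables and hence the identity $\bbeta_i=\alpha^D h_D(z_i)$ is not justified. Second, ``strict convexity of the $\ell_1$ program'' is not a real property---$\nrm{\cdot}_1$ is not strictly convex and its minimizer over $\{X\bbeta=\by\}$ need not be unique---so the correct statement from your epi-convergence argument is that every limit point of $\bbeta_{\alpha,D}^\infty$ is an $\ell_1$ minimizer, which identifies the limit as $\lonesolution$ only under the (implicit, also made by the paper) assumption that the $\ell_1$ minimizer is unique; your parenthetical epi-convergence remark, not the strict-convexity claim, is the argument to keep.
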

In the two extremes, we again get $\ltwosolution$ in the kernel regime, 
and more interestingly, for any depth
$D\geq 2$, we get the $\lonesolution$ in the rich
limit, as has also been observed by \citet{arora2019implicit}.  That the rich limit solution does not change with $D$ is
surprising, and disagrees with what would be obtained with
explicit regularization (regularizing $\norm{\bw}_2$ is equivalent to
$\norm{\bbeta}_{2/D}$ regularization), nor implicitly
on with the logistic loss (which again corresponds to $\norm{\bbeta}_{2/D}$, see, \eg~\citep{gunasekar2017implicit,lyu2019gradient}).

\begin{figure}
\subfigure[\small Regularizer]{\label{fig:qD-regularizer}%
      \includegraphics[width=0.33\textwidth]{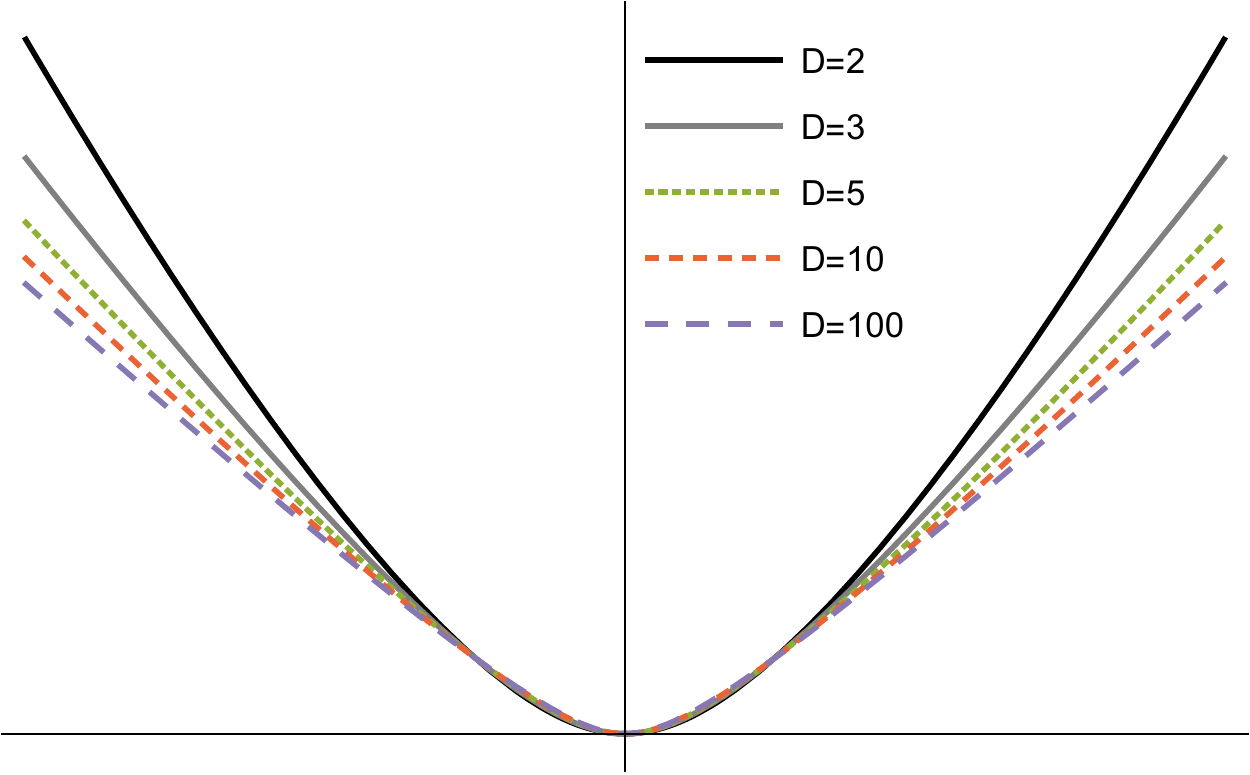}}%
\subfigure[\small Approximation ratio]{\label{fig:QD-approximation-ratio}%
      \includegraphics[width=0.33\textwidth]{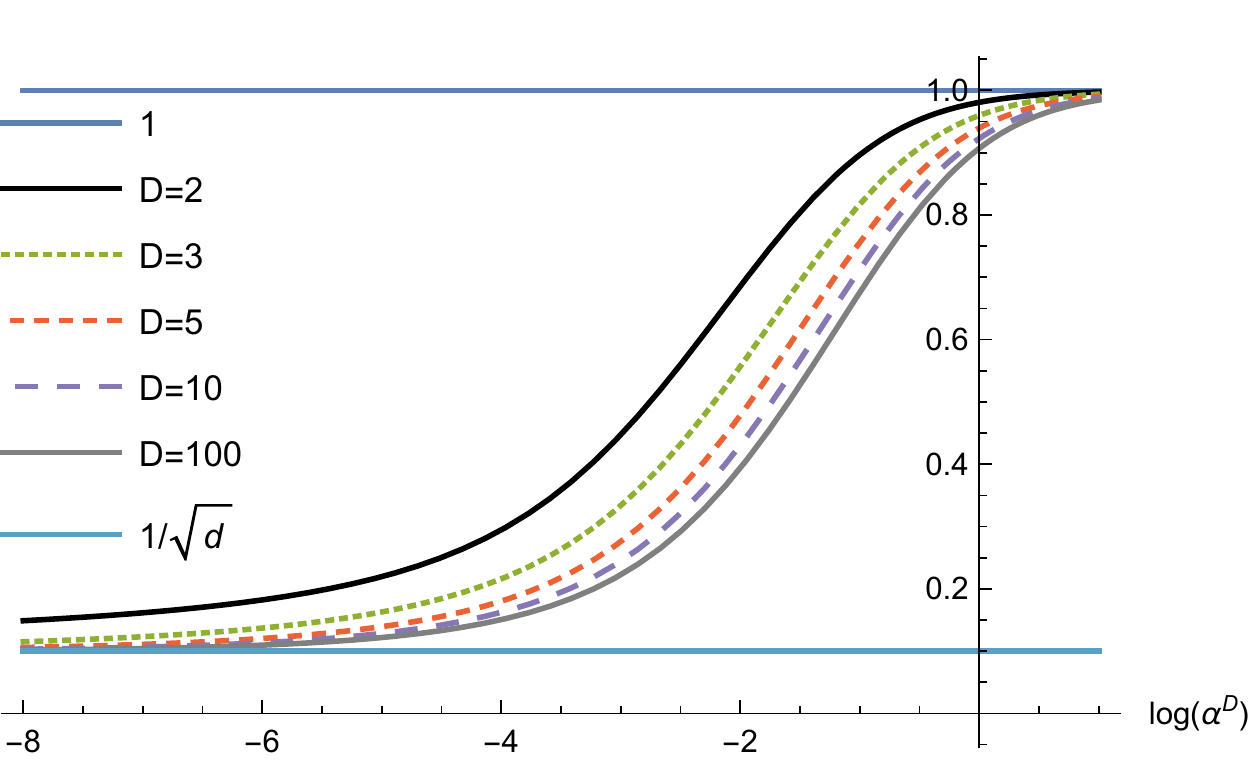}}%
\subfigure[\small Sparse regression simulation]{\label{fig:alpha-needed-for-given-N-deeper-models}%
      \includegraphics[width=0.33\textwidth]{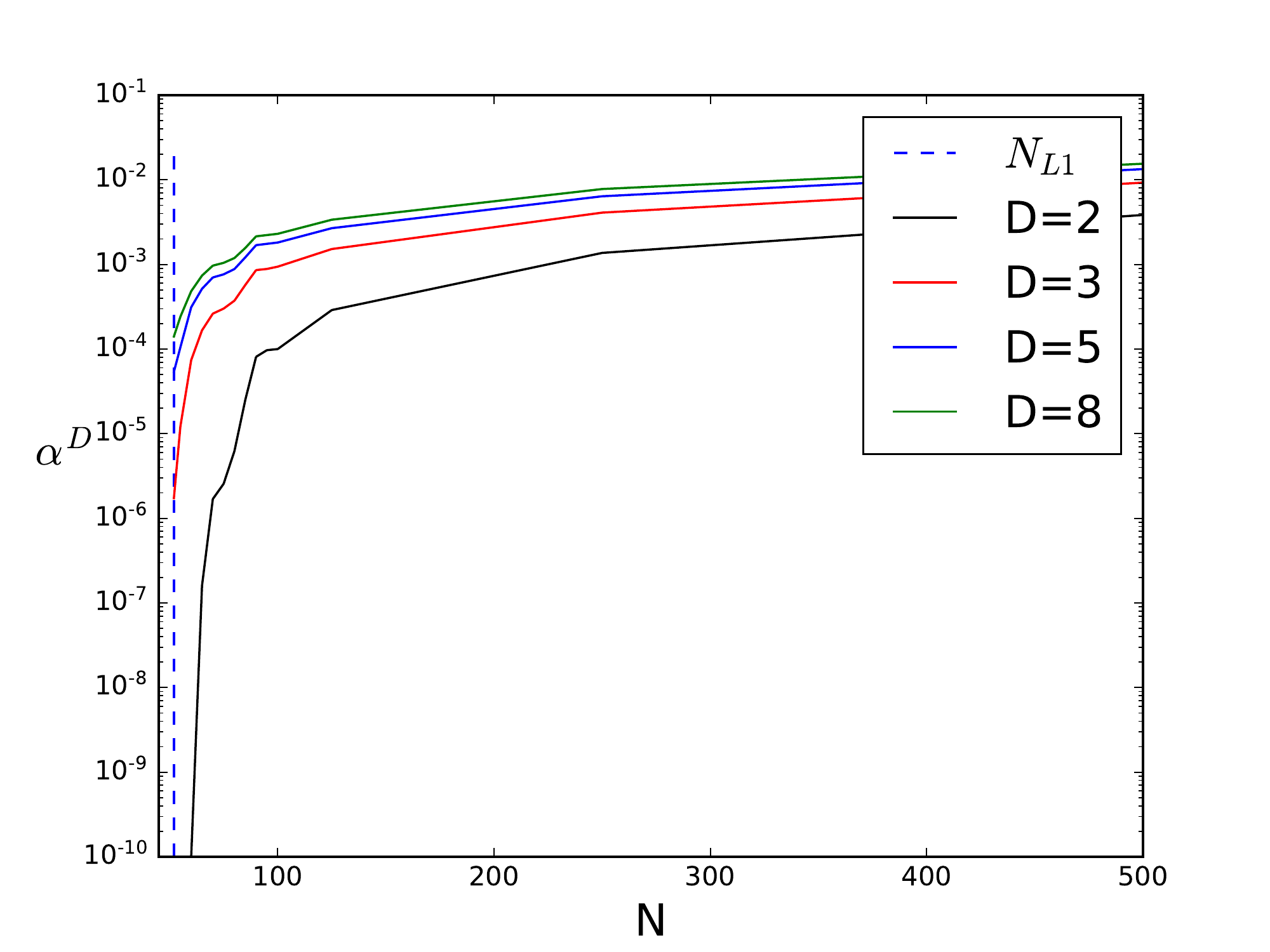}}
\caption{\small(a) $q_D(z)$ for several values of $D$. (b) The ratio
      $\frac{Q_\alpha^D(e_1)}{Q_\alpha^D(\mathbf{1}_d/\norm{\mathbf{1}_d}_2)}$
      as a function of $\alpha$, where $e_1=[1,0,0,\dots,0]$ is the first standard basis vector and $\mathbf{1}_d = [1,1,\dots,1]$ is
      the all ones vector in $\mathbb{R}^d$. This captures the transition between
      approximating the $\ell_2$ norm (where the ratio is $1$) and the
      $\ell_1$ norm (where the ratio is $1/\sqrt{d}$). (c) A sparse
      regression simulation as in Figure \ref{fig:depth-2-all}, using
      different order models. The
      y-axis is the largest $\alpha^D$ (the scale of $\bbeta$ at
      initialization) that leads to recovery of the planted predictor to
      accuracy $0.025$. The vertical dashed line indicates the number of
      samples needed in order for $\lonesolution$ to approximate the
      plant.\label{fig:qd-plots}}
\vspace{-5mm}
\end{figure}

Although the two extremes do not change as we go beyond $D=2$, what
does change is the intermediate regime, particularly the sharpness of the transition into the extreme regimes, as
illustrated in Figures
\ref{fig:qD-regularizer}-\ref{fig:alpha-needed-for-given-N-deeper-models}.
The most striking difference is that, even at order $D=3$, the scale of
$\alpha$ needed to approximate $\ell_1$ is polynomial rather then
exponential, yielding a much quicker transition to the rich
limit versus the $D=2$ case above. This allows near-optimal sparse regression with reasonable
initialization scales as soon as $D > 2$, and increasing $D$ hastens the transition to the rich limit.
This may explain the empirical observations regarding
the benefit of depth in deep matrix factorization
\citep{arora2019implicit}.

\section{The Effect of Width}\label{sec:width-theory}
The kernel regime was first discussed in the context of the high (or infinite) {\em width} of a network, but our treatment so far, following \cite{chizat2018note}, identified the {\em scale} of the initialization as the crucial parameter for entering the kernel regime.  So is the width indeed a red herring?  Actually, the width indeed plays an important role and allows entering the kernel regime more naturally.

The fixed-width models so far only reach the kernel regime when the initial scale of parameters goes to infinity. To keep this from exploding both the outputs of the model and $F(\bw(0))$ itself, 
we used Chizat and Bach's ``unbiasing'' trick.
However, using unbiased models with $F(\alpha\bw_0) = 0$ conceals the unnatural nature of this regime: although the final output may not explode, outputs of internal units do explode in the scaling leading to the kernel regime.  Realistic models are not trained like this.  
We will now use a ``wide'' generalization of our simple linear model to illustrate how increasing the width can induce kernel regime behavior in a more natural setting where both the initial output and the outputs of all internal units, do not explode and can even vanish.

Consider an (asymmetric) matrix factorization model, \ie~a linear model over matrix-valued observations\footnote{$\bX$ need not be square; the results and empirical observations extend for non-square matrices.} 
$\bX\in\R^{d\times{}d}$ described by $f((\bU,\bV), \bX) = \inner{\bU\bV^\top}{\bX}$ where $\bU,\bV \in \R^{d\times{}k}$, and we refer to $k\geq d$ as the ``width.''  We are interested in understanding the behaviour as $k\rightarrow\infty$ and the scaling of initialization $\alpha$ of each individual parameter changes with $k$. Let $\bMUV = F(\bU,\bV) = \bU\bV^\top$ denote the underlying linear predictor.  We consider minimizing the squared loss $L(\bU,\bV)=\tilde{L}(\bMUV) = \sum_{n=1}^N \prn*{\tri*{\bX_n, \bMUV} - y_n}^2$ on $N$ samples using gradient flow on the parameters $\bU$ and $\bV$.   This formulation includes a number of special cases such as matrix completion, matrix sensing, and two layer linear neural networks.

We want to understand how the \textit{scale and width} jointly affect the implicit bias. Since the number of parameters grows with $k$, it now makes less sense to capture the scale via the magnitude of individual parameters. Instead, we will capture scale via $\sigma = \frac{1}{d}\nrm{\bMUV}_F$, \ie~the scale of the model itself at initialization. The initial predictions are also of order $\sigma$, \eg~when $\bX$ is Gaussian and has unit Frobenius norm.  We will now show that the model remains in the kernel regime depending on the relative scaling of $k$ and $\sigma$. Unlike the $D$-homogeneous models of Sections \ref{sec:depth-2-model} and \ref{sec:higher-order-models}, $\bMUV$ can be in the kernel regime when $\sigma$ remains bounded, or even when it goes to zero.

\paragraph{"Lifted" symmetric factorization} Does the scale of $\bMUV$ indeed capture the relevant notion of parameter scale?  In case of a \emph{symmetric} matrix factorization model $\bM_\bW = \bW\bW^\top$,  $\bM_{\bW}$ captures the entire behaviour of the model since the dynamics on $\bM_{\bW(t)}$ induced by gradient flow on $\bW(t)$ given by  $\dot{\bM}_{\bW(t)}=\nabla\tilde{L}(\bM_{\bW(t)})\bM_{\bW(t)}+\bM_{\bW(t)}\nabla\tilde{L}(\bM_{\bW(t)})$ depends only on $\bM_{\bW(t)}$ and not on $\bW(t)$ itself \citep{gunasekar2017implicit}. 


For the asymmetric model $\bMUV$, this is no longer the case, and the dynamics of $\bM_{\bU(t),\bV(t)}$ do depend on the specific factorization $\bU(t),\bV(t)$ and not only on the product $\bMUV$.  Instead, we can consider an equivalent ``lifted'' symmetric problem defined by $\bbarMUV = [\begin{smallmatrix}\bU\\\bV\end{smallmatrix}][\begin{smallmatrix}\bU\\\bV\end{smallmatrix}]^\top=[\begin{smallmatrix}\bU\bU^\top & \bMUV\\\bMUV^\top&\bV\bV^\top\end{smallmatrix}]$ and $\bar{\bX}_n = \frac{1}{2}[\begin{smallmatrix}0&\bX_n\\\bX_n^\top&0\end{smallmatrix}]$ with $\bar{f}((\bU,\bV), \bar{\bX}) = \tri*{\bbarMUV, \bar{\bX}}$. The dynamics over $\bbarMUV$---which on the off diagonal blocks are equivalent to those of $\bMUV$---are now fully determined by $\bbarMUV$ itself; that is, by the combination of the ``observed'' part $\bMUV$ as well as the ``unobserved'' diagonal blocks $\bU\bU^\top$ and $\bV\bV^\top$. 
%
%
To see how this plays out in terms of the width, consider initializing $\bU(0)$ and $\bV(0)$ with i.i.d.~$\mc{N}(0,\alpha^2)$ entries. The off-diagonal entries of $\bbarMUV$, and thus $\sigma$, will scale with $\alpha^2\sqrt{k}$ while the diagonal entries of $\bbarMUV$ will scale with $\alpha^2 k = \sigma\sqrt{k}$.

By analogy to the models studied in Sections \ref{sec:depth-2-model} and \ref{sec:higher-order-models}, we can infer that the relevant scale for the problem is that of the entire lifted matrix $\bbarMUV$, which determines the dynamics, and which is a factor of $\sqrt{k}$ \emph{larger} than the scale of the actual predictor $\bMUV$. We now show that in the special case where the measurements $\bX_1,\dots,\bX_N$ commute with each other, the implicit bias is indeed precisely captured by $\sigma\sqrt{k}$---when this quantity goes to zero, we enter the rich limit; when this quantity goes to infinity, we enter the kernel regime; and in the transition we have behavior similar to the 2-homogeneous model from Section \ref{sec:depth-2-model}.

\paragraph{Matrix Sensing with Diagonal/Commutative Measurements}
Consider the special case where $\bX_1$, $\dots$, $\bX_N$ are all diagonal, or more generally commutative, matrices. The diagonal elements of $\bMUV$ (the only relevant part when $\bX$ is diagonal) are $[\bMUV]_{ii} = \sum_{j=1}^k \bU_{ij}\bV_{ij}$, and so  the diagonal case can be thought of as an (asymmetric) ``wide'' analogue to the 2-homogeneous model we considered in Section \ref{sec:depth-2-model}, \ie~a ``wide parallel linear network'' where each input unit $\bX_{ii}$ has its own set of $k$ hidden $(\bU_{i1},\bV_{i1}),\ldots,(\bU_{ik},\bV_{ik})$ units.
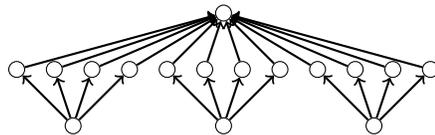
\begin{wrapfigure}{r}{0.4\textwidth}
\centering
\vspace{-4.5mm}
    \begin{tikzpicture}[xscale=0.5,yscale=0.75]
        \vertex(p0) at (-3,0) {};
        \vertex(p1) at (-2,0) {};
        \vertex(p2) at (-1,0) {};
        \vertex(p3) at (0,0) {};
        \vertex(p4) at (1,0) {};
        \vertex(p5) at (2,0) {};
        \vertex(p6) at (3,0) {};
        \vertex(p7) at (4,0) {};
        \vertex(p8) at (5,0) {};
        \vertex(p9) at (6,0) {};
        \vertex(p10) at (7,0) {};
        \vertex(p11) at (8,0) {};
        \vertex(p12) at (-1.5,-1) {};
        \vertex(p13) at (2.5,-1) {};
        \vertex(p14) at (6.5,-1) {};
        \vertex(p15) at (2.5,1) {};
    \tikzset{EdgeStyle/.style={->}}
        \Edge(p12)(p0)
        \Edge(p12)(p1)
        \Edge(p12)(p2)
        \Edge(p12)(p3)
        \Edge(p13)(p4)
        \Edge(p13)(p5)
        \Edge(p13)(p6)
        \Edge(p13)(p7)
        \Edge(p14)(p8)
        \Edge(p14)(p9)
        \Edge(p14)(p10)
        \Edge(p14)(p11)
        \Edge(p0)(p15)
        \Edge(p1)(p15)
        \Edge(p2)(p15)
        \Edge(p3)(p15)
        \Edge(p4)(p15)
        \Edge(p5)(p15)
        \Edge(p6)(p15)
        \Edge(p7)(p15)
        \Edge(p8)(p15)
        \Edge(p9)(p15)
        \Edge(p10)(p15)
        \Edge(p11)(p15)
    \end{tikzpicture}
\caption{\small A wide parallel network}
\vspace{-4mm}
\label{fig:wide-diag-network}
\end{wrapfigure}
This is depicted in Figure \ref{fig:wide-diag-network}.
We consider initializing $\bU(0)$ and $\bV(0)$ with i.i.d.~$\mc{N}(0,\alpha^2)$ entries, so $\bMUVz$ will be of magnitude $\sigma = \alpha^2\sqrt{k}$, and take $k\rightarrow\infty$, scaling $\alpha$ as a function of $k$.

Theorem \ref{thm:commutative-qmu}, proven in Appendix \ref{app:qmu-proof}, completely characterizes the implicit bias of the model, which corresponds to minimizing $Q_{\mu}$ applied to its spectrum (the ``Schatten-$Q_{\mu}$-norm''). This corresponds to an implicit bias which approximates the trace norm for small $\mu$ and the Frobenius norm for large $\mu$.  In the diagonal case, 
this is just the minimum $Q_\mu$ solution, but unlike the ``width-1'' model of Section \ref{sec:depth-2-model}, this is obtained without an ``unbiasing'' trick.
\begin{restatable}{theorem}{commutativeQmu}\label{thm:commutative-qmu}
Let $k\rightarrow\infty$, $\sigma(k)\rightarrow 0$, and $\mu^2 :=\frac{1}{2} \lim_{k\to\infty} \sigma(k) \sqrt{k}$, and suppose $\bX_1,\dots,\bX_N$ commute.  If $\bMUV(t)$ converges to a zero error solution $\bMUV^*$, then
\[
\bMUV^* = \argmin_{\bM} Q_\mu(\textrm{spectrum}(\bM))\ \ \textrm{s.t. }L(\bM) = 0
\]
\end{restatable}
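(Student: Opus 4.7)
Since $\bX_1,\dots,\bX_N$ are commuting (symmetric) matrices, they are simultaneously orthogonally diagonalizable: $\bX_n=O\Lambda_n O^\top$ with $\Lambda_n$ diagonal and $O$ orthogonal. Rotating $\tilde\bU=O^\top\bU$, $\tilde\bV=O^\top\bV$ replaces $\bX_n$ by $\Lambda_n$ in $L$ and preserves i.i.d.\ $\mc{N}(0,\alpha^2)$ initialization by rotational invariance of Gaussians; so I may assume each $\bX_n$ is diagonal. Step one of the plan is to show the trajectory $\bMUV(t)=\bU(t)\bV(t)^\top$ has a \emph{diagonal} limit $\bMUV^*$ whose diagonal $\beta^*\in\R^d$ minimizes $Q_\mu(\beta)$ subject to the reduced linear constraints $X\beta=\by$ (with $X_{ni}=[\bX_n]_{ii}$); step two is to invoke the Ky Fan / majorization inequality $\sum_i q(\sigma_i(\bM))\geq\sum_i q(|[\bM]_{ii}|)$, which holds because $q$ is even, convex, and vanishes at $0$, with equality (by strict convexity of $q$) only if $\bM$ is diagonal. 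This upgrades the diagonal optimum to the spectral optimum $\argmin_\bM Q_\mu(\mathrm{spectrum}(\bM))$ s.t.\ $L(\bM)=0$ stated in the theorem.

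\textbf{Row-wise decoupling and the conservation law.} With diagonal measurements, $\partial L/\partial\bU_{ij}=c_i(t)\,\bV_{ij}$ where $c_i(t):=[\nabla_\bM L(\bMUV(t))]_{ii}=2\sum_n r_n(t)[\bX_n]_{ii}$, so the row pair evolves independently of other rows via $\dot\bU_i=-c_i\bV_i$, $\dot\bV_i=-c_i\bU_i$ in $\R^k$ (distinct rows interact only through the scalar residuals $r_n$). Setting $\beta_i(t):=[\bMUV(t)]_{ii}=\bU_i^\top\bV_i$ and $\rho_i(t):=\|\bU_i\|^2+\|\bV_i\|^2$ yields $\dot\beta_i=-c_i\rho_i$, $\dot\rho_i=-4c_i\beta_i$, the invariant $\|\bU_i\|^2-\|\bV_i\|^2\equiv\textrm{const}$, and the exact conservation law $\rho_i(t)^2=\rho_i(0)^2+4(\beta_i(t)^2-\beta_i(0)^2)$. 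Under the hypothesis $\mu^2=\tfrac12\lim\sigma(k)\sqrt{k}$ with $\sigma(k)=\alpha^2\sqrt{k}\to 0$, equivalently $\alpha^2 k\to 2\mu^2$, standard $\chi^2$ and Gaussian-inner-product concentration give $\rho_i(0)\to 4\mu^2$, $\beta_i(0)\to 0$, and $\|\bU_i(0)\|^2-\|\bV_i(0)\|^2\to 0$, so the conservation law passes to the limit $\rho_i(t)=2\sqrt{4\mu^4+\beta_i(t)^2}$ --- exactly the asymmetric analogue of Theorem~1's identity $\bw_{+,i}^2+\bw_{-,i}^2=\sqrt{4\alpha^4+\beta_i^2}$ under the correspondence $\mu^2\leftrightarrow\alpha^2$.

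\textbf{Mirror-flow identification and off-diagonal vanishing.} Substituting the limiting $\rho_i$ into $\dot\beta_i$, one gets $\tfrac{d}{dt}\arcsinh(\beta_i(t)/(2\mu^2))=-2[\nabla_\bM L(\bMUV(t))]_{ii}$, hence $\nabla Q_\mu(\beta(t))+2\int_0^t\nabla_\bM L(\bMUV(s))\,ds$ is constant in $t$. Since the integrand lies in $\spn\{\diag(\bX_n)\}=\mathrm{range}(X^\top)$, taking $t\to\infty$ produces exactly the KKT condition for $\beta^*=\argmin_\beta Q_\mu(\beta)$ s.t.\ $X\beta=\by$ --- the same template used in the proof of Theorem~1. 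For $i\neq i'$, $\bU_i(t),\bV_i(t)$ remain inside the fixed $2$-plane $\spn(\bU_i(0),\bV_i(0))\subset\R^k$, drawn independently of the $2$-plane for row $i'$; two independent random $2$-planes in $\R^k$ are nearly orthogonal with principal cosines of order $1/\sqrt{k}$, and the conservation law keeps $\|\bU_i(t)\|,\|\bV_{i'}(t)\|$ uniformly bounded along the trajectory, so $[\bMUV(t)]_{ii'}=O(1/\sqrt{k})\to 0$ uniformly in $t$. Thus $\bMUV^*$ is diagonal with diagonal $\beta^*$, and paragraph~1's majorization reduction closes the proof.

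\textbf{Main obstacle.} The subtle point is interchanging $k\to\infty$ with $t\to\infty$: the conservation law and mirror-flow identity both hold pointwise in $t$ for each finite $k$, but the closed-form preconditioner defining $Q_\mu$ appears only after $k\to\infty$. The crucial observation is that the conservation law is \emph{exact}, so $|\rho_i(t)^2-(16\mu^4+4\beta_i(t)^2)|\leq|\rho_i(0)^2-16\mu^4|+4\beta_i(0)^2=O(\mu^4/\sqrt{k})$ holds uniformly in $t$ (no deterioration along the trajectory); this $t$-uniform error bound on the effective preconditioner, together with the assumed convergence $\bMUV(t)\to\bMUV^*$ and the strict convexity of $Q_\mu$, is what elevates the formal pointwise ODE limit to a rigorous variational characterization of $\bMUV^*$.
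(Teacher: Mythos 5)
Your proposal is correct in outline and lands on the same core identity as the paper (the $\arcsinh$/mirror-flow relation for the diagonal of $\bMUV$, followed by the KKT argument recycled from Theorem \ref{thm:gf-gives-min-q}), but it gets there by a genuinely different route. The paper's proof never analyzes the random finite-$k$ initialization directly: it passes to the lifted symmetric matrix $\bbarMUV$, observes that its gradient-flow dynamics are closed in $\bbarMUV$ itself, notes $\bbarMUVz \to 2\mu^2 I$ as $k\to\infty$, and then swaps in \emph{any} convenient initialization realizing that lifted matrix---namely the deterministic width-$2d$ choice $\bU(0)=[\sqrt{2}\mu I,\,0]$, $\bV(0)=[0,\,\sqrt{2}\mu I]$---which gives exact $\sinh/\cosh$ formulas for $\delta(t)$ and $\Delta(t)$ and makes the block-diagonal structure hold exactly, so no concentration, no off-diagonal control, and no uniformity-in-$t$ bookkeeping is needed. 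You instead work at finite $k$ with the random initialization: per-row decoupling, the exact conservation laws $\rho_i^2-4\beta_i^2=\mathrm{const}$ and $\nrm{\bU_i}^2-\nrm{\bV_i}^2=\mathrm{const}$, Gaussian concentration to identify the limiting conserved quantities, a two-plane near-orthogonality argument to kill the off-diagonals, and a singular-value weak-majorization step ($\sum_i q(\sigma_i(\bM))\ge\sum_i q(\abs{[\bM]_{ii}})$, valid since $q$ is even, convex and nondecreasing on $[0,\infty)$) to upgrade the diagonal variational statement to the spectral one. What each buys: the paper's invariance trick is shorter and sidesteps the $k\to\infty$ versus $t\to\infty$ interchange by construction, while your argument is more quantitative (exact conservation with $t$-uniform $O(1/\sqrt{k})$ errors) and makes explicit two steps the paper leaves implicit---why off-diagonal entries vanish under the actual random initialization, and why the diagonal minimizer also minimizes $Q_\mu$ of the spectrum. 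Two caveats on your write-up: the equality case you claim for the majorization inequality (``equality only if $\bM$ is diagonal'') is not needed and not obviously true as stated, and your final interchange-of-limits paragraph is a sketch rather than a complete argument---though it is no less rigorous than the paper's own treatment of that point.
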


\paragraph{Non-Commutative Measurements}
We might expect that in the general case, there is also a transition around $\sigma \asymp 1/\sqrt{k}$: \begin{inparaenum}[(a)] \item if $\sigma = \omega(1/\sqrt{k})$, then $\bbarMUV \to \infty \cdot I$ and the model should remain in the kernel regime, even in cases where $\sigma=\norm{\bMUV}_F \to 0$; \item on the other hand, if $\sigma = o(1/\sqrt{k})$ then $\nrm{\bbarMUV}_F \to 0$ and the model should approach some rich limit; \item at the transition, when $\sigma = \Theta(1/\sqrt{k})$, $\bbarMUV$ will remain bounded and we should be in an intermediate regime. \end{inparaenum} In light of Theorem \ref{thm:commutative-qmu}, if $0 < \mu^2 \defeq \frac{1}{2}\lim \sigma \sqrt{k} < \infty$ exists, we expect an implicit bias resembling $Q_\mu$.
\citet{geiger2019disentangling} also study such a transition using different arguments, but they focus on the extremes $\sigma = o(1/\sqrt{k})$ and $\sigma = \omega(1/\sqrt{k})$ and not on the transition. Here, we understand the scaling directly in terms of how the width affects the magnitude of the symmetrized model $\bbarMUV$. 

For the symmetric matrix factorization model with non-commutative measurements, we \emph{can} analyze the case $\omega(1/\sqrt{k}) = \sigma = o(1)$ and prove it, unsurprisingly, leads to the kernel regime (see Theorem \ref{thm:symmetric-matrix-factorization-kernel-regime} and Corollary \ref{cor:asymmetric-kernel-regime-random-init} in Appendix \ref{app:width-proofs}, which closely follow the approach of \citet{chizat2018note}). It would be more interesting to characterize the implicit bias across the full range of the intermediate regime, 
however, even just the rich limit in this setting has defied generic analysis so far (\textit{q.v.,}~the still unresolved conjecture of \cite{gunasekar2017implicit}), and analyzing the intermediate regime is even harder (in particular, the limit of the intermediate regime describes the rich limit). Nevertheless, we now describe empirical evidence that the behavior of Theorem \ref{thm:commutative-qmu} may also hold for non-commutative measurements.

\paragraph{Low-Rank Matrix Completion}
Matrix completion is a natural and commonly-studied instance of the general matrix factorization model where the measurements $\bX_n = e_{i_n} e_{j_n}^\top$ are indicators of single entries of the matrix (note: these measurements do \emph{not} commute), and so $y_n$ corresponds to observed entries of an unknown matrix $\bY^*$. 
When $N < d^2$, there are many minimizers of the squared loss which correspond to matching $\bY^*$ on all of the observed entries, and imputing arbitrary values for the unobserved entries. Generally, there is no hope of ``generalizing'' to unseen entries of $\bY^*$, which need not have any relation to the observed entries. However, when $\bY^*$ is rank-$r$ for $r \ll d$, the minimum nuclear norm solution will recover $\bY^*$ when $N = \tilde{\Omega}(d^{1.2}r)$ \citep{candes2009exact}. 
While Theorem \ref{thm:commutative-qmu} \textit{does not} apply for these non-commutative measurements, our experiments described in Figure \ref{fig:matrix-completion-phase} indicate the same behavior appears to hold: when $\sigma = o(1/\sqrt{k})$, the nuclear norm is nearly minimized and $\bMUV$ converges to $\bY^*$. On the other hand, the kernel regime corresponds to implicit Frobenius norm regularization, which does not recover $\bY^*$ until $N = \Omega(d^2)$. Therefore, in order to recover $\bY^*$, it is necessary to choose an initialization with $\sigma\sqrt{k} \ll 1$.
\begin{figure}[ht!]
 \centering
 \includegraphics[width=0.8\linewidth]{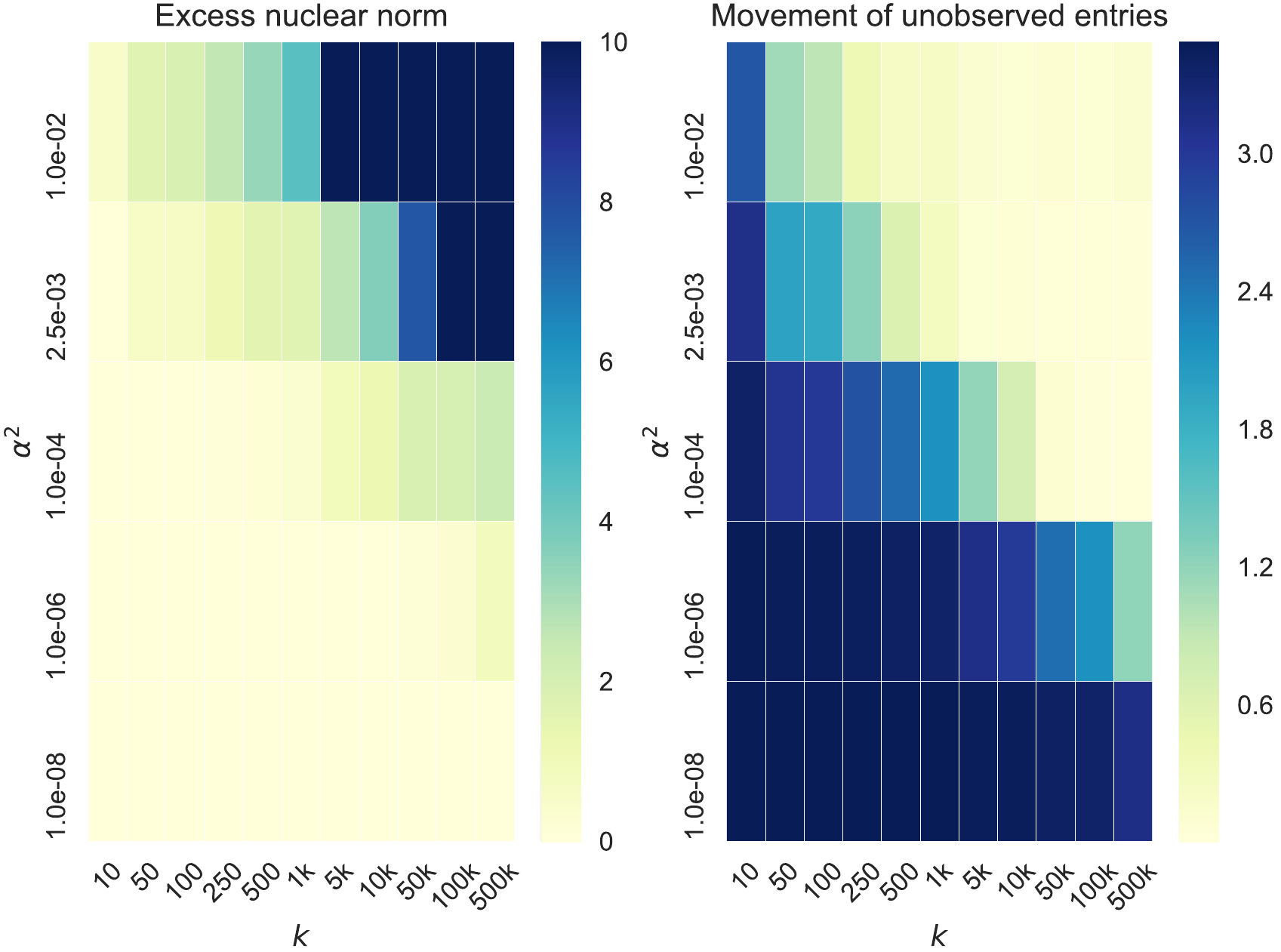}
\small\caption{\small\textbf{Matrix Completion} We generate rank-1 ground truth $\bY^*=u^*(v^*)^{\top}$ where $u^*,v^* \sim \mathcal{N}(0,I_{10\times 10})$ and observe $N=60$ random entries. We minimize the squared loss on the observed entries of the model $F(U,V)=U V^\top$ with $U,V\in\R^{d\times k}$ using gradient descent with small stepsize $10^{-5}$. We initialize $\bU(0)_{ij},\bV(0)_{ij} \sim \mc{N}(0,\alpha^2)$. For the solution, $\bM_{\alpha,k}$, reached by gradient descent, the left heatmap depicts the excess nuclear norm $\nrm{\bM_{\alpha,k}}_* - \nrm{\bY^*}_*$ (this is conjectured to be zero in the rich limit); and the right heatmap depicts the root mean squared difference between the entries $\bM_{\alpha,k}$ and $\bU(0)\bV(0)^\top$ corresponding to unobserved entries of $\bY^*$ (in the kernel regime, the unobserved entries do not move). 
Both exhibit a phase transition around $\alpha^2 k = \sigma\sqrt{k} \asymp 1$. For $\sigma\sqrt{k} \ll 1$ the excess nuclear norm is approximately zero, corresponding to the rich limit. For $\sigma\sqrt{k} \gg 1$, the unobserved entries do not change, which corresponds to the kernel regime. This phase transition appears to sharpen somewhat as $k$ increases. 
\label{fig:matrix-completion-phase}
}
\vspace{-5mm}
\end{figure}

\paragraph{Conclusion}
In this section, we provide evidence that both the \emph{scale}, $\sigma$, and \emph{width}, $k$, of asymmetric matrix factorization models have a role to play in the implicit bias. In particular, we show that the scale of the equivalent ``lifted'' or ``symmetrized'' model $\bbarMUV$ is the relevant parameter. Under many natural initialization schemes for $\bU$ and $\bV$, \eg~with i.i.d.~Gaussian entries, the scale of $\bbarMUV$ is $\sqrt{k}$ times larger than the scale of $\bMUV$. Consequently, wide factorizations can reach the kernel regime even while $\bMUV$ remains bounded, even without resorting to ``unbiasing.'' On the other hand, reaching the rich limit requires an even smaller initialization for large $k$.

\section{Neural Network Experiments}\label{sec:neural-network-experiments}
In Sections \ref{sec:depth-2-model} and \ref{sec:higher-order-models}, we intentionally focused on the simplest
possible models in which a kernel-to-rich transition can be observed,
in order to isolate this phenomena and understand it in detail.  In
those simple models, we were able to obtain a complete analytic
description of the transition. Obtaining such a precise description
 in more complex models is too optimistic at this point, but
 we demonstrate the same phenomena empirically for realistic non-linear neural networks.

Figures \ref{fig:test_loss_depth_ad} and \ref{fig:grad_depth_ad} use a synthetic dataset to show that non-linear ReLU networks remain in the
kernel regime when the initialization is large; that they exit from the kernel regime as the initialization becomes smaller; and that exiting from
the kernel regime can allow for smaller test error. For MNIST data, Figure
\ref{fig:mnist_test_error} shows that previously published successes
with training very wide depth-2 ReLU networks without explicit
regularization \cite[\eg][]{neyshabur15} relies on the initialization
being small, \ie~being outside of the kernel regime. In fact, the 2.4\%
test error reached for large initialization is no better than what can be 
achieved with a linear model over a random feature map. Turning to a more
realistic network, Figure \ref{fig:vgg11_2} shows similar behavior when
training a VGG11-like network on CIFAR10.

Interestingly, in all experiments, when $\alpha \approx 1$, the
models both achieve good test error and are just about to enter the kernel regime, which may be desirable due to the learning
vs.~optimization tradeoffs discussed in Section
\ref{sec:depth-2-model}.
Not coincidentally, $\alpha = 1$ corresponds to using the standard out-of-the-box
Uniform He initialization. Given the extensive efforts put into designing good initialization schemes, this gives further credence to the idea that model will perform best when trained just outside of the kernel regime.

\begin{figure}[ht!]
\label{fig:all-nn-plots}
\centering
\subfigure[\small Test RMSE vs scale]{\label{fig:test_loss_depth_ad}%
      \includegraphics[width=0.4\textwidth]{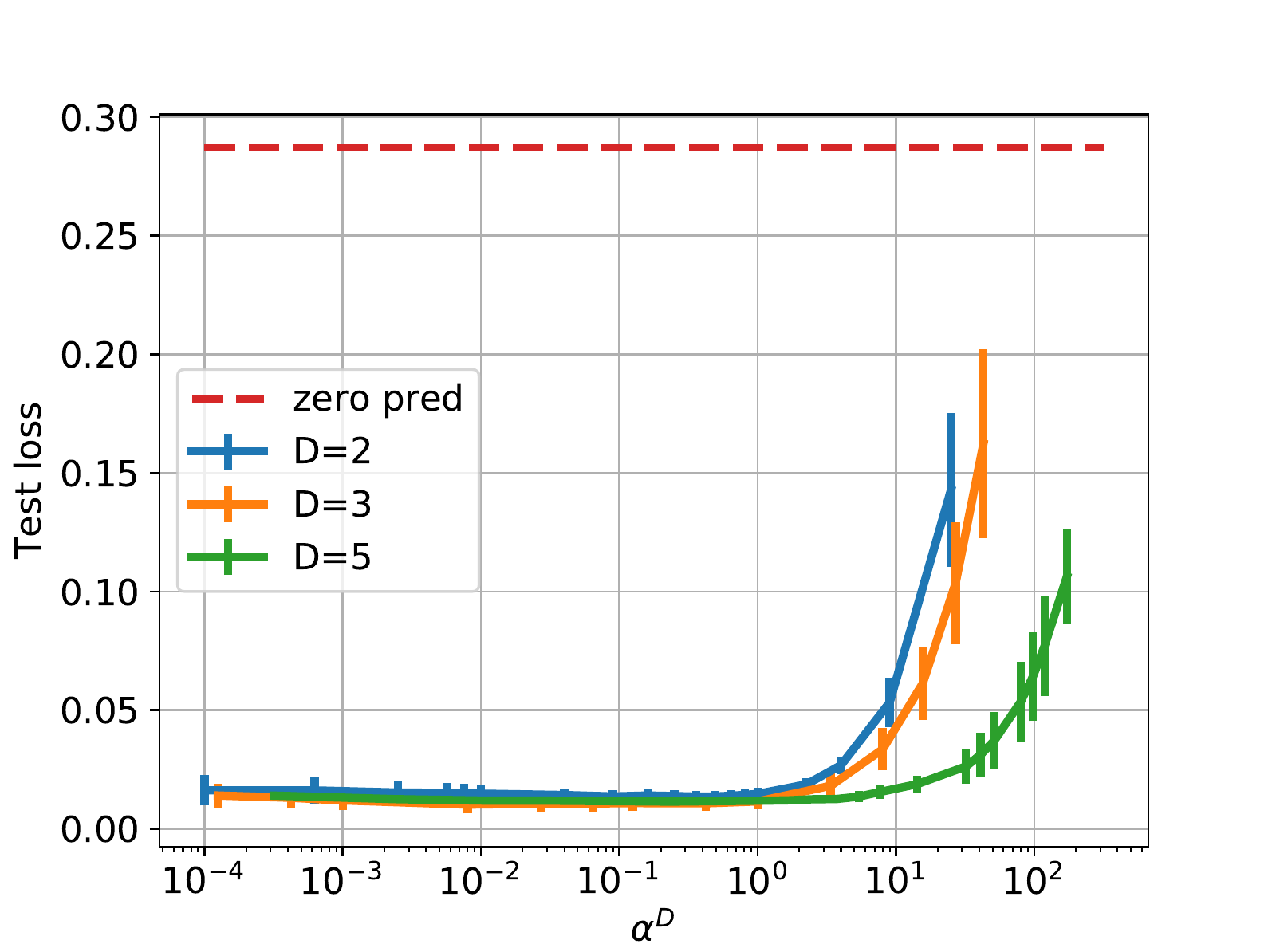}}%
\subfigure[\small Grad distance vs scale]{\label{fig:grad_depth_ad}%
      \includegraphics[width=0.4\textwidth]{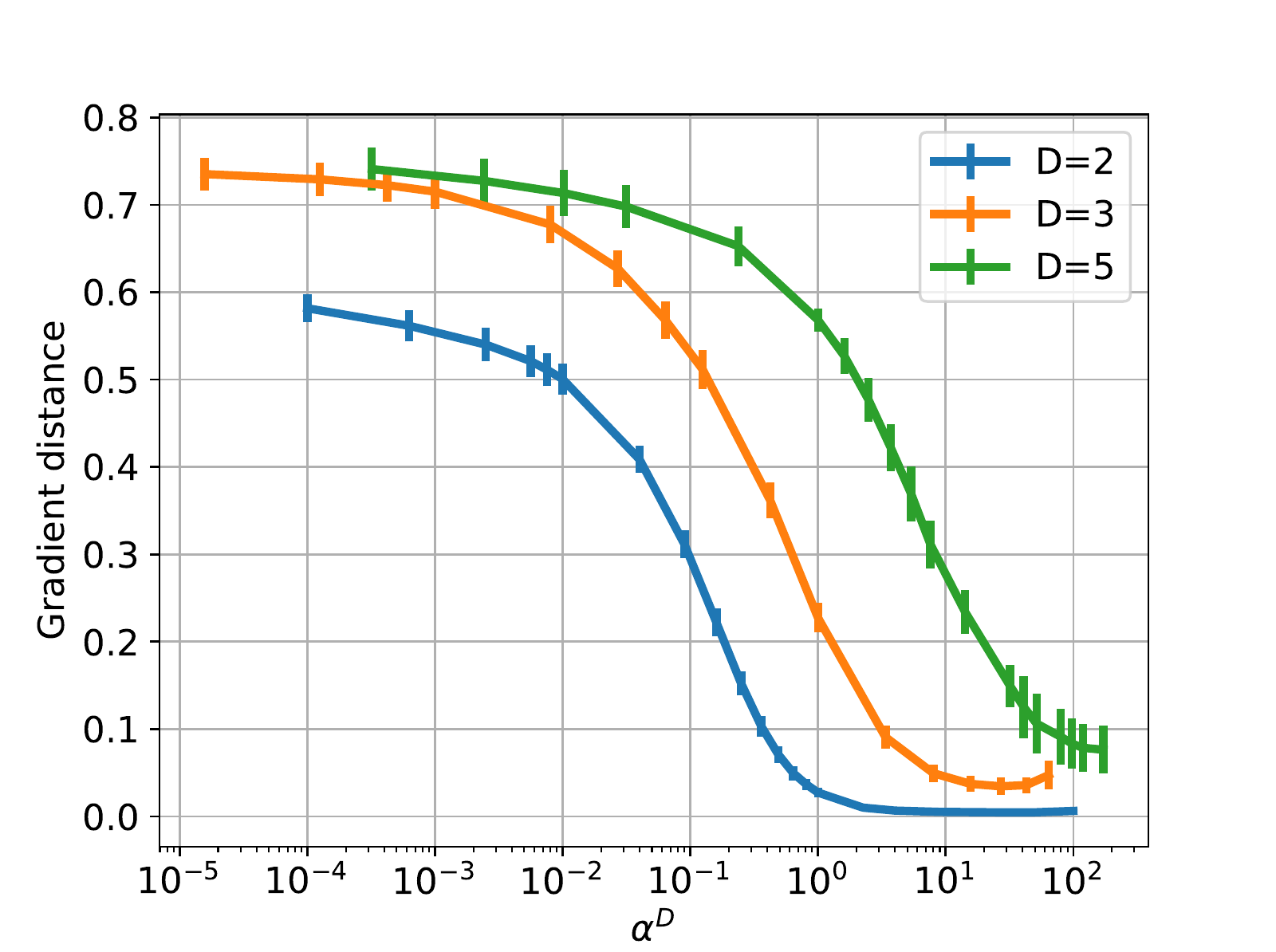}}\\
\subfigure[\small MNIST test error vs scale]{\label{fig:mnist_test_error}%
      \includegraphics[width=0.4\linewidth]{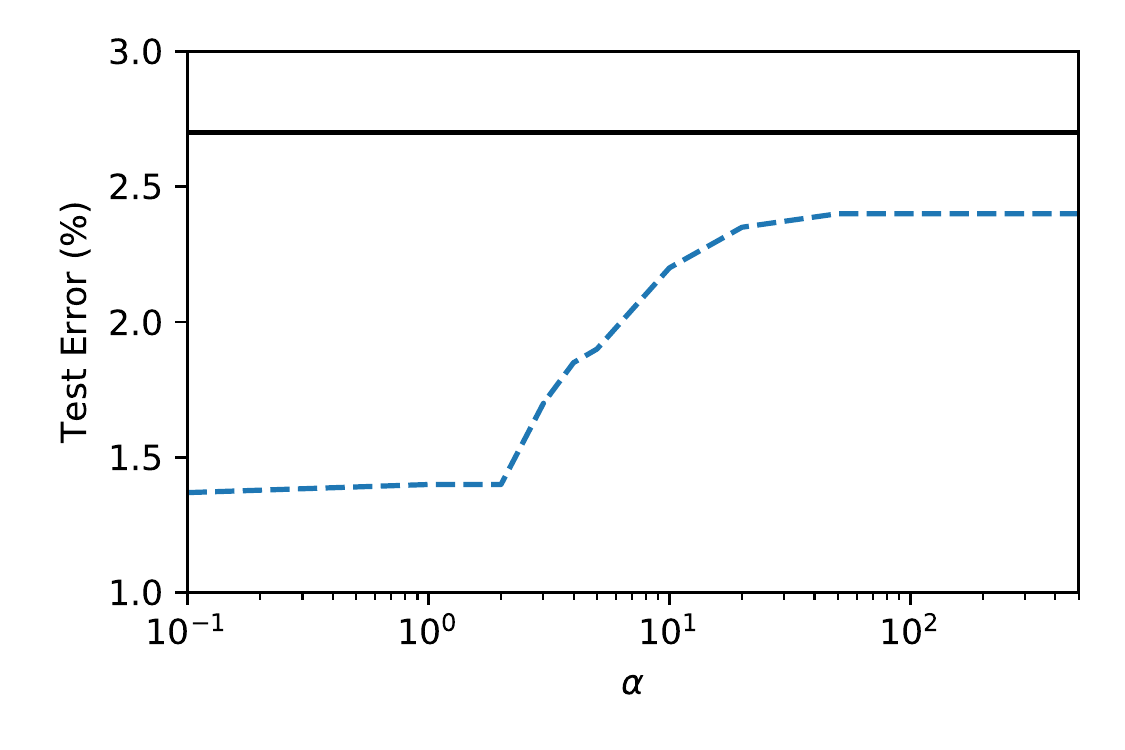}}%
\subfigure[\small CIFAR10 test error vs scale]{\label{fig:vgg11_2}%
      \includegraphics[width=0.4\textwidth]{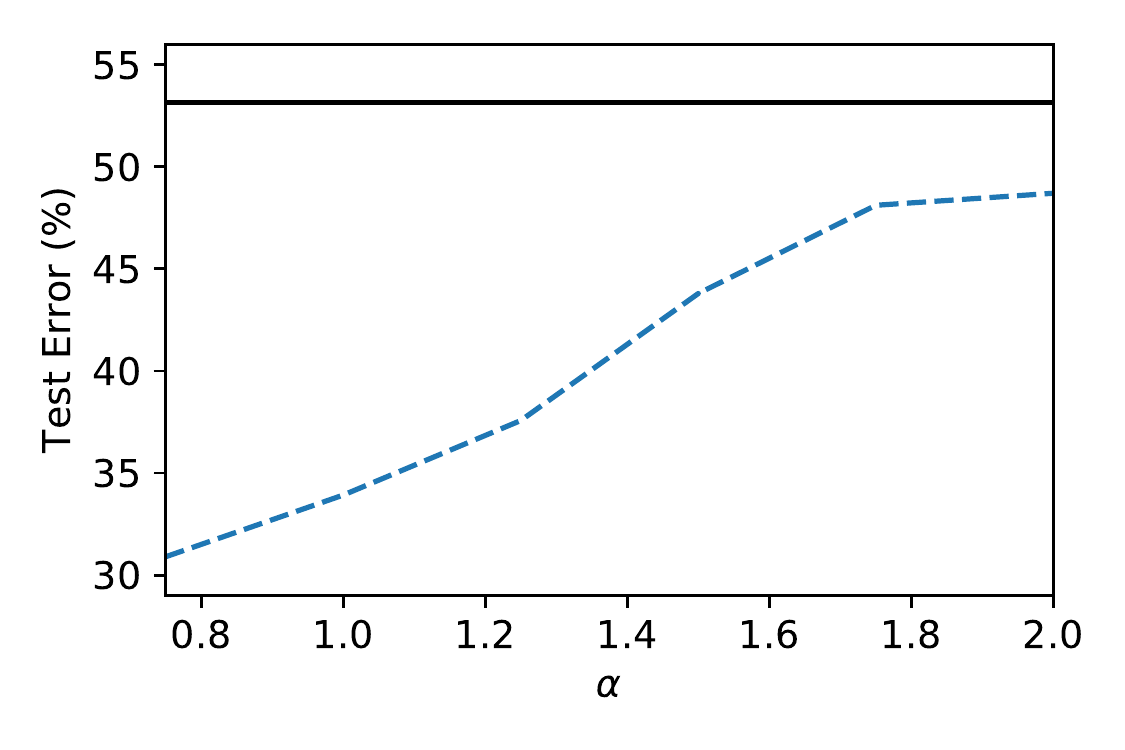}}%
\caption{\small \textbf{Synthetic Data}: We generated a small regression training set in $\R^2$ by sampling 10 points uniformly from the unit circle, and labelling them with a 1 hidden layer teacher network with 3 hidden units. We trained depth-$D$, ReLU networks with 30 units per layer with squared loss using full GD and a small stepsize $0.01$. The weights of the network are set using the Uniform He initialization, and then multiplied by $\alpha$. The model is trained until $\approx 0$ training loss. Shown in (a) and (b) are the test error and the ``grad distance'' vs.~the depth-adjusted scale of the initialization, $\alpha^D$. The grad distance is the cosine distance between the tangent kernel feature map at initialization versus at convergence. \textbf{MNIST}: We trained a depth-2, 5000 hidden unit ReLU network with cross-entropy loss using SGD until it reached 100\% training accuracy. The stepsizes were optimally tuned w.r.t.~validation error for each $\alpha$ individually. In (c), the dashed line shows the test error of the resulting network vs.~$\alpha$ and the solid line shows the test error of the explicitly trained kernel predictor. \textbf{CIFAR10}: We trained a VGG11-like deep convolutional network with cross-entropy loss using SGD and a small stepsize $10^{-4}$ for 2000 epochs; all models reached 100\% training accuracy. In (d), the dashed line shows the final test error vs. $\alpha$. The solid line shows the test error of the explicitly trained kernel predictor. See Appendix \ref{app:nn-details} for further details about all of the experiments.}
\vspace{-5mm}
\end{figure}

\paragraph{Univariate 2-layer ReLU Networks}
\begin{figure}
\subfigure[\tiny $\bw_0=(\bw_{1}^{0},\b{b}_{1}^{0},\bw_{2}^{0})$]{\includegraphics[width=0.33\textwidth]{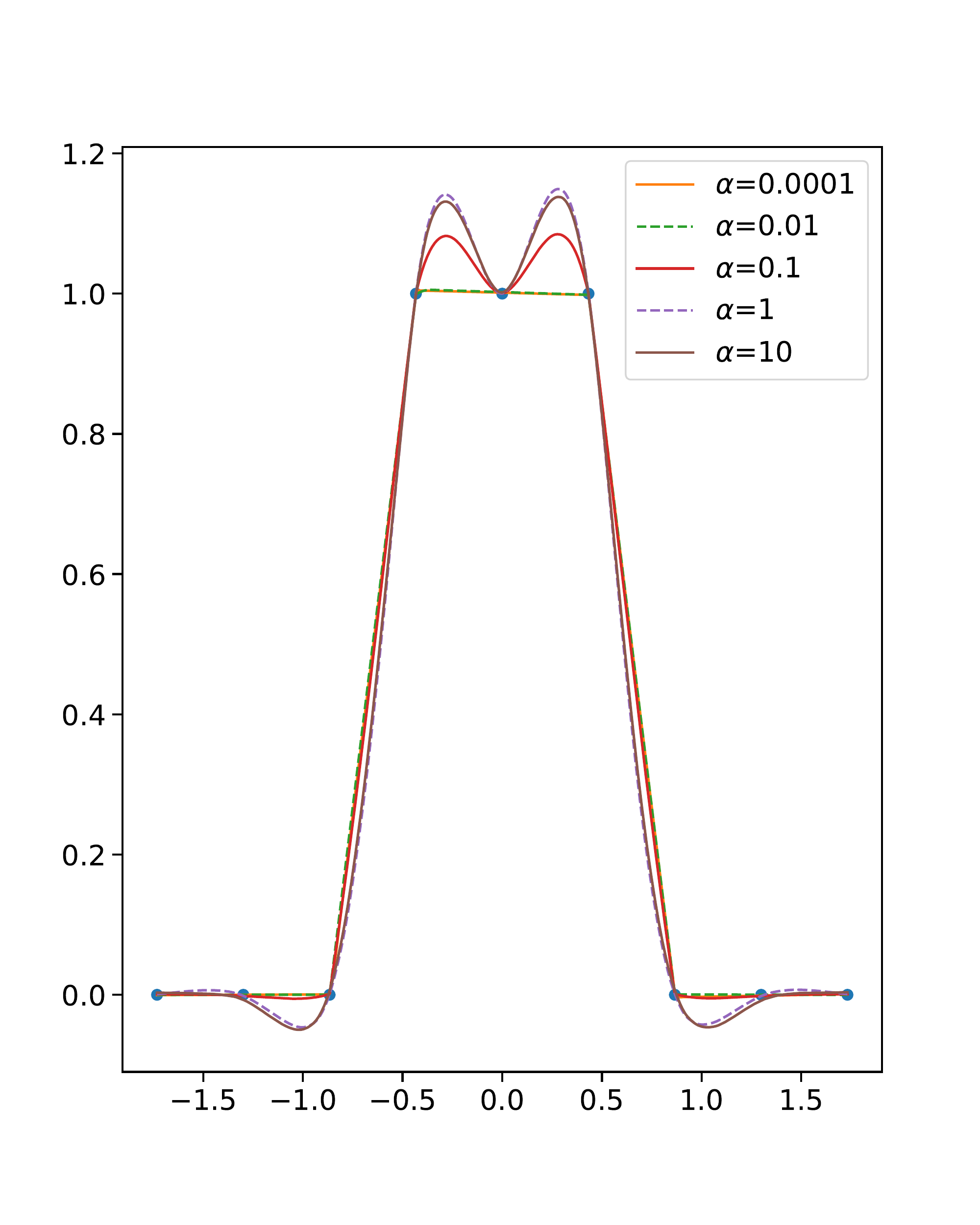}}%
\subfigure[\tiny $\bw_0=(k^{-0.5}\bw_{1}^{0},k^{-0.5}\b{b}_{1}^{0},k^{0.5}\bw_{2}^{0})$]{\includegraphics[width=0.33\textwidth]{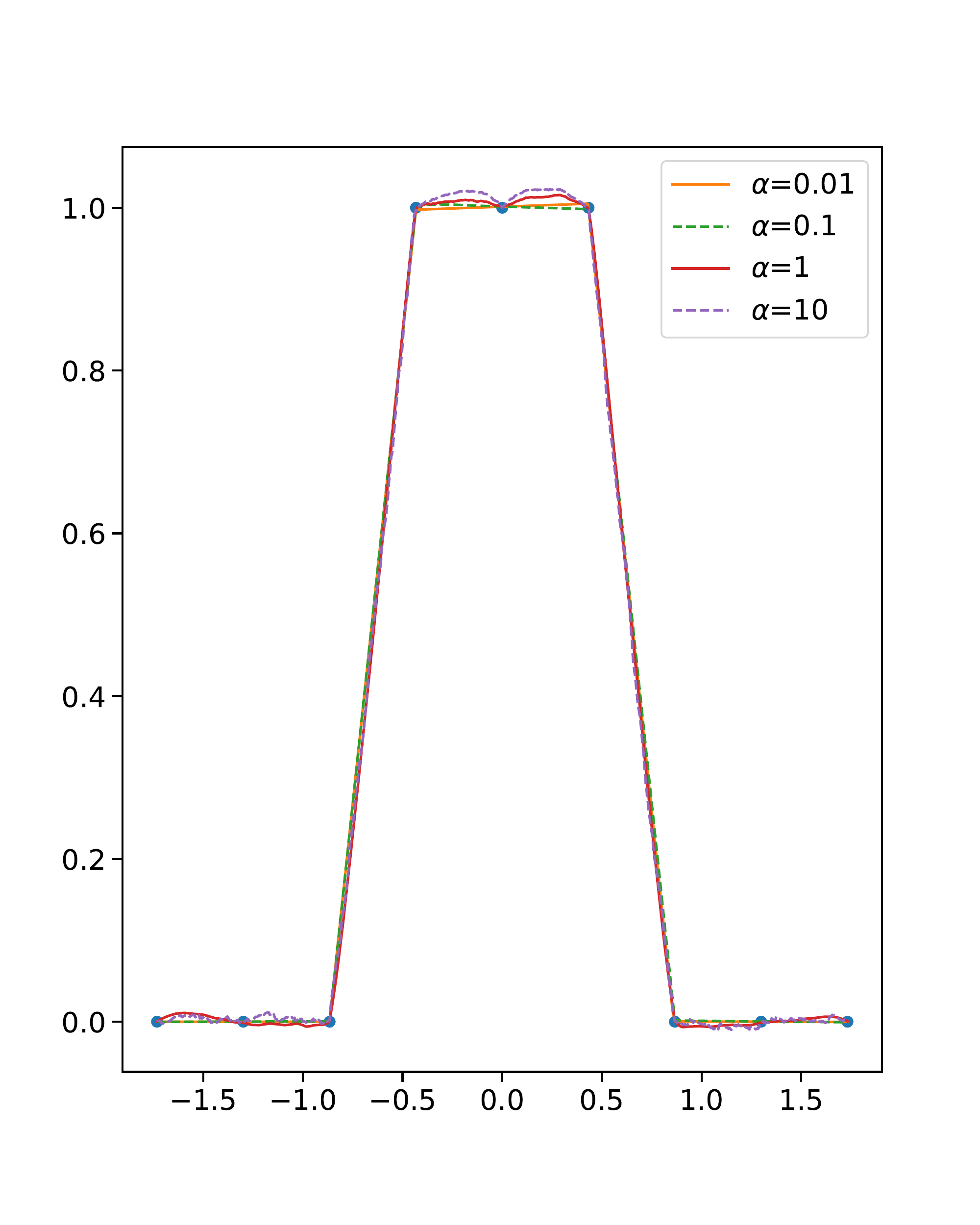}}%
\subfigure[\tiny $\bw_0=(k^{-0.25}\bw_{1}^{0},k^{-0.25}\b{b}_{1}^{0},k^{0.25}\bw_{2}^{0})$]{\includegraphics[width=0.33\textwidth]{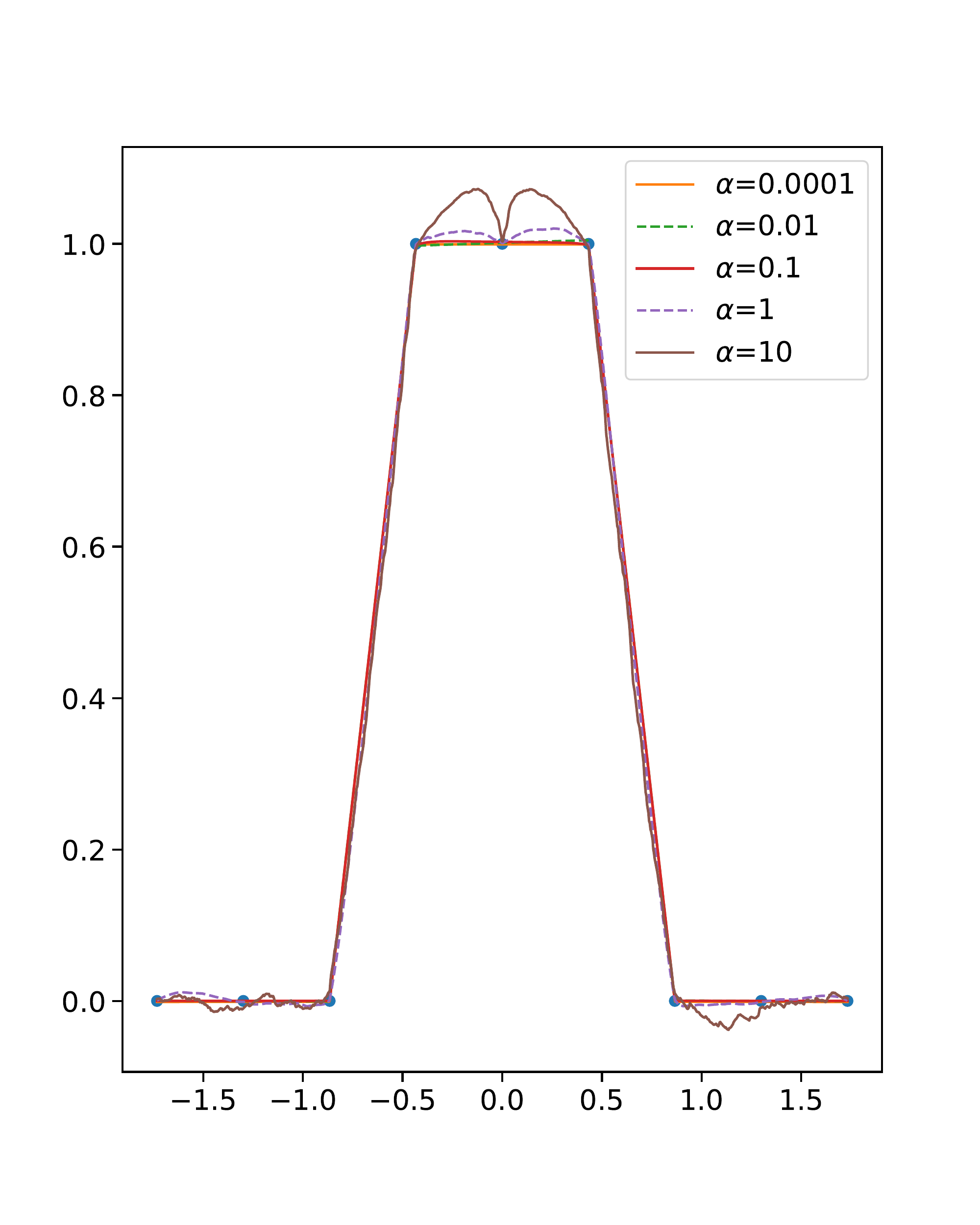}}
\caption{\small Each subplot has functions learned by univariate ReLU network of width $k=10000$ with initialization $\bw(0)=\alpha\bw_0$, for some fixed $\bw_0$. In Figure $(a)$, $\bw_0$ are fixed by a standard initialization scheme as  $\bw_{1}^{0},\b{b}_{1}^{0}\sim \mathcal{N}(0,1)$ and $\bw_{2}^{0}\sim \mathcal{N}(0,\sqrt{2/k})$ for second layer. In $(b)$ and $(c)$, the relative scaling of the layers in $\bw_0$ is  changed without changing the scale of the output.\label{fig:1drelu}}
\vspace{-5mm}
\end{figure}

Consider a two layer width-$k$ ReLU network with  univariate input $x\in\R$ given by  $f((\bw,\b{b}),x)=\bw_2\sigma(\bw_1 x+\b{b}_1)+\b{b}_2$  where $\bw_1\in\R^{k\times 1},\bw_2\in\R^{1\times k}$ and $\b{b}_1\in\R^{k\times 1}, \b{b}_2\in \R$ are the weights and bias parameters, respectively, for the two layers. This setting is the simplest non-linear model which has been explored in detail both theoretically and empirically \citep{savarese2019infinite,williams2019gradient}. \citet{savarese2019infinite} show that for an infinite width, univariate ReLU network, the minimal $\ell_2$ parameter norm solution for a 1D regression problem, \ie~$\argmin_{\bw} \|\bw\|_2^2 \;\text{s.t.}\;\forall n,\,f((\bw,\b{b}),x_n)=y_n$ is given by a linear spline interpolation.  We hypothesize that this bias to corresponds to the rich limit in training univariate $2$-layer networks. In  contrast, \cite[Theorem 5 and Corollary 6,][]{williams2019gradient} shows that the kernel limit corresponds to different cubic spline interpolations, where the exact form of interpolation depends on the relative scaling of weights across the layers. We explored the transition between the two regimes as the scale of initialization changes. We again consider a unbiased model as suggested by \citet{chizat2018note} to avoid large outputs for large $\alpha$. 

In Figure~\ref{fig:1drelu}, we fix the width of the network to $k=10000$ and empirically plot the functions learned with different initialization $\bw(0)=\alpha\bw_0$ for fixed $\bw_0$. Additionally, we also demonstrate the effect of changing $\bw_0$, by relatively scaling of layers \textit{without changing the output} as shown in Figure~\ref{fig:1drelu}-(b,c). First, as we suspected, we see that the rich limit of $\alpha\to0$ indeed corresponds to linear spline interpolation and is indeed independent of the specific choice $\bw_0$ as long as the outputs are unchanged. In contrast, as was also observed by \citep{williams2019gradient}, the kernel limit (large $\alpha$), does indeed change as the relative scaling of the two layers changes, leading to what resembles different cubic splines.

\paragraph{Acknowledgements}
This work was supported by NSF Grant 1764032. BW is supported by a Google PhD Research Fellowship. DS was supported by the Israel Science Foundation (grant No. 31/1031). This work was partially done while the authors were visiting the Simons Institute for the Theory of Computing.

\bibliographystyle{plainnat}
\bibliography{main}

\appendix

\section{Neural Network Experiment Details}\label{app:nn-details}
Here, we provide further details about the neural network experiments.

\paragraph{Synthetic Experiments}

We construct a synthetic training set with $N=10$ points drawn uniformly from the unit circle in $\mathbb{R}^2$ and labelled by a teacher model with 1 hidden layer of 3 units.
We train fully connected ReLU networks with depths 2, 3, and 5 with 30 units per layer to minimize the square loss using full gradient descent with constant stepsize $0.01$ until the training loss is below $10^{-9}$. We use Uniform He initialization for the weights and then multiply them by $\alpha$.

Here, we describe the details of the neural network implementations for the MNIST and CIFAR10 experiments.

\paragraph{MNIST}
Since our theoretical results hold for the squared loss and gradient flow dynamics, here we empirically assess whether different regimes can be observed when training neural networks following standard practices. 

We train a fully-connected neural network with a single hidden layer composed of 5000 units on the MNIST dataset, where weights are initialized as $\alpha \bw_0$,  $\bw_0 \sim \mathcal N\left(0, \sqrt \frac{2}{n_{in}}\right)$, $n_{in}$ denoting the number of units in the previous layer, as suggested by \citet{he2015delving}. SGD with a batch size of $256$ is used to minimize the cross-entropy loss over the $60000$ training points, and error over the $10000$ test samples are used as measure of generalization. For each value of $\alpha$, we search over learning rates $(0.5, 0.01, 0.05, \dots)$ and use the one which resulted in best generalization.

There is a visible phase transition in Figure \ref{fig:mnist_test_error} in terms of generalization ($\approx 1.4\%$ error for $\alpha \leq 2$, and $\approx 2.4\%$ error for $\alpha \geq 50$), even though every network reached $100\%$ training accuracy and less than $10^{-5}$ cross-entropy loss. The black line indicates the test error ($2.7\%$) when training only the output layer of the network, as a proxy for the performance of a linear predictor with features given by a fixed, randomly-initialized hidden layer.

\paragraph{CIFAR10}
We trained a VGG11-like architecture, which is as follows: 64-M-128-M-256-256-M-512-512-M-512-512-M-FC (numbers represent the number of channels in a convolution layers with no bias, M is a maxpooling layer, and FC is a fully connected layer).
Weights were initialized using Uniform He initialization multiplied by $\alpha$. No data augmentation was used, and training done using SGD with batch size of $128$ and learning rate of $0.0001$.  All experiments ran for $2000$ epochs, and reached $100\%$ train accuracy except when training only the last layer, which reached 50.38\% train accuracy with $\textrm{LR}=0.001$ (chosen after hyperparameter tuning).

In addition, to approximate the test error in the kernel regime, we experimented with freezing the bottom layers and only training the output layer for both datasets (the solid lines in Figures \ref{fig:mnist_test_error} and \ref{fig:vgg11_2}).

Figure \ref{fig:vgg11_1} illustrates some of the optimization difficulties that arise from using smaller $\alpha$ as discussed in Section \ref{sec:depth-2-model}.

\begin{figure}
    \centering
    \includegraphics[width=0.6\textwidth]{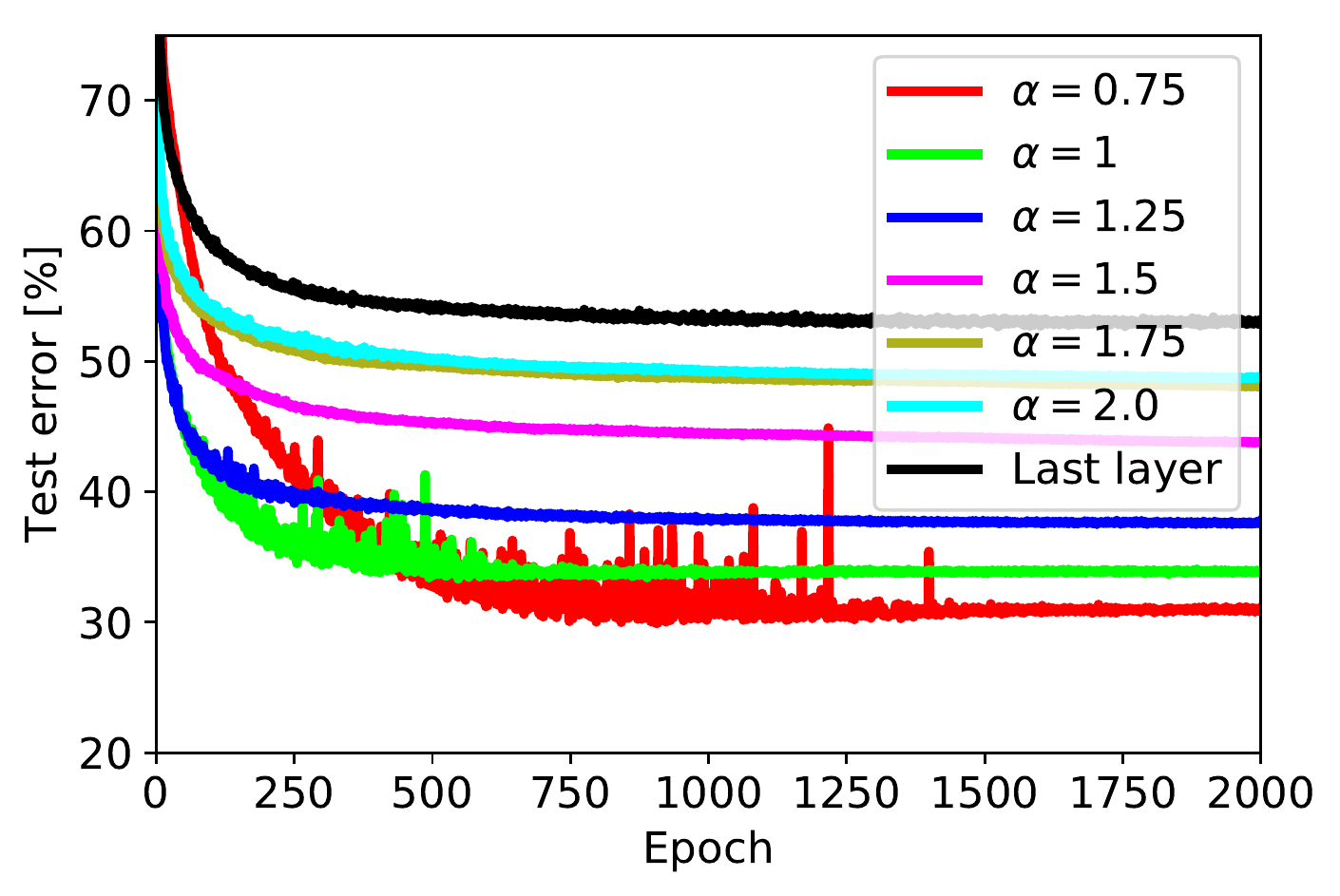}
    \caption{Training curves for the CIFAR10 experiments}
    \label{fig:vgg11_1}
\end{figure}

\section{Diagonal Linear Neural Networks}\label{app:diagonal-nn}
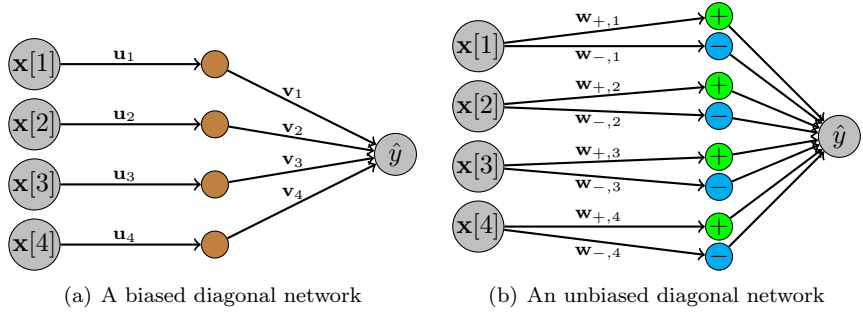
\begin{figure}
\centering
\subfigure[A biased diagonal network]{
\begin{tikzpicture}[scale=1.6]
    \vertexinput(p04) at (-2,0.25) {$\x[4]$};
    \vertexinput(p05) at (-2,0.75) {$\x[3]$};
    \vertexinput(p06) at (-2,1.25) {$\x[2]$};
    \vertexinput(p07) at (-2,1.75) {$\x[1]$};
    \vertexb(p4) at (-0.5,0.25) {$\phantom{-}$};
    \vertexb(p5) at (-0.5,0.75) {$\phantom{-}$};
    \vertexb(p6) at (-0.5,1.25) {$\phantom{-}$};
    \vertexb(p7) at (-0.5,1.75) {$\phantom{-}$};
    \vertexinput(p15) at (1,1) {\rule[-5pt]{0pt}{15pt}$\hat{y}$};
    \node[fill=white](l1) at (-1.25, 0.32) {\scriptsize $\bu_{4}$};
    \node[fill=white](l2) at (-1.25, 0.82) {\scriptsize $\bu_{3}$};
    \node[fill=white](l3) at (-1.25, 1.32) {\scriptsize $\bu_{2}$};
    \node[fill=white](l4) at (-1.25, 1.82) {\scriptsize $\bu_{1}$};
    \node[fill=white](l5) at (0.15, 0.677) {\scriptsize $\bv_{4}$};
    \node[fill=white](l6) at (0.15, 0.94) {\scriptsize $\bv_{3}$};
    \node[fill=white](l7) at (0.15, 1.21) {\scriptsize $\bv_{2}$};
    \node[fill=white](l8) at (0.15, 1.51) {\scriptsize $\bv_{1}$};
\tikzset{EdgeStyle/.style={->}}
    \Edge(p04)(p4);
    \Edge(p05)(p5)
    \Edge(p06)(p6)
    \Edge(p07)(p7)
    \Edge(p4)(p15)
    \Edge(p5)(p15)
    \Edge(p6)(p15)
    \Edge(p7)(p15)
\end{tikzpicture}\label{fig:biased}
}~
\subfigure[An unbiased diagonal network]{
\begin{tikzpicture}[scale=1.6]
    \vertexinput(p04) at (-2,0.25) {$\x[4]$};
    \vertexinput(p05) at (-2,0.75) {$\x[3]$};
    \vertexinput(p06) at (-2,1.25) {$\x[2]$};
    \vertexinput(p07) at (-2,1.75) {$\x[1]$};
    \vertexn(p4) at (0,0) {$-$};
    \vertexn(p5) at (0,0.58) {$-$};
    \vertexn(p6) at (0,1.17) {$-$};
    \vertexn(p7) at (0,1.75) {$-$};
    \vertexp(p41) at (0,0.25) {$+$};
    \vertexp(p51) at (0,0.83) {$+$};
    \vertexp(p61) at (0,1.42) {$+$};
    \vertexp(p71) at (0,2) {$+$};
    \vertexinput(p15) at (1,1) {\rule[-5pt]{0pt}{15pt}$\hat{y}$};
    \node[fill=white](l1) at (-1, 0.03) {\scriptsize $\bw_{-,4}$};
    \node[fill=white](l2) at (-1, 0.33) {\scriptsize $\bw_{+,4}$};
    \node[fill=white](l3) at (-1, 0.58) {\scriptsize $\bw_{-,3}$};
    \node[fill=white](l4) at (-1, 0.86) {\scriptsize $\bw_{+,3}$};
    \node[fill=white](l5) at (-1, 1.12) {\scriptsize $\bw_{-,2}$};
    \node[fill=white](l6) at (-1, 1.41) {\scriptsize $\bw_{+,2}$};
    \node[fill=white](l7) at (-1, 1.66) {\scriptsize $\bw_{-,1}$};
    \node[fill=white](l8) at (-1, 1.96) {\scriptsize $\bw_{+,1}$};
\tikzset{EdgeStyle/.style={->}}
    \Edge(p04)(p4);
    \Edge(p05)(p5)
    \Edge(p06)(p6)
    \Edge(p07)(p7)
    \Edge(p04)(p41)
    \Edge(p05)(p51)
    \Edge(p06)(p61)
    \Edge(p07)(p71)
    \Edge(p4)(p15)
    \Edge(p5)(p15)
    \Edge(p6)(p15)
    \Edge(p7)(p15)
    \Edge(p41)(p15)
    \Edge(p51)(p15)
    \Edge(p61)(p15)
    \Edge(p71)(p15)
\end{tikzpicture}\label{fig:unbiased}
}
\vspace{-8pt}
\small \caption{\small Diagonal linear networks.
}\vspace{-8pt}
\label{fig:diagonal_network}
\end{figure}
Consider the model $f((\b{u},\b{v}),\bx)=\sum_i \b{u}_i \b{v}_i \bx_i$ as described in Section \ref{sec:depth-2-model}, and suppose that $\abs{\bu_i(0)} = \abs{\bv_i(0)}$, \ie~the input and output weights for each hidden unit are initialized to have the same magnitude. Now, consider the gradient flow dynamics on the weights when minimizing the squared loss:
\begin{align}
\frac{d}{dt}\abs{\bu(t)} 
&= -\textrm{sign}(\bu(t))\dot{\bu}(t) \\
&= -2\sum_{n=1}^N \prn*{\sum_{i=1}^d \b{u}_i(t) \b{v}_i(t) \bx^{(n)}_i - \by^{(n)}}^2 \textrm{sign}(\bu(t)) \circ \bv(t) \circ \bx^{(n)}
\end{align}
where $a \circ b$ denotes the element-wise multiplication of vectors $a$ and $b$, and $\textrm{sign}(a)$ is the vector whose $i$th entry is $\textrm{sign}(a_i)$.
Similarly, 
\begin{align}
\frac{d}{dt}\abs{\bv(t)} 
&= -\textrm{sign}(\bv(t))\dot{\bv}(t) \\
&= -2\sum_{n=1}^N \prn*{\sum_{i=1}^d \b{u}_i(t) \b{v}_i(t) \bx^{(n)}_i - \by^{(n)}}^2 \textrm{sign}(\bv(t)) \circ \bu(t) \circ \bx^{(n)}
\end{align}
Therefore, if $\abs{\bu_i(0)} = \abs{\bv_i(0)}$, then $\textrm{sign}(\bu_i(0))\bv_i(0) = \textrm{sign}(\bv_i(0))\bu_i(0)$, so the dynamics on $\abs{\bu_i}$ and $\abs{\bv_i}$ are the same, and their magnitudes will remain equal throughout training. Furthermore, the signs of the weights cannot change, since $\abs{\bu_i(t)} = \abs{\bv_i(t)} = 0$ implies $\dot{\bu_i}(t) = \dot{\bv_i}(t) = 0$.

\section{Proof of Theorem \ref{thm:gf-gives-min-q}} \label{app:grad-flow-minimizes-q}

We prove Theorem \ref{thm:gf-gives-min-q} using the general approach outlined in Section \ref{sec:depth-2-model}.
\gradflowQminimizer*
\begin{proof}
We begin by calculating the gradient flow dynamics on $\bw$, since the linear predictor $\bbeta_{\alpha,\bw_0}^\infty$ is given by $F$ applied to the limit of the gradient flow dynamics on $\bw$. Recalling that $\tilde{X} = \begin{bmatrix} X & -X \end{bmatrix}$, 
\begin{equation}
    \dot{\bw}_\alpha(t) = -\nabla L(\bw_\alpha(t)) = -\nabla\pars{\norm{\tilde{X}\bw_\alpha(t)^2 - y}_2^2} = -2 \tilde{X}^\top r_\alpha(t) \circ \bw_\alpha(t)
\end{equation}
where the residual $r_\alpha(t) \triangleq \tilde{X}\bw_\alpha(t)^2 - y$, and $a \circ b$ denotes the element-wise product of $a$ and $b$. It is easily confirmed that these dynamics have a solution:
\begin{equation}
    \bw_\alpha(t) = \bw_\alpha(0) \circ \expo{-2 \tilde{X}^\top \int_{0}^t r_\alpha(s) ds}
\end{equation}
Since $\bw_{\alpha,+}(0) = \bw_{\alpha,-}(0) = \alpha \bw_0$
we can then express $\bbeta_{\alpha,\bw_0}(t)$ as
\begin{equation}
\begin{split}
\bbeta_{\alpha,\bw_0}(t) 
&= \bw_{\alpha,+}(t)^2 - \bw_{\alpha,-}(t)^2 \\
&= \alpha^2\bw_0^2\circ\parens{\expo{-4 X^\top \int_{0}^t r_\alpha(s) ds} - \expo{4 X^\top \int_{0}^t r_\alpha(s) ds}} \\
&= 2\alpha^2\bw_0^2\circ\sinh\left(-4 X^\top \int_{0}^t r_\alpha(s) ds \right) \label{eq:thm-1-beta-solutions}
\end{split}
\end{equation} 
Supposing also that $\bbeta^\infty_{\alpha,\bw_0}$ is a global minimum with zero error, \ie~$X\bbeta^\infty_{\alpha,\bw_0} = \by$.
Thus, 
\begin{equation}
\begin{aligned}
X\bbeta^\infty_{\alpha,\bw_0} &= \by \\
\bbeta^\infty_{\alpha,\bw_0} &= b_\alpha(X^\top \nu)
\end{aligned}
\end{equation}
for $b_\alpha(z) = 2\alpha^2\bw_0^2 \circ \sinh(z)$ and $\nu = -4 \int_{0}^\infty r_\alpha(s) ds$. Following our general approach detailed in Section \ref{sec:depth-2-model}, we conclude
\begin{equation}
\nabla Q_{\alpha,\bw_0}(\bbeta) = b_\alpha^{-1}(\bbeta) = \arcsinh\left(\frac{1}{2\alpha^2\bw_0^2}\circ\bbeta\right)
\end{equation}
where we write $1/\bw_0$ to denote the vector whose $i$th element is $1/\bw_{0,i}$.
Integrating this expression, we have that 
\begin{equation}
Q_{\alpha,\bw_0}(\bbeta) 
= \sum_{i=1}^d \alpha^2\bw_{0,i}^2q\prn*{\frac{\bbeta_{i}}{\alpha^2\bw_{0,i}^2}}
\end{equation}
where
\begin{equation}
q(z) = \int_{0}^{z} \arcsinh\prn*{\frac{t}{2}}dt = 2 - \sqrt{4 + z^2} + z\arcsinh\prn*{\frac{z}{2}}
\end{equation}
\end{proof}

\section{Proof of Theorem \ref{thm:alpha-for-l1-l2-solution}}\label{app:q-vs-L1-and-L2}
\begin{restatable}{lemma}{approximatelonenorm} \label{lem:small-alpha-l1-approximation}
For any $\bbeta \in \mathbb{R}^d$,
\[
\alpha \leq \alpha_1\parens{\epsilon, \norm{\bbeta}_1, d} := \min\crl*{1, \sqrt{\norm{\bbeta}_1}, \pars{2\norm{\bbeta}_1}^{-\frac{1}{2\epsilon}}, \expo{-\frac{d}{2\epsilon \norm{\bbeta}_1}}}
\]
guarantees that 
\[
\parens{1 - \epsilon} \norm{\bbeta}_1\leq \frac{1}{\ln (1/\alpha^2)}Q_\alpha(\bbeta) \leq \parens{1 + \epsilon}\norm{\bbeta}_1
\]
\end{restatable}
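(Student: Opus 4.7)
}
The plan is to directly estimate $Q_\alpha(\bbeta)=\alpha^2\sum_{i}q(\bbeta_i/\alpha^2)$ coordinatewise by analyzing the scalar function $q$ in two regimes and then partitioning the indices accordingly. The first step is to extract precise two-sided asymptotics for $q$. Writing $\arcsinh(z/2)=\log\!\big(z/2+\sqrt{1+z^2/4}\big)$ and using $\sqrt{4+z^2}=|z|+2/|z|+O(|z|^{-3})$, elementary manipulations give
\[
q(z)=\tfrac{z^2}{4}+O(z^4)\quad(|z|\le 1),\qquad q(z)=|z|\log|z|-|z|+O(1)\quad(|z|\ge 1),
\]
with explicit constants. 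Note $q$ is even and nonnegative, which I will use for the lower bound.

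Next I would split $\{1,\dots,d\}$ into the ``small'' coordinates $S^c=\{i:|\bbeta_i|<\alpha^2\}$ and the ``large'' ones $S=\{i:|\bbeta_i|\ge\alpha^2\}$. On $S^c$ the small-$z$ bound gives $\alpha^2 q(\bbeta_i/\alpha^2)=O(\bbeta_i^2/\alpha^2)=O(|\bbeta_i|)$, so the total contribution to $Q_\alpha(\bbeta)$ is at most $O(d\alpha^2)$; moreover $\sum_{i\in S^c}|\bbeta_i|\le d\alpha^2$, so dropping these indices costs at most $d\alpha^2$ in $\|\bbeta\|_1$. On $S$ the large-$z$ asymptotic gives, coordinate by coordinate,
\[
\alpha^2 q(\bbeta_i/\alpha^2)=|\bbeta_i|\log(1/\alpha^2)+|\bbeta_i|\log|\bbeta_i|-|\bbeta_i|+O(\alpha^2).
\]
Summing over $S$ and dividing by $\log(1/\alpha^2)$ yields
\[
\frac{Q_\alpha(\bbeta)}{\log(1/\alpha^2)}=\|\bbeta\|_1+\frac{\sum_{i\in S}|\bbeta_i|\log|\bbeta_i|}{\log(1/\alpha^2)}+O\!\left(\frac{\|\bbeta\|_1+d\alpha^2}{\log(1/\alpha^2)}\right)+O(d\alpha^2),
\]
where the second-to-last error absorbs the $-|\bbeta_i|$ term on $S$ and the dropped contributions on $S^c$.

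The final step is to cash in each of the four conditions on $\alpha$ against one of the error terms. Using $|\bbeta_i|\le\|\bbeta\|_1$ and the elementary bound $x\log(1/x)\le 1/e$ for $x\in(0,1]$, I can bound $|\sum_{i\in S}|\bbeta_i|\log|\bbeta_i||\le\|\bbeta\|_1\log(\max(\|\bbeta\|_1,e))+d/e$; then $\alpha\le(2\|\bbeta\|_1)^{-1/(2\epsilon)}$ gives $\log(1/\alpha^2)\ge\epsilon^{-1}\log(2\|\bbeta\|_1)$, which makes the $\|\bbeta\|_1\log\|\bbeta\|_1$ contribution $\le \tfrac{\epsilon}{3}\|\bbeta\|_1$; the condition $\alpha\le\exp(-d/(2\epsilon\|\bbeta\|_1))$ gives $d/\log(1/\alpha^2)\le\epsilon\|\bbeta\|_1$, which controls both $d/e$ and $d\alpha^2/\log(1/\alpha^2)$ (using $\alpha\le 1$); and $\alpha\le\sqrt{\|\bbeta\|_1}$ gives $\alpha^2\le\|\bbeta\|_1$, which kills the bare $d\alpha^2$ term via the same large-$\log(1/\alpha^2)$ trick. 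Combining these upper bounds proves the inequality $\le(1+\epsilon)\|\bbeta\|_1$; the matching lower bound uses $q\ge 0$ on $S^c$ together with the lower asymptotic on $S$, handled symmetrically. The main obstacle is the lower bound, since the ``small-coordinate leakage'' $\sum_{i\in S^c}|\bbeta_i|$ is present in $\|\bbeta\|_1$ but invisible in $Q_\alpha$; the exponential smallness $\alpha\le\exp(-d/(2\epsilon\|\bbeta\|_1))$ is precisely what is needed to absorb it, which explains the otherwise surprising exponential dependence on $d/\|\bbeta\|_1$ in the hypothesis.
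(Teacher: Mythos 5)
Your plan is correct in substance, and it reaches the lemma by a route organized differently from the paper's. The paper never splits coordinates: it writes $q$ exactly via $\arcsinh(z/2)=\ln\big(z/2+\sqrt{1+z^2/4}\big)$ and sandwiches every coordinate uniformly with $\abs{a}\le\sqrt{a^2+b^2}\le\abs{a}+\abs{b}$, which gives the upper bound $\sum_i\abs{\bbeta_i}\big(1+\ln(\abs{\bbeta_i}+\alpha^2)/\ln(1/\alpha^2)\big)$ and the lower bound $\sum_i\abs{\bbeta_i}\big(1+(\ln\abs{\bbeta_i}-1)/\ln(1/\alpha^2)\big)$; the small-coordinate leakage is then absorbed in one stroke by $\ln x\ge 1-1/x$, so each coordinate loses at most $1/\ln(1/\alpha^2)$ and the total loss $d/\ln(1/\alpha^2)$ is exactly what $\alpha\le\exp(-d/(2\epsilon\norm{\bbeta}_1))$ controls. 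Your version replaces this with a small/large split at $\abs{\bbeta_i}=\alpha^2$ plus two-sided asymptotics for $q$; the mechanism is the same, and your closing observation about why the exponential-in-$d/\norm{\bbeta}_1$ threshold is forced by the lower bound matches the paper's proof exactly. Two bookkeeping caveats if you execute the plan: (i) the inequalities you actually state give each error term a bound of roughly $\epsilon\norm{\bbeta}_1$, not $\epsilon\norm{\bbeta}_1/3$ (for instance $\norm{\bbeta}_1\ln\norm{\bbeta}_1/\ln(1/\alpha^2)\le\epsilon\norm{\bbeta}_1$ under $\alpha\le(2\norm{\bbeta}_1)^{-1/(2\epsilon)}$, with nothing to spare), and the $O(1)$ constant in $q(z)=\abs{z}\ln\abs{z}-\abs{z}+O(1)$ (it is $2+O(1/\abs{z})$) also enters; so as written you land at $(1\pm C\epsilon)\norm{\bbeta}_1$ for a small absolute constant $C$ and would need to tighten estimates or rescale $\epsilon$ to recover the stated thresholds verbatim --- harmless for how the lemma is used in Theorem \ref{thm:alpha-for-l1-l2-solution}, but not literally the stated constants, whereas the paper's uniform sandwich hits $(1\pm\epsilon)$ exactly. (ii) The bare $d\alpha^2$ term is not killed by $\alpha\le\sqrt{\norm{\bbeta}_1}$; you need the exponential condition again, via $d\alpha^2\le d\,e^{-d/(\epsilon\norm{\bbeta}_1)}\le\epsilon\norm{\bbeta}_1/e$ (using $xe^{-x}\le 1/e$). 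In the paper, the only role of $\alpha\le\sqrt{\norm{\bbeta}_1}$ is to ensure $\ln(\norm{\bbeta}_1+\alpha^2)\le\ln(2\norm{\bbeta}_1)$ in the upper bound.
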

\begin{proof}
We consider only the special case $\bw_0 = \ones$ and will drop the subscript for brevity.
First, we show that $Q_\alpha(\bbeta) = Q_\alpha(\abs{\bbeta})$. Observe that $g(x) = x\arcsin(x/2)$ is even because $x$ and $\arcsin(x/2)$ are odd. Therefore, 
\begin{equation}
\begin{aligned}
Q_\alpha(\bbeta) 
&= \alpha^2\sum_{i=1}^{d} 2 -\sqrt{4 + \frac{\bbeta_i^2}{\alpha^4}} + \frac{\bbeta_i}{\alpha^2}\arcsinh\parens{\frac{\bbeta_i}{2\alpha^2}} \\
&= \alpha^2\sum_{i=1}^{d} 2 -\sqrt{4 + \frac{\bbeta_i^2}{\alpha^4}} + g\parens{\frac{\bbeta_i}{\alpha^2}} \\
&= \alpha^2\sum_{i=1}^{d} 2 -\sqrt{4 + \frac{\abs{\bbeta_i}^2}{\alpha^4}} + g\parens{\abs*{\frac{\bbeta_i}{\alpha^2}}} \\
&= Q_\alpha(\abs{\bbeta})
\end{aligned}
\end{equation}
Therefore, we can rewrite
\begin{equation}
\begin{aligned}
\frac{1}{\ln (1/\alpha^2)}Q_\alpha(\bbeta)
&= \frac{1}{\ln (1/\alpha^2)}Q_\alpha(\abs{\bbeta}) \\
&= \sum_{i=1}^{d} \frac{2\alpha^2}{\ln (1/\alpha^2)} -\frac{\sqrt{4\alpha^4 + \bbeta_i^2}}{\ln (1/\alpha^2)} + \frac{\abs{\bbeta_i}}{\ln (1/\alpha^2)}\arcsinh\parens{\frac{\abs{\bbeta_i}}{2\alpha^2}} \\
&= \sum_{i=1}^{d} \frac{2\alpha^2}{\ln (1/\alpha^2)} -\frac{\sqrt{4\alpha^4 + \bbeta_i^2}}{\ln (1/\alpha^2)} + \frac{\abs{\bbeta_i}}{\ln (1/\alpha^2)}\ln\parens{\frac{\abs{\bbeta_i}}{2\alpha^2} + \sqrt{1 + \frac{\bbeta_i^2}{4\alpha^4}}} \\
&= \sum_{i=1}^{d} \frac{2\alpha^2}{\ln (1/\alpha^2)} -\frac{\sqrt{4\alpha^4 + \bbeta_i^2}}{\ln (1/\alpha^2)} + \abs{\bbeta_i} \parens{1 + \frac{\ln\parens{\frac{\abs{\bbeta_i}}{2} + \sqrt{\alpha^4 + \frac{\bbeta_i^2}{4}}}}{\ln (1/\alpha^2)}} \label{eq:q-alpha-to-l1-comparison}
\end{aligned}
\end{equation}
Using the fact that 
\begin{equation}\label{eq:simple-sqrt-inequality}
    \abs{a} \leq \sqrt{a^2 + b^2} \leq \abs{a} + \abs{b}
\end{equation}
we can bound for $\alpha < 1$
\begin{equation}
\begin{aligned}
\frac{1}{\ln (1/\alpha^2)}Q_\alpha(\bbeta)
&\leq \sum_{i=1}^{d} \frac{2\alpha^2}{\ln (1/\alpha^2)} -\frac{2\alpha^2}{\ln (1/\alpha^2)} + \abs{\bbeta_i} \parens{1 + \frac{\ln\parens{\frac{\abs{\bbeta_i}}{2} + \alpha^2 + \frac{\abs{\bbeta_i}}{2}}}{\ln (1/\alpha^2)}} \\
&= \sum_{i=1}^{d} \abs{\bbeta_i} \parens{1 + \frac{\ln\parens{\abs{\bbeta_i} + \alpha^2}}{\ln (1/\alpha^2)}} \\
&\leq \norm{\bbeta}_1\parens{1 + \max_{i\in[d]} \frac{\ln\parens{\abs{\bbeta_i} + \alpha^2}}{\ln (1/\alpha^2)}} \\
&\leq \norm{\bbeta}_1\parens{1 + \frac{\ln\parens{\norm{\bbeta}_1 + \alpha^2}}{\ln (1/\alpha^2)}} \\
\end{aligned}
\end{equation}
So, for any $\alpha \leq \min\crl*{1, \sqrt{\norm{\bbeta}_1}, \pars{2\norm{\bbeta}_1}^{-\frac{1}{2\epsilon}}}$, then
\begin{equation}
\begin{aligned}
\frac{1}{\ln (1/\alpha^2)}Q_\alpha(\bbeta)
&\leq \norm{\bbeta}_1\parens{1 + \frac{\ln\parens{\norm{\bbeta}_1 + \alpha^2}}{\ln (1/\alpha^2)}} \\
&\leq \norm{\bbeta}_1\parens{1 + \frac{\ln\parens{2\norm{\bbeta}_1}}{\ln (1/\alpha^2)}} \\
&\leq \norm{\bbeta}_1\parens{1 + \epsilon}
\end{aligned}
\end{equation}

On the other hand, using \eqref{eq:q-alpha-to-l1-comparison} and \eqref{eq:simple-sqrt-inequality} again, 
\begin{equation}
\begin{aligned}
\frac{1}{\ln (1/\alpha^2)}Q_\alpha(\bbeta)
&\geq \sum_{i=1}^{d} \frac{2\alpha^2}{\ln (1/\alpha^2)} -\frac{\abs{\bbeta_i} + 2\alpha^2}{\ln (1/\alpha^2)} + \abs{\bbeta_i} \parens{1 + \frac{\ln\parens{\abs{\bbeta_i}}}{\ln (1/\alpha^2)}} \\
&= \sum_{i=1}^{d} \abs{\bbeta_i} \parens{1 + \frac{\ln\parens{\abs{\bbeta_i}} - 1}{\ln (1/\alpha^2)}} 
\end{aligned}
\end{equation}
Using the inequality $\ln(x) \geq 1 - \frac{1}{x}$, this can be further lower bounded by
\begin{equation}
\begin{aligned}
\frac{1}{\ln (1/\alpha^2)}Q_\alpha(\bbeta)
&\geq  \sum_{i=1}^{d} \abs{\bbeta_i} - \frac{1}{\ln (1/\alpha^2)} \\
&= \norm{\bbeta}_1 - \frac{d}{\ln (1/\alpha^2)}
\end{aligned}
\end{equation}
Therefore, for any $\alpha \leq \expo{-\frac{d}{2\epsilon \norm{\bbeta}_1}}$ then
\begin{equation}
\frac{1}{\ln (1/\alpha^2)}Q_\alpha(\bbeta)
\geq \norm{\bbeta}_1\parens{1 - \epsilon}
\end{equation}
We conclude that for $\alpha \leq \min\crl*{1, \sqrt{\norm{\bbeta}_1}, \pars{2\norm{\bbeta}_1}^{-\frac{1}{2\epsilon}}, \expo{-\frac{d}{2\epsilon \norm{\bbeta}_1}}}$ that 
\begin{equation}
    \parens{1 - \epsilon}\norm{\bbeta}_1 \leq \frac{1}{\ln (1/\alpha^2)}Q_\alpha(\bbeta) \leq\parens{1 + \epsilon} \norm{\bbeta}_1
\end{equation}
\end{proof}

\begin{lemma}\label{lem:alpha-small-enough}
Fix any $\epsilon > 0$ and $d \geq \max\crl*{e, 12^{4\epsilon}}$. Then for any $\alpha \geq d^{-\frac{1}{4}-\frac{1}{8\epsilon}}$, $Q_{\alpha,\ones}(\bbeta) \not\propto \norm{\bbeta}_1$ in the sense that there exist vectors $v,w$ such that
\[
    \frac{Q_{\alpha,\ones}\pars{v}}{\norm{v}_1} \geq  (1+\epsilon) \frac{Q_{\alpha,\ones}\pars{w}}{\norm{w}_1}
\]
\end{lemma}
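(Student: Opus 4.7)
The plan is to exhibit two concrete test vectors and directly lower-bound the ratio of their normalized $Q_{\alpha,\ones}$ values. A natural choice is $v = e_1$ (a single standard basis vector) and $w = \ones/\sqrt{d}$ (the all-ones vector rescaled). Because $Q_{\alpha,\ones}$ is coordinate-separable, for any vector whose nonzero entries are equal the normalized cost has a clean form in terms of the auxiliary function $\phi(u) \defeq q(u)/u$; one computes
\[
\frac{Q_{\alpha,\ones}(v)}{\|v\|_1} = \phi\!\left(\tfrac{1}{\alpha^2}\right) \quad\text{and}\quad \frac{Q_{\alpha,\ones}(w)}{\|w\|_1} = \phi\!\left(\tfrac{1}{\sqrt{d}\,\alpha^2}\right).
\]
Since $q$ is convex with $q(0)=0$ (indeed $q'(u)=\arcsinh(u/2)$ and $q''(u)=(4+u^2)^{-1/2}>0$), $\phi$ is increasing on $(0,\infty)$, so the ratio automatically exceeds $1$; the real task is the quantitative strict improvement to $1+\epsilon$.

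The next step is to record two asymptotics for $\phi$ that come directly from the closed form $q(z) = 2-\sqrt{4+z^2}+z\arcsinh(z/2)$: for small $u$, a Taylor expansion at $0$ gives $\phi(u) = u/4 + O(u^3)$; for large $u$, using $\arcsinh(u/2)=\log u - \log 2 + O(1/u^2)$ together with $\sqrt{4+u^2}=u+O(1/u)$ yields $\phi(u) = \log u - 1 + O(1/u)$ after absorbing constants. These are the same linear-and-logarithmic regimes of $q$ that drive the kernel/rich dichotomy in Section \ref{sec:depth-2-model}, and verifying them is routine.

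Finally I would split into two cases depending on how large $\alpha$ is. If $\alpha$ is large enough that $1/(\sqrt{d}\,\alpha^2)$ is at most a small constant, then $\phi(1/(\sqrt{d}\,\alpha^2))$ is of order $1/(\sqrt{d}\,\alpha^2)$ while $\phi(1/\alpha^2)$ is at least a constant, so the ratio is of order at least $\sqrt{d}$, which exceeds $1+\epsilon$ under $d\geq 12^{4\epsilon}$. In the complementary regime — where $1/(\sqrt{d}\,\alpha^2)$ is large enough (say $\geq 12$) for the logarithmic asymptotic to apply — both arguments of $\phi$ sit in the log regime, and the ratio is well approximated by $(\log(1/\alpha^2)-1)/(\log(1/(\sqrt{d}\,\alpha^2))-1)$. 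Substituting the hypothesis $\alpha^2 \geq d^{-1/2 - 1/(4\epsilon)}$ into this expression and simplifying algebraically yields the bound $\geq 1+\epsilon$. The main obstacle is the bookkeeping: one must precisely track the $O(1)$ additive error in the logarithmic regime and the multiplicative constant in the quadratic regime, and match them to the stated exponent $-\tfrac14-\tfrac{1}{8\epsilon}$. In fact, the two conditions $d\geq e$ and $d\geq 12^{4\epsilon}$ are precisely what provides enough slack to ensure both that some $\alpha$ satisfying $\alpha\geq d^{-1/4-1/(8\epsilon)}$ still lands in the log regime (forcing $d\geq 12^{4\epsilon}$), and that the constants behave correctly.
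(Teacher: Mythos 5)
Your plan is, at its core, the paper's own argument: the paper also takes $v=e_1$ and $w=\mathbf{1}_d/\nrm{\mathbf{1}_d}_2$, reduces the claim to comparing $c\alpha^2 q\pars{\tfrac{1}{c\alpha^2}}$ (your $\phi(1/(c\alpha^2))$) at $c=1$ versus $c=\sqrt d$, and uses exactly the two-sided estimate $-1+\ln\pars{\tfrac1{c\alpha^2}}\le c\alpha^2 q\pars{\tfrac1{c\alpha^2}}\le 3c\alpha^2-1+\ln\pars{\tfrac1{c\alpha^2}}$, i.e.\ your log-regime asymptotic with explicit error; your algebra in the log regime (requiring $\ln(1/\alpha^2)\le 1+\tfrac{1+\epsilon}{2\epsilon}\ln d$, which the hypothesis $\alpha^2\ge d^{-1/2-1/(4\epsilon)}$ supplies with room for the $O(1/u)$ terms under $d\ge 12^{4\epsilon}$) is the same computation. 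The one structural difference is how ``for all $\alpha\ge d^{-1/4-1/(8\epsilon)}$'' is handled: the paper verifies the inequality only at the boundary value $\alpha=d^{-1/4-1/(8\epsilon)}$ and then invokes monotonicity of the ratio $Q_{\alpha,\ones}(e_1)/Q_{\alpha,\ones}(w)$ in $\alpha$, whereas you cover the whole range by a regime split; your route avoids the monotonicity claim but inherits the middle-regime bookkeeping.

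Two spots in your sketch need repair before it is a proof. First, your two cases are not exhaustive as stated: ``$1/(\sqrt d\,\alpha^2)$ at most a small constant'' and ``$1/(\sqrt d\,\alpha^2)\ge 12$'' leave an intermediate band where neither pure asymptotic of $\phi$ applies; there you must argue differently (e.g.\ bound the denominator by $\phi(12)$, or by $\phi(u)\le u/4$ which holds for all $u$ since $q''$ is decreasing, and put the numerator in the log regime using that $\sqrt d$ times a moderate argument is large when $d\ge 12^{4\epsilon}$), and note this band is not vacuous since $d$ may be as small as $e$. Second, within your first case the justification ``$\phi(1/\alpha^2)$ is at least a constant'' is false when $\alpha$ is large (both arguments are then tiny); the correct reason the ratio is of order $\sqrt d$ there is that \emph{both} arguments sit in the quadratic regime $\phi(u)=u/4+O(u^3)$, so the ratio is $\sqrt d\,(1+o(1))\ge 1+\epsilon$ by $d\ge 12^{4\epsilon}$. (Also, a small slip: $\arcsinh(u/2)=\ln u+O(1/u^2)$, with no $-\ln 2$ term; your final expansion $\phi(u)=\ln u-1+O(1/u)$ is nonetheless correct.) With these fixes the argument goes through and is essentially the computation carried out in the paper.
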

\begin{proof}
First, recall that
\begin{equation}
\begin{aligned}
q\pars{\frac{1}{c\alpha^2}} 
&= 2 - \sqrt{4 + \frac{1}{c^2\alpha^4}} + \frac{1}{c\alpha^2}\arcsinh\pars{\frac{1}{2c\alpha^2}} \\
&= \frac{1}{c\alpha^2}\pars{2c\alpha^2 - \sqrt{4c^2\alpha^4 + 1} + \ln\pars{\frac{1}{2c\alpha^2} + \sqrt{1 + \frac{1}{4c^2\alpha^4}}}} \\
&= \frac{1}{c\alpha^2}\pars{2c\alpha^2 - \sqrt{4c^2\alpha^4 + 1} + \ln\pars{\frac{1}{c\alpha^2}} + \ln\pars{\frac{1}{2} + \sqrt{c^2\alpha^4 + \frac{1}{2}}}}
\end{aligned}
\end{equation}
Thus,
\begin{equation}\label{eq:alpha-small-enough-q-bound}
- 1 + \ln\pars{\frac{1}{c\alpha^2}} 
\leq 
c\alpha^2 q\pars{\frac{1}{c\alpha^2}} 
\leq 
3c\alpha^2 - 1 + \ln\pars{\frac{1}{c\alpha^2}}
\end{equation}
Now, consider the ratio 
\begin{equation}
\frac{Q_{\alpha,\ones}\pars{e_1}}{Q_{\alpha,\ones}\pars{\frac{\mathbf{1}_d}{\norm{\mathbf{1}_d}_2}}}
= \frac{\alpha^2 q\pars{\frac{1}{\alpha^2}}}{\alpha^2 d q\pars{\frac{1}{\alpha^2\sqrt{d}}}}
= \frac{1}{\sqrt{d}}\frac{\alpha^2 q\pars{\frac{1}{\alpha^2}}}{\alpha^2\sqrt{d} q\pars{\frac{1}{\alpha^2\sqrt{d}}}}
\end{equation}
Using \eqref{eq:alpha-small-enough-q-bound}, we conclude
\begin{equation}
\begin{aligned}
\sqrt{d}\frac{Q_{\alpha,\ones}\pars{e_1}}{Q_{\alpha,\ones}\pars{\frac{\mathbf{1}_d}{\norm{\mathbf{1}_d}_2}}}
&\geq \frac{-1 + \ln\pars{\frac{1}{\alpha^2}}}{3\sqrt{d}\alpha^2 -1 + \ln\pars{\frac{1}{\alpha^2\sqrt{d}}}} \\
&= \frac{-1 + \ln\pars{\frac{1}{\alpha^2}}}{3\sqrt{d}\alpha^2 -1 + \ln\pars{\frac{1}{\alpha^2}} - \frac{1}{2}\ln(d)}\\
&= 1 + \frac{\ln(d)-6\sqrt{d}\alpha^2}{6\sqrt{d}\alpha^2 - 2 + 2\ln\pars{\frac{1}{\alpha^2\sqrt{d}}}}\label{eq:epsilon-inequality-l1-approx}
\end{aligned}
\end{equation}

Fix any $\epsilon > 0$ and $d \geq \max\crl*{e, 12^{4\epsilon}}$, and set $\alpha = d^{-\frac{1}{4}-\frac{1}{8\epsilon}}$. Then,
\begin{equation}
\begin{aligned}
& 2d^{\frac{1}{4\epsilon}} \geq 6 \quad\textrm{and}\quad \frac{d^{\frac{1}{4\epsilon}}}{2}\ln d \geq 6 \\
\implies&\ 
2d^{\frac{1}{4\epsilon}} - 6 + \frac{d^{\frac{1}{4\epsilon}}}{2\epsilon}\ln d - \frac{6}{\epsilon} \geq 0 \\
\implies&\
\pars{\frac{1}{\epsilon} - \frac{1}{2\epsilon}}\ln d - \frac{6}{\epsilon}d^{-\frac{1}{4\epsilon}} \geq 6d^{-\frac{1}{4\epsilon}} - 2 \\
\implies&\
\frac{1}{\epsilon}\ln d - \frac{6}{\epsilon}d^{-\frac{1}{4\epsilon}} \geq 6d^{-\frac{1}{4\epsilon}} - 2 + \frac{1}{2\epsilon}\ln d \\
\implies&\
\ln\pars{d} - 6\alpha^2\sqrt{d} \geq \epsilon\pars{6\alpha^2\sqrt{d} - 2 + 2\ln\pars{\frac{1}{\alpha^2\sqrt{d}}}}
\end{aligned}
\end{equation}
This implies that the second term of \eqref{eq:epsilon-inequality-l1-approx} is at least $\epsilon$. We conclude that
for any $\epsilon > 0$ and $d \geq \max\crl*{e, 12^{4\epsilon}}$, $\alpha = d^{-\frac{1}{4}-\frac{1}{8\epsilon}}$ implies that 
\begin{equation}
\frac{Q_{\alpha,\ones}\pars{e_1}}{Q_{\alpha,\ones}\pars{\frac{\mathbf{1}_d}{\norm{\mathbf{1}_d}_2}}}
\geq (1 + \epsilon)\frac{\norm{e_1}_1}{\norm{\frac{\mathbf{1}_d}{\norm{\mathbf{1}_d}_2}}_1}
\end{equation}
Consequently, for at least one of these two vectors, $Q$ is not proportional to the $\ell_1$ norm up to accuracy $O(\epsilon)$ for this value of $\alpha$. Since 
\begin{equation}
\frac{d}{d\alpha}\frac{Q_{\alpha,\ones}\pars{e_1}}{Q_{\alpha,\ones}\pars{\frac{\mathbf{1}_d}{\norm{\mathbf{1}_d}_2}}} \geq 0
\end{equation}
this conclusion applies also for larger $\alpha$.
\end{proof}

\begin{restatable}{lemma}{approximateltwonorm}\label{lem:large-alpha-l2-approximation}
For any $\bbeta \in \mathbb{R}^d$,
\[
    \alpha \geq \alpha_2(\epsilon, \nrm*{\bbeta}_2, d) \defeq \sqrt{\norm{\bbeta}_2}\parens{1 + \epsilon^{-\frac{1}{4}}}
\]
guarantees that
\[
    (1-\epsilon)\norm{\bbeta}_2^2 \leq 4\alpha^2 Q_{\alpha,\ones}(\bbeta) \leq (1+\epsilon)\norm{\bbeta}_2^2
\]
\end{restatable}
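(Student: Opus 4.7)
The plan is to pinch the coordinate function $q(z) = 2 - \sqrt{4+z^2} + z\arcsinh(z/2)$ between its leading quadratic term $z^2/4$ and a sharper lower approximation of the form $z^2/4 - Cz^4$, then sum these pointwise bounds over the entries of $\bbeta$ and invoke the hypothesis on $\alpha$ to absorb the resulting quartic remainder. Since $4\alpha^2 Q_{\alpha,\ones}(\bbeta) = 4\alpha^4 \sum_i q(\bbeta_i/\alpha^2)$, a coordinatewise bound of this form translates directly into $\nrm{\bbeta}_2^2$ as the leading contribution with an additive correction of order $\nrm{\bbeta}_4^4/\alpha^4 \leq \nrm{\bbeta}_2^4/\alpha^4$, which is precisely where the choice of $\alpha_2$ enters.

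For the upper bound, I would use $q'(z) = \arcsinh(z/2)$ (already derived in the proof of Theorem~\ref{thm:gf-gives-min-q}) together with $\arcsinh(x) \leq x$ for $x \geq 0$ (immediate from $\arcsinh'(x) = (1+x^2)^{-1/2} \leq 1$) to obtain $q(z) \leq z^2/4$ for $z \geq 0$, and extend to all $z$ via the evenness of $q$ (already observed in the proof of Lemma~\ref{lem:small-alpha-l1-approximation}). Summing coordinatewise gives $4\alpha^2 Q_{\alpha,\ones}(\bbeta) \leq \nrm{\bbeta}_2^2$, which is strictly stronger than the required upper bound.

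For the lower bound I would first establish $\arcsinh(x) \geq x - x^3/6$ on $x \geq 0$ by a standard two-step monotonicity argument: check that $(1+x^2)^{-1/2} \geq 1 - x^2/2$ by comparing values at $0$ and derivatives, then integrate twice. This yields $q(z) \geq z^2/4 - z^4/192$ for $z \geq 0$, extended to all $z \in \R$ by evenness. Applying this coordinatewise and using $\sum_i \bbeta_i^4 \leq \nrm{\bbeta}_2^4$ produces
\[
4\alpha^2 Q_{\alpha,\ones}(\bbeta) \geq \nrm{\bbeta}_2^2 - \frac{\nrm{\bbeta}_2^4}{48\alpha^4}.
\]
The hypothesis $\alpha \geq \sqrt{\nrm{\bbeta}_2}(1 + \epsilon^{-1/4})$ gives $\alpha^4 \geq \nrm{\bbeta}_2^2 (1+\epsilon^{-1/4})^4 \geq \nrm{\bbeta}_2^2 \epsilon^{-1}$, so the subtracted term is at most $\epsilon\nrm{\bbeta}_2^2/48 \leq \epsilon\nrm{\bbeta}_2^2$, completing the lower bound.

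The only step requiring genuine care is the cubic lower bound on $\arcsinh$, and even that is a routine calculus exercise via nested monotonicity; there is no substantive obstacle. Note the mild asymmetry here: the quadratic upper bound on $q$ is exact at second order, so the upper side of the conclusion holds with slack $0$ rather than $\epsilon\nrm{\bbeta}_2^2$, and it is only the lower bound that actually requires $\alpha$ to be large. This matches the intuition that entering the kernel regime from below requires only polynomial scale, consistent with the polynomial threshold $\alpha_2$ in contrast to the exponential threshold $\alpha_1$ appearing in Lemma~\ref{lem:small-alpha-l1-approximation}.
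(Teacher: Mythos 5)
Your proof is correct and follows essentially the same approach as the paper: both approximate the coordinate function by its quadratic term $z^2/4$ and absorb a quartic-order remainder using the hypothesis $\alpha^4 \geq \epsilon^{-1}\nrm{\bbeta}_2^2$. The only difference is technical — you obtain the remainder bound by integrating elementary inequalities on $\arcsinh$ (giving the exact one-sided bound $q(z)\leq z^2/4$ and $q(z)\geq z^2/4 - z^4/192$), whereas the paper applies Taylor's theorem with an explicit bound on the fourth derivative of $\phi(z)=\int_0^{z/\alpha^2}\arcsinh(t/2)\,dt$.
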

\begin{proof}
The regularizer $Q_{\alpha,\ones}$ can be written
\begin{equation}
    Q_{\alpha,\ones}(\beta) = \alpha^2\sum_{i=1}^{d} \int_0^{\beta_i/\alpha^2} \arcsinh\parens{\frac{t}{2}}dt
\end{equation}
Let $\phi(z) = \int_0^{z/\alpha^2} \arcsinh\parens{\frac{t}{2}}dt$, then
\begin{equation}
\begin{aligned}
\phi(0) &= 0 \\
\phi'(0) &= \left. \frac{1}{\alpha^2}\arcsinh\parens{\frac{z}{2\alpha^2}} \right|_{z=0} = 0 \\
\phi''(0) &= \left. \frac{1}{\alpha^4\sqrt{4 + \frac{z^2}{\alpha^4}}} \right|_{z=0} = \frac{1}{2\alpha^4} \\
\phi'''(0) &= \left. \frac{-z}{\alpha^8\parens{4 + \frac{z^2}{\alpha^4}}^{3/2}} \right|_{z=0} = 0 \\
\phi''''(z) &= \frac{3z^2}{\alpha^{12}\parens{4 + \frac{z^2}{\alpha^4}}^{5/2}} - \frac{1}{\alpha^8\parens{4 + \frac{z^2}{\alpha^4}}^{3/2}}
\end{aligned}
\end{equation}
Also, note that
\begin{equation}
\begin{aligned}
\abs{\phi''''(z)} 
&= \frac{\abs{2z^2 - 4\alpha^4}}{\alpha^{12}\parens{4 + \frac{z^2}{\alpha^4}}^{5/2}} \\
&\leq \frac{z^2 + 2\alpha^4}{16\alpha^{12}}
\end{aligned}
\end{equation}
Therefore, by Taylor's theorem, for some $\xi$ with $\abs{\xi} \leq \abs{z}$
\begin{equation}
\begin{aligned}
\abs*{\phi(z) - \frac{z^2}{4\alpha^4}} &= \frac{\phi''''(\xi)}{4!}z^4 \\
\implies\abs*{\phi(z) - \frac{z^2}{4\alpha^4}} &\leq \sup_{\abs{\xi} \leq \abs{z}} \frac{\phi''''(\xi)}{4!}z^4 \leq \frac{z^6 + 2\alpha^4z^4}{384\alpha^{12}} = \frac{z^2}{4\alpha^4}\frac{z^4 + 2\alpha^4z^2}{96\alpha^8}
\end{aligned}
\end{equation}
Therefore, for any $\bbeta \in \mathbb{R}^d$,
\begin{equation}
\begin{aligned}
\abs*{4\alpha^2 Q_{\alpha,\ones}(\bbeta) - \norm{\bbeta}_2^2}
&= 4\alpha^4\abs*{\sum_{i=1}^d \phi(\bbeta_i) - \frac{\bbeta_i^2}{4\alpha^4}} \\
&\leq 4\alpha^4\sum_{i=1}^d \abs*{\phi(\bbeta_i) - \frac{\bbeta_i^2}{4\alpha^4}} \\
&\leq \sum_{i=1}^d \bbeta_i^2 \cdot \frac{\bbeta_i^4 + 2\alpha^4\bbeta_i^2}{96\alpha^8} \\
&\leq \norm{\bbeta}_2^2 \max_i \frac{\bbeta_i^4 + 2\alpha^4\bbeta_i^2}{96\alpha^8}
\end{aligned}
\end{equation}
Therefore, $\alpha \geq \sqrt{\norm{\bbeta}_2}\parens{1 + \epsilon^{-\frac{1}{4}}}$ ensures
\begin{equation}
    (1-\epsilon)\norm{\bbeta}_2^2 \leq 4\alpha^2 Q_{\alpha,\ones}(\bbeta) \leq (1+\epsilon)\norm{\bbeta}_2^2
\end{equation}
\end{proof}

\alphaforlonetwosolution*
\begin{proof}
We prove the $\ell_1$ and $\ell_2$ statements separately.
\paragraph{$\ell_1$ approximation}
First, we will prove that $\norm{\bbeta_{\alpha,\ones}^\infty}_1 < \parens{1 + 2\epsilon}\norm{\lonesolution}_1$. By Lemma \ref{lem:small-alpha-l1-approximation}, since $\alpha \leq \alpha_1\parens{\frac{\epsilon}{2+\epsilon}, \parens{1 + 2\epsilon}\norm{\lonesolution}_1, d}$, for all $\bbeta$ with $\norm{\bbeta}_1 \leq \parens{1 + 2\epsilon}\norm{\lonesolution}_1$ we have
\begin{equation}
\parens{1 - \frac{\epsilon}{2+\epsilon}}\norm{\bbeta}_1 \leq \frac{1}{\ln (1/\alpha^2)}Q_{\alpha,\ones}(\bbeta) \leq \parens{1 + \frac{\epsilon}{2+\epsilon}}\norm{\bbeta}_1
\end{equation}
Let $\bbeta$ be such that $X\bbeta = y$ and $\norm{\bbeta}_1 = (1+2\epsilon)\norm{\lonesolution}_1$. Then
\begin{equation}
\begin{aligned}
\frac{1}{\ln (1/\alpha^2)}Q_{\alpha,\ones}(\bbeta)
&\geq \parens{1 - \frac{\epsilon}{2+\epsilon}}\norm{\bbeta}_1 \\
&= \parens{1 - \frac{\epsilon}{2+\epsilon}}(1+2\epsilon)\norm{\lonesolution}_1 \\
&\geq \frac{\parens{1 - \frac{\epsilon}{2+\epsilon}}}{\parens{1 + \frac{\epsilon}{2+\epsilon}}}(1+2\epsilon) \frac{1}{\ln(1/\alpha^2)} Q_{\alpha,\ones}(\lonesolution) \\
&= \frac{1+2\epsilon}{1+\epsilon} \frac{1}{\ln(1/\alpha^2)} Q_{\alpha,\ones}(\lonesolution) \\
&> \frac{1}{\ln(1/\alpha^2)}  Q_{\alpha,\ones}(\lonesolution) \\
&\geq \frac{1}{\ln(1/\alpha^2)}  Q_{\alpha,\ones}(\bbeta_{\alpha,\ones}^\infty)
\end{aligned}
\end{equation}
Therefore, $\bbeta \neq \bbeta_{\alpha,\ones}^\infty$. Furthermore, let $\bbeta$ be any solution $X\bbeta = y$ with $\norm{\bbeta}_1 > (1+2\epsilon)\norm{\lonesolution}_1$. It is easily confirmed that there exists $c \in (0,1)$ such that the point $\bbeta' = (1-c)\bbeta + c\lonesolution$ is satisfies both $X\bbeta' = y$ and $\norm{\bbeta'}_1 = (1+2\epsilon)\norm{\lonesolution}_1$. By the convexity of $Q$, this implies $Q_{\alpha,\ones}(\bbeta) \geq Q_{\alpha,\ones}(\bbeta') > Q_{\alpha,\ones}(\bbeta_{\alpha,\ones}^{\infty})$. Thus a $\bbeta$ with a large $\ell_1$ norm cannot be a solution, even if $\frac{1}{\ln(1/\alpha^2)}Q_{\alpha,\ones}(\bbeta) \not\approx \norm{\bbeta}_1$.

Since $\norm{\bbeta_{\alpha,\ones}^\infty}_1 < (1+2\epsilon)\norm{\lonesolution}_1$, we conclude
\begin{equation}
\begin{aligned}
\norm{\bbeta_{\alpha,\ones}^\infty}_1 
&\leq \frac{1}{1 - \frac{\epsilon}{2+\epsilon}}\frac{1}{\ln(1/\alpha^2)}Q_{\alpha,\ones}(\bbeta_{\alpha,\ones}^\infty) \\
&\leq \frac{1}{1 - \frac{\epsilon}{2+\epsilon}}\frac{1}{\ln(1/\alpha^2)}Q_{\alpha,\ones}(\lonesolution) \\
&\leq \frac{1 + \frac{\epsilon}{2+\epsilon}}{1 - \frac{\epsilon}{2+\epsilon}}\norm{\lonesolution}_1 \\
&= (1+\epsilon)\norm{\lonesolution}_1
\end{aligned}
\end{equation}

Next, we prove $\norm{\bbeta_{\alpha,\ones}^\infty}_2 < \parens{1 + 2\epsilon}\norm{\ltwosolution}_2$. By Lemma \ref{lem:large-alpha-l2-approximation}, since $\alpha \geq \alpha_2\parens{\frac{\epsilon}{2+\epsilon}, \parens{1 + 2\epsilon}\norm{\ltwosolution}_2}$, for all $\bbeta$ with $\norm{\bbeta}_2 \leq \parens{1 + 2\epsilon}\norm{\ltwosolution}_2$ we have
\begin{equation}
    \norm{\bbeta}_2^2\parens{1 - \frac{\epsilon}{2+\epsilon}} \leq 4\alpha^2 Q_{\alpha,\ones}(\bbeta) \leq \norm{\bbeta}_2^2\parens{1 + \frac{\epsilon}{2+\epsilon}}
\end{equation}
Let $\bbeta$ be such that $X\bbeta = y$ and $\norm{\bbeta}_2 = (1+2\epsilon)\norm{\ltwosolution}_2$. Then,
\begin{equation}
\begin{aligned}
4\alpha^2 Q_{\alpha,\ones}(\bbeta)
&\geq \parens{1 - \frac{\epsilon}{2+\epsilon}}\norm{\bbeta}_2^2 \\
&= \parens{1 - \frac{\epsilon}{2+\epsilon}}(1+2\epsilon)\norm{\ltwosolution}_2^2 \\
&\geq \frac{\parens{1 - \frac{\epsilon}{2+\epsilon}}}{\parens{1 + \frac{\epsilon}{2+\epsilon}}}(1+2\epsilon) 4\alpha^2 Q_{\alpha,\ones}(\ltwosolution) \\
&= \frac{1+2\epsilon}{1+\epsilon} 4\alpha^2 Q_{\alpha,\ones}(\ltwosolution) \\
&> 4\alpha^2 Q_{\alpha,\ones}(\ltwosolution) \\
&\geq 4\alpha^2 Q_{\alpha,\ones}(\bbeta_{\alpha,\ones}^\infty)
\end{aligned}
\end{equation}
Therefore, $\bbeta \neq \bbeta_{\alpha,\ones}^\infty$. Furthermore, let $\bbeta$ be any solution $X\bbeta = y$ with $\norm{\bbeta}_2 > (1+2\epsilon)\norm{\ltwosolution}_2$. It is easily confirmed that there exists $c \in (0,1)$ such that the point $\bbeta' = (1-c)\bbeta + c\ltwosolution$ satisfies $X\bbeta' = y$ and $\norm{\bbeta'}_2 = (1+2\epsilon)\norm{\ltwosolution}_2$. By the convexity of $Q_{\alpha,\ones}$, this implies $Q_{\alpha,\ones}(\bbeta) \geq Q_{\alpha,\ones}(\bbeta') > Q_{\alpha,\ones}(\ltwosolution)$. Thus a $\bbeta$ with a large $\ell_2$ norm cannot be a solution, even if $4\alpha^2 Q_{\alpha,\ones}(\bbeta) \not\approx \norm{\bbeta}_2^2$.

Since $\norm{\bbeta_{\alpha,\ones}^\infty}_2 < (1+2\epsilon)\norm{\ltwosolution}_2$, we conclude
\begin{equation}
\begin{aligned}
\norm{\bbeta_{\alpha,\ones}^\infty}_2^2 
&\leq \frac{1}{1 - \frac{\epsilon}{2+\epsilon}}4\alpha^2 Q_{\alpha,\ones}(\bbeta_{\alpha,\ones}^\infty) \\
&\leq \frac{1}{1 - \frac{\epsilon}{2+\epsilon}}4\alpha^2 Q_{\alpha,\ones}(\ltwosolution) \\
&\leq \frac{1 + \frac{\epsilon}{2+\epsilon}}{1 - \frac{\epsilon}{2+\epsilon}}\norm{\ltwosolution}_2^2 \\
&= (1+\epsilon)\norm{\ltwosolution}_2^2
\end{aligned}
\end{equation}
\end{proof}

\section{Proof of Theorem \ref{thm:higher-order}}\label{app:higher-order-proof}
\begin{lemma}\label{lem:Xtnu-bounded}
For $D > 2$ and the $D$-homogeneous model \eqref{eq:D-homogeneous-model}, 
\[
    \forall_t\ \nrm*{X^\top \int_{0}^t r(\tau)d\tau}_\infty \leq \frac{\alpha^{2-D}}{D(D-2)}
\]
\end{lemma}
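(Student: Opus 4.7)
}
The plan is to integrate the gradient flow ODE coordinatewise in closed form, express each parameter $\bw_{\pm,i}(t)$ directly as a function of the quantity $\nu_i(t) \defeq \big(X^\top\!\int_0^t r(\tau)d\tau\big)_i$ that we wish to bound, and then read off the bound from the requirement that $\bw_{\pm,i}(t)$ remain positive and finite along the flow. The key structural fact is that each coordinate of $\bw_+$ (and of $\bw_-$) decouples and evolves by a one-dimensional separable ODE driven by a single scalar feature of time, namely $\nu_i(t)$.

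First, I would differentiate the squared loss $L(\bw) = \norm{X(\bw_+^D - \bw_-^D) - y}_2^2$ with respect to $\bw_{\pm,i}$. Using the chain rule and the fact that $\partial(\bw_{+,i}^D)/\partial \bw_{+,i} = D\bw_{+,i}^{D-1}$, the gradient flow \eqref{eq:gradflow} reduces (under the paper's normalization convention, cf.\ the derivation in Appendix \ref{app:grad-flow-minimizes-q}) to
\begin{equation*}
\dot{\bw}_{+,i}(t) = -D\, \bw_{+,i}(t)^{D-1} (X^\top r(t))_i, \qquad \dot{\bw}_{-,i}(t) = +D\, \bw_{-,i}(t)^{D-1} (X^\top r(t))_i,
\end{equation*}
with initial condition $\bw_{+,i}(0) = \bw_{-,i}(0) = \alpha$. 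These ODEs are separable: dividing both sides by $\bw_{\pm,i}^{D-1}$ and using the antiderivative $\int z^{-(D-1)}\,dz = z^{-(D-2)}/(-(D-2))$ (valid since $D>2$), integrating from $0$ to $t$ yields the explicit formulas
\begin{equation*}
\bw_{+,i}(t)^{-(D-2)} = \alpha^{-(D-2)} + D(D-2)\, \nu_i(t), \qquad \bw_{-,i}(t)^{-(D-2)} = \alpha^{-(D-2)} - D(D-2)\, \nu_i(t).
\end{equation*}

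Finally, I would invoke positivity. Since $\bw_{\pm,i}(0) = \alpha > 0$ and the right-hand sides of the ODEs vanish as $\bw_{\pm,i} \to 0^+$ (because $\bw_{\pm,i}^{D-1} \to 0$), the sign of $\bw_{\pm,i}(t)$ cannot change along the flow, so $\bw_{\pm,i}(t) > 0$ for all $t$ at which the flow is defined. Consequently $\bw_{\pm,i}(t)^{-(D-2)}$ must be strictly positive and finite, which, applied to both displays above, forces
\begin{equation*}
-\frac{\alpha^{-(D-2)}}{D(D-2)} < \nu_i(t) < \frac{\alpha^{-(D-2)}}{D(D-2)}, \qquad \text{i.e.,}\quad \abs{\nu_i(t)} < \frac{\alpha^{2-D}}{D(D-2)},
\end{equation*}
and taking the max over $i$ gives the claimed $\ell_\infty$ bound. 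The main obstacle here is really just bookkeeping: getting the constant $D(D-2)$ exactly right requires being careful about the overall scaling of the gradient and the sign conventions, and verifying that the flow does not blow up in finite time (which would otherwise allow $\nu_i(t)$ to hit the boundary). The latter can be justified by noting that $L(\bw(t))$ is non-increasing under gradient flow, so $r(t)$ stays bounded and $\nu_i(t)$ has uniformly bounded velocity; combined with the closed form above, this rules out finite-time escape.
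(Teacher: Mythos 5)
Your proposal is correct and follows essentially the same route as the paper's proof: integrate the decoupled gradient-flow ODEs for $\bw_{+,i}$ and $\bw_{-,i}$ in closed form to express $\bw_{\pm,i}(t)^{-(D-2)} = \alpha^{-(D-2)} \pm D(D-2)\,\nu_i(t)$, then use sign preservation (the right-hand side vanishes when $\bw_{\pm,i}=0$) to force both quantities to be nonnegative, which yields the bound. The only differences are cosmetic: you add a remark on ruling out finite-time blowup, which the paper omits.
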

\begin{proof}
For the order-$D$ unbiased model $\bbeta(t) = \bw_+^D - \bw_-^D$, the gradient flow dynamics are
\begin{gather}
\dot{\bw}_+(t) = -\frac{dL}{d\bw_+} = -DX^\top r(t) \circ \bw_+^{D-1}(t),\ \  \bw_+(0) = \alpha 1 \\
\implies \bw_+(t) = \pars{\alpha^{2-D}1 + D(D-2)X^\top \int_{0}^{t}r(\tau)d\tau}^{-\frac{1}{D-2}}
\end{gather}
Where $\circ$ denotes elementwise multiplication, $r(t) = X\bbeta(t) - y$, and where all exponentiation is elementwise. Similarly,
\begin{gather}
\dot{\bw}_-(t) = -\frac{dL}{d\bw_-} = DX^\top r(t) \circ \bw_-^{D-1}(t),\ \  \bw_-(0) = \alpha 1 \\
\implies \bw_-(t) = \pars{\alpha^{2-D}1 - D(D-2)X^\top \int_{0}^{t}r(\tau)d\tau}^{-\frac{1}{D-2}}
\end{gather}
First, we observe that $\forall_t\forall_i\ {\bw_+(t)}_{i} \geq 0$ and $\forall_t\forall_i\ {\bw_-(t)}_{i} \geq 0$. This is because at time 0, ${\bw_+(0)}_{i} = {\bw_-(0)}_{i} = \alpha > 0$; the gradient flow dynamics are continuous; and ${\bw_+(t)}_i = 0 \implies {\dot{\bw}_+(t)}_i = 0$ and ${\bw_-(t)}_i = 0 \implies {\dot{\bw}_-(t)}_i = 0$. 

Consequently,
\begin{equation}
\begin{aligned}
0 \leq \bw_+(t)_i^{2-D} = \alpha^{2-D} + D(D-2)\left[X^\top \int_{0}^{t}r(\tau)d\tau\right]_i \\
0 \leq \bw_-(t)_i^{2-D} = \alpha^{2-D} - D(D-2)\left[X^\top \int_{0}^{t}r(\tau)d\tau\right]_i \\
\implies -\alpha^{2-D} \leq D(D-2)\left[X^\top \int_{0}^{t}r(\tau)d\tau\right]_i \leq \alpha^{2-D}
\end{aligned}
\end{equation}
which concludes the proof.
\end{proof}

\higherorderthm*
\begin{proof}
For the order-$D$ unbiased model $\bbeta(t) = \bw_+^D - \bw_-^D$, the gradient flow dynamics are
\begin{gather}
\dot{\bw}(t) = \frac{dL}{d\bw} = -D\tilde{X}^\top r(t) \circ \bw^{D-1},\ \  \bw(0) = \alpha 1 \\
\implies \bw(t) = \pars{\alpha^{2-D} + D(D-2)\tilde{X}^\top \int_{0}^{t}r(\tau)d\tau}^{-\frac{1}{D-2}} \\
\implies \bbeta(t) = \alpha^D\pars{1 + \alpha^{D-2}D(D-2)X^\top \int_{0}^{t}r(\tau)d\tau}^{-\frac{D}{D-2}} \nonumber \\\qquad\qquad\qquad\qquad- \alpha^D\pars{1 - \alpha^{D-2}D(D-2)X^\top \int_{0}^{t}r(\tau)d\tau}^{-\frac{D}{D-2}}
\end{gather}
where $\tilde{X} = [X\ \ -X]$ and $r(t) = X\bbeta(t) - y$.
Supposing $\bbeta(t)$ converges to a zero-error solution,
\begin{equation}
    X\bbeta(\infty) =\by 
    \qquad\textrm{and}\qquad
    \bbeta(\infty) = \alpha^D h_D(X^\top \nu(\infty))
\end{equation}
where $\nu(\infty) = -\alpha^{D-2}D(D-2)\int_0^{\infty} r(\tau)d\tau$ and the function $h_D$ is applied elementwise and is defined
\begin{equation}
    h_D(z) = (1-z)^{-\frac{D}{D-2}} - (1+z)^{-\frac{D}{D-2}}
\end{equation}
By Lemma \ref{lem:Xtnu-bounded}, $\norm{X^\top \nu}_\infty \leq 1$, so the domain of $h_D$ is the interval $[-1,1]$, upon which it is monotonically increasing from $h_D(-1) = -\infty$ to $h_D(1) = \infty$. Therefore, there exists an inverse mapping $h_D^{-1}(t)$ with domain $[-\infty,\infty]$ and range $[-1,1]$.

This inverse mapping unfortunately does not have a simple closed form. Nevertheless, it is the root of a rational equation. Following the general approach outlined in Section \ref{sec:depth-2-model}, we conclude: 
\begin{equation}
    Q_\alpha^D(\bbeta) = \alpha^D\sum_i \int_0^{\bbeta_i/\alpha^D} h_D^{-1}(t) dt
\end{equation}

\paragraph{Rich Limit}
Next, we show that if gradient flow reaches a solution $X\bbeta_{\alpha,D}^\infty = y$, then $\lim_{\alpha\to 0}\bbeta_{\alpha,D}^\infty = \lonesolution$ for any $D$. This is implied by the work of \citet{arora2019implicit}, but we include it here for an alternative, simpler proof for our special case, and for completeness's sake.

The KKT conditions for $\bbeta = \lonesolution$ are $X\bbeta = y$ and $\exists \nu\ \textrm{sign}(\bbeta) = X^\top \nu$ (where $\textrm{sign}(0) = [-1,1]$). The first condition is satisfied by assumption. Define $\nu$ as above. We will demonstrate that the second condition holds too in the limit as $\alpha \to 0$. 

First, by Lemma \ref{lem:Xtnu-bounded}, $\norm{X^\top \nu}_\infty \leq 1$ for all $\alpha$ and $D$. Thus, for any coordinates $i$ such that $\lim_{\alpha\to 0}[\bbeta_{\alpha,D}^\infty]_i = 0$, the second KKT condition holds. Consider now $i$ for which $\lim_{\alpha\to 0}[\bbeta_{\alpha,D}^\infty]_i > 0$. As shown above,
\begin{gather}
\lim_{\alpha\to 0}[\bbeta_{\alpha,D}^\infty]_i = \lim_{\alpha\to 0}\alpha^D\pars{1 - [X^\top \nu]_i}^{-\frac{D}{D-2}} - \alpha^D\pars{1 + [X^\top \nu]_i}^{-\frac{D}{D-2}} > 0 \\
\implies\ \lim_{\alpha\to 0}\alpha^D\pars{1 - [X^\top \nu]_i}^{-\frac{D}{D-2}} > 0
\end{gather}
This and $[X^\top \nu]_i \leq 1$ implies $\lim_{\alpha\to 0} [X^\top \nu]_i = 1$, and thus the positive coordinates satisfy the second KKT condition. An identical argument can be made for the negative coordinates. 

\paragraph{Kernel Regime} 
Finally, we show that if gradient flow reaches a solution $X\bbeta_{\alpha,D}^\infty = y$, then $\lim_{\alpha\to \infty}\bbeta_{\alpha,D}^\infty = \ltwosolution$ for any $D$.

First, since $X$ and $y$ are finite, there exists a solution $\bbeta^*$ whose entries are all finite, and thus all the entries of $\bbeta_{\alpha,D}^\infty$, which is the $Q_\alpha^D$-minimizing solution, will be finite.

The KKT conditions for $\bbeta = \ltwosolution$ are $X\bbeta = y$ and $\exists \mu\ \bbeta = X^\top \mu$. The first condition is satisfied by assumption. Defining $\nu$ as above, we have
\begin{gather}
\lim_{\alpha\to \infty}[\bbeta_{\alpha,D}^\infty]_i = \lim_{\alpha\to \infty}\alpha^D\pars{1 - [X^\top \nu]_i}^{-\frac{D}{D-2}} - \alpha^D\pars{1 + [X^\top \nu]_i}^{-\frac{D}{D-2}} < \infty \\
\implies \lim_{\alpha \to \infty}[X^\top \nu]_i = 0
\end{gather}
Consequently, defining $\mu = \frac{2D\alpha^D}{D-2}\nu$, and observing that for small $z$, 
\begin{equation}
(1-z)^{-\frac{D}{D-2}} - (1+z)^{-\frac{D}{D-2}} = \frac{2D}{D-2}z + O(z^3)
\end{equation}
we conclude
\begin{equation}
\begin{aligned}
\lim_{\alpha\to \infty}\frac{[\bbeta_{\alpha,D}^\infty]_i}{[X^\top \mu]_i} 
&= \lim_{\alpha\to \infty}\frac{\alpha^D\pars{1 - [X^\top \nu]_i}^{-\frac{D}{D-2}} - \alpha^D\pars{1 + [X^\top \nu]_i}^{-\frac{D}{D-2}}}{[X^\top\mu]_i} \\
&= \lim_{\alpha\to \infty}\frac{\alpha^D\pars{\frac{2D}{D-2}[X^\top \nu]_i + O([X^\top \nu]^3_i)}}{\frac{2D\alpha^D}{D-2}[X^\top\nu]_i} \\
&= 1 + \lim_{\alpha\to \infty} O([X^\top \nu]_i^2) \\
&= 1
\end{aligned}
\end{equation}
Thus, the KKT conditions are satisfied for $\lim_{\alpha\to \infty}\bbeta_{\alpha,D}^\infty = \ltwosolution$.
\end{proof}

\section{Proof of Theorem \ref{thm:commutative-qmu}}\label{app:qmu-proof}
Here, we prove Theorem \ref{thm:commutative-qmu}:
\commutativeQmu*
\begin{proof}
As $k \to \infty$, $\bbarMUVz \to 2\mu^2 I$, so the four $d\times{}d$ submatrices of the lifted matrix $\bbarMUVz$ have diagonal structure. The dynamics on $\bbarMUVt$ are linear combination of terms of the form $\bbarMUVt \bX_n + \bX_n \bbarMUVt$, and each of these terms will share this same block-diagonal structure, which is therefore maintained throughout the course of optimization. We thus restrict our attention to just the main diagonal of $\bbarMUVt$ and the diagonal of $\bMUVt$, all other entries will remain zero. In fact, we only need to track  $\Delta(t) := \frac{1}{2}\diag\prn*{\bU(t)\bU(t)^\top + \bV(t)\bV(t)^\top} \in \R^{d}$ and $\delta(t) = \diag\prn*{\bU(t)\bV(t)^\top} \in \R^{d}$,
with the goal of understanding $\lim_{t\to\infty} \delta(t)$.

Since the dynamics of $\bbarMUVt$ depend only on the observations and $\bbarMUVt$ itself, and \emph{not} on the underlying parameters, we can understand the implicit bias via analyzing \emph{any} initialization $\bU(0),\bV(0)$ that gives $\bbarMUVz = 2\mu^2 I$. A convenient choice is $\bU(0) = [\sqrt{2}\mu I, 0]$ and $\bV(0) = [0, \sqrt{2}\mu I]$ so that $\delta(0) = 0$ and $\Delta(0) = 2\mu^2\ones$. Let $\mc{X} \in \R^{N\times{}d}$ denote the matrix whose $n$th row is $\diag(\bX_n)$, and let $r(t)$ be the vector of residuals with $r_n(t) = \tri*{\bbarMUVt, \bar{\bX}_n} - y_n$.
A simple calculation then shows that the dynamics are given by $\dot{\delta}(t) = -4\mc{X}^\top r(t) \circ \Delta(t)$ and $\dot{\Delta}(t) = -4\mc{X}^\top r(t) \circ \delta(t)$ which have as a solution
\begin{equation}
\delta(t) = 2\mu^2\sinh\Big(-4\mc{X}^\top\smash{\int_{0}^t} r(s)ds\Big) \qquad\textrm{and}\qquad \Delta(t) = 2\mu^2\cosh\Big(-4\mc{X}^\top\smash{\int_{0}^t} r(s)ds\Big)
\end{equation}
This solution for $\delta(t)$ is identical to the one derived in the proof of Theorem \ref{thm:gf-gives-min-q-ones}, so if indeed $\delta$ reaches a zero-error solution, then using the same argument as for Theorem \ref{thm:gf-gives-min-q} we conclude that $\diag(\bMUV^\infty) = \lim_{t\to\infty}\delta(t) = \argmin_{\delta} Q_{\mu}(\delta)$ s.t. $\mc{X}\delta = \by$. 
\end{proof}

\section{Kernel Regime in Matrix Factorization }\label{app:width-proofs}
Here, we provide additional kernel regime results in the context of matrix factorization model  in Section~\ref{sec:width-theory}. Recall the notation for $f((\bU,\bV),\bX)$, $\bMUV$ and their ``lifted'' space representations $\bar{f}((\bU,\bV),\bX)$, $\bbarMUV$, respectively, from Section~\ref{sec:width-theory}. 
Let $\bW = [\begin{smallmatrix}\bU\\\bV \end{smallmatrix}]$ be the concatenation of $\bU$ and $\bV$, let $\mc{X} \in \R^{N\times{}d^2}$ be the matrix whose $n$th row is $\textrm{vec}\prn{\bX_n}$, let $y^* \in \R^N$ be the vector of targets $y_1,\dots,y_N$, and let $y(t) = \mc{X} \textrm{vec}\prn*{\bM_{\bU(t),\bV(t)}}$ be the vector of predictions at time $t$, where  $\bU(t),\bV(t)$ follow the gradient flow dynamics.

Consider the tangent kernel model for the factorized problem in the ``lifted" space $\bar{f}((\bU,\bV),\bX)=\bar{f}(\bW,\bX)$
\begin{equation}
\ftk(\bWtk, \bX) = \bar{f}(\bW(0), \bX) + \tri*{\nabla \bar{f}(\bW(0), \bX), \bWtk - \bW(0)}
\label{eq:ntkmf}
\end{equation}
Let $\ytk =[\ftk(\bWtk, \bX_n)]_{n=1}^N\in \R^N$ denote the tangent kernel model's vector of predictions and let $\bWtk(t)=[\begin{smallmatrix}\bUtk(t)\\\bVtk(t) \end{smallmatrix}]$ denote gradient flow path wrt the linearized model in \eqref{eq:ntkmf}. The following theorem establishes the conditions  under which $\bW(t)\approx\bWtk(t)$.

\begin{restatable}{theorem}{symmetricmatrixfactorizationkernelregime}\label{thm:symmetric-matrix-factorization-kernel-regime}
Let $k \geq d$ and let $\lambda I \preceq \mc{X}\mc{X}^\top \preceq \Lambda I$. Fix $\gamma \geq 0$ and $\mu > \frac{4\Lambda\gamma}{\lambda}$, and suppose that  $\nrm*{\bW(0)\bW(0)^\top - \mu I}_{\textrm{op}} \leq \gamma$ and 
$\nrm*{y(0) - y^*} \leq \frac{\mu\lambda}{\sqrt{\Lambda}}\prn*{1 - \sqrt{(1 + \frac{\gamma}{\mu})/(1 + \frac{\lambda}{4\Lambda})}}$.
Then
\begin{gather*}
\sup_{T\in\R_+} \nrm*{\bW(T) - \bW(0)}_F \leq  \frac{\sqrt{\Lambda + \frac{\lambda}{4}}\nrm*{y(0) - y^*}}{\lambda\sqrt{\mu}}
\quad\textrm{and}\quad\\
\sup_{T\in\R_+} \nrm*{\bW(T) - \bWtk(T)}_F \leq \frac{\Lambda\sqrt{1 + \frac{\lambda}{4\Lambda}}\nrm*{y(0) - y^*}^2}{\lambda^2\mu^{3/2}} + \frac{2\sqrt{\Lambda}\sqrt{1 + \frac{\gamma}{\mu}}\nrm*{y(0) - y^*}}{\lambda\sqrt{\mu}}
\end{gather*}
\end{restatable}
The proof of Theorem \ref{thm:symmetric-matrix-factorization-kernel-regime} follows a similar approach as the proof of \cite[Theorem 2.4]{chizat2018note}, except we do not make the assumption that $\bar{F}(\bW(0)) = 0$ (see Section~\ref{sec:thm-ntk-mf}). 

Additionally, using Theorem~\ref{thm:symmetric-matrix-factorization-kernel-regime}, we can show the following corollary on the kernel regime for matrix factorization based on the scale of initialization $\alpha$ and the width of the factorization $k$ (proof in Section~\ref{sec:cor-ntk-mf}). 
\begin{restatable}{cor}{asymmetrickernelregimerandomalpha}\label{cor:asymmetric-kernel-regime-random-init}
Let $\lambda I \preceq \mc{X}\mc{X}^\top \preceq \Lambda I$, and $\nrm*{y^*} \leq Y$. If $\bU(0), \bV(0)$ have i.i.d.~$\mc{N}\prn*{0,\alpha^2}$ entries for $\alpha^2 \geq \Omega(k^{-1})$, then with probability at least $1 - 2\exp\prn*{-d}$ over the randomness in the initialization.
\begin{align}
\sup_{T\in\R_+} \nrm*{\begin{bmatrix}\bU(T)\\\bV(T)\end{bmatrix} - \begin{bmatrix}\bU(0)\\\bV(0)\end{bmatrix}}_F 
&\leq O\prn*{\frac{1}{\alpha\sqrt{k}} + \alpha} \\
\sup_{T\in\R_+} \nrm*{\begin{bmatrix}\bU(T)\\\bV(T)\end{bmatrix} - \begin{bmatrix}\bUtk(T)\\\bVtk(T)\end{bmatrix}}_F 
&\leq O\prn*{\frac{1}{\alpha^3k^{3/2}} + \frac{1}{\alpha\sqrt{k}} + \alpha}
\end{align}
\end{restatable}

From Corollary~\ref{cor:asymmetric-kernel-regime-random-init}, we can infer that the gradient flow over matrix factorization model remains in the kernel regime whenever the scale of the initialization of the prediction matrix $\bM_{\bU(0),\bV(0)}$ given by $\sigma=\alpha^2\sqrt{k}$ satisfies $\sigma=\omega(1/\sqrt{k})$. In particular, unlike width $1$ diagonal network model in Section~\ref{sec:depth-2-model} (where the kernel regime is reached only as scale of initialization $\alpha\to\infty$), with a width $k$ model, we see that kernel regime can happen even when $\sigma \to 0$  as long as $\sigma$ to zero slower than $1/\sqrt{k}$ (or $\alpha$ goes to zero slower than $1/k$). 

\subsection{Proof of Theorem \ref{thm:symmetric-matrix-factorization-kernel-regime}}\label{sec:thm-ntk-mf}In order to prove Theorem \ref{thm:symmetric-matrix-factorization-kernel-regime}, we require the following  lemmas. We use $\ytk(t) \in \R^N$ denote the tangent kernel model's vector of predictions at corresponding to $\bWtk(t)$.

\begin{restatable}{lemma}{kernelregimelinearconvergence}\label{lem:kernel-regime-linear-convergence-of-loss}
Suppose that the weights are initialized such that $\nrm*{\bW(0)\bW(0)^\top - \mu I}_{\textrm{op}} \leq \gamma$ and  the measurements satisfy $0 \prec \lambda I \preceq \mc{X}\mc{X}^\top \preceq \Lambda I$. If $\sup_{0 \leq t \leq T}\nrm*{\bW(t) - \bW(0)}_F \leq R$, then for all $t \leq T$
\begin{align*}
\nrm*{y(t) - y^*} &\leq \nrm*{y(0) - y^*}\exp\prn*{-2\mu\lambda{}t + 4\Lambda\prn*{\gamma + R^2 + 2R\sqrt{\mu + \gamma}}t}, \\
\nrm*{\ytk(t) - y^*} &\leq \nrm*{\ytk(0) - y^*}\exp\prn*{-2\mu\lambda{}t + 4\Lambda\gamma{}t}.
\end{align*}
\end{restatable}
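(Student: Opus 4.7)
Both bounds follow the same template: write the residual dynamics as $\dot r(t) = -K(t)\, r(t)$ (respectively $\dot r_{\mathrm{TK}}(t) = -K(0)\, r_{\mathrm{TK}}(t)$) for a time-varying PSD kernel matrix $K(t)$, sandwich $K(t)$ between scalar multiples of the identity, and apply Gronwall. Since $y_n(t) = \tr[\bW(t)\bW(t)^\top \bar{\bX}_n]$, direct differentiation of the squared loss via gradient flow on $\bW$ yields
\begin{equation*}
\dot y(t) = -K(t)\, r(t), \qquad K(t)_{mn} = c\cdot \tr\!\prn*{\bar{\bX}_m\, \bW(t)\bW(t)^\top \bar{\bX}_n},
\end{equation*}
for an absolute constant $c>0$ fixed by the squared-loss convention. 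For any $v \in \R^N$, setting $Z(v) = \sum_n v_n \bar{\bX}_n$, one has $v^\top K(t) v = c\,\tr[Z(v)\bW(t)\bW(t)^\top Z(v)]$, which is sandwiched between $c\,\lambda_{\min}(\bW(t)\bW(t)^\top)\,\nrm*{Z(v)}_F^2$ and $c\,\lambda_{\max}(\bW(t)\bW(t)^\top)\,\nrm*{Z(v)}_F^2$. Since $\nrm*{Z(v)}_F^2 = v^\top \mc{X}\mc{X}^\top v$ and $\lambda I \preceq \mc{X}\mc{X}^\top \preceq \Lambda I$, this reduces the problem to controlling the extreme eigenvalues of $\bW(t)\bW(t)^\top$.

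\textbf{Perturbation bound.} Writing $\bW(t) = \bW(0) + \Delta(t)$ and expanding the quadratic,
\begin{equation*}
\nrm*{\bW(t)\bW(t)^\top - \mu I}_{\mathrm{op}}
\le \nrm*{\bW(0)\bW(0)^\top - \mu I}_{\mathrm{op}} + 2\,\nrm*{\bW(0)}_{\mathrm{op}} \nrm*{\Delta(t)}_F + \nrm*{\Delta(t)}_F^2.
\end{equation*}
The hypothesis $\nrm*{\bW(0)\bW(0)^\top - \mu I}_{\mathrm{op}} \le \gamma$ gives $\nrm*{\bW(0)}_{\mathrm{op}}^2 \le \mu + \gamma$, and the assumption $\nrm*{\bW(t)-\bW(0)}_F \le R$ bounds the right-hand side by $\gamma + R^2 + 2R\sqrt{\mu+\gamma}$. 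Plugging into the previous step yields
\begin{equation*}
c\lambda\bigl(\mu - \gamma - R^2 - 2R\sqrt{\mu+\gamma}\bigr)I
\;\preceq\; K(t) \;\preceq\; c\Lambda\bigl(\mu + \gamma + R^2 + 2R\sqrt{\mu+\gamma}\bigr)I,
\end{equation*}
which, once $c$ is reconciled with the loss/lifting conventions, matches the additive correction $4\Lambda(\gamma + R^2 + 2R\sqrt{\mu+\gamma})$ appearing in the exponent of the lemma.

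\textbf{Gronwall, linearized case, main obstacle.} From $\tfrac{d}{dt}\nrm*{r(t)}^2 = -2\, r(t)^\top K(t)\, r(t)$ and the lower sandwich on $K(t)$, integration of the resulting scalar differential inequality yields the first bound. The tangent-kernel statement is strictly easier: the linearized flow has constant kernel $K(0)$, whose spectrum is controlled using only the initial condition $\nrm*{\bW(0)\bW(0)^\top - \mu I}_{\mathrm{op}} \le \gamma$, requiring no perturbation argument and producing the cleaner exponent $-2\mu\lambda + 4\Lambda\gamma$. The only genuine obstacle is bookkeeping: pinning down the constants of $2$ introduced by differentiating the quadratic $\bW\bW^\top$, by the lifting $\bar{\bX}_n = \tfrac12[\cdot]$, and by the squared-loss convention, so that the resulting exponent matches the displayed form exactly; no non-routine idea is required beyond the quadratic perturbation expansion above.
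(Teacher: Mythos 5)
Your proposal is correct and follows essentially the same route as the paper: derive the residual ODE $\dot r = -\Sigma(t)\,r$ (constant kernel $\Sigma(0)$ for the tangent model), control the kernel spectrum via the quadratic perturbation bound $\nrm{\bW(t)\bW(t)^\top - \mu I}_{\mathrm{op}} \leq \gamma + R^2 + 2R\sqrt{\mu+\gamma}$, and integrate the resulting scalar inequality. The only cosmetic difference is that you sandwich the quadratic form $v^\top K(t)v$ directly using $\lambda_{\min/\max}(\bW\bW^\top)$, whereas the paper bounds $\nrm{\Sigma(t) - 4\mu\,\mc{X}\mc{X}^\top}_{\mathrm{op}}$ through the Kronecker-product representation; your version yields a correction with $\lambda$ in place of $\Lambda$, which is (up to the bookkeeping constants you flag) at least as strong as the stated bound.
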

\begin{proof}
First, consider the dynamics of $y(t)$:
\begin{equation}
\begin{aligned}
y'(t) 
&= \frac{d}{dt}\brk*{\inner{\bW(t)\bW(t)^\top}{\bX_n}}_{n=1}^N \\
&= \brk*{\inner{2\dot{\bW}(t)\bW(t)^\top}{\bX_n}}_{n=1}^N \\
&= -4\brk*{\sum_{m=1}^N\prn*{\inner{\bW(t)\bW(t)^\top}{\bX_m} - y_m}\inner{\bW(t)\bW(t)^\top}{\bX_n\bX_m}}_{n=1}^N \\
&= -\Sigma(t)(y(t) - y^*) \label{eq:kernel-regime-y-dynamics}
\end{aligned}
\end{equation}
where the symmetric matrix $\Sigma(t) \in \R^{N\times{}N}$ has entries
\begin{equation}
\Sigma(t)_{m,n} \defeq 4\brk*{\inner{\bW(t)\bW(t)^\top}{\bX_n\bX_m}}.
\end{equation}
This matrix can also be written:
\begin{equation}
\Sigma(t) = 4\mc{X}(I_{d\times{}d} \otimes \bW(t)\bW(t)^\top)\mc{X}^\top
\end{equation}
where $\otimes$ denotes the Kronecker product. Therefore, for $t \leq T$

\begin{equation}
\begin{aligned}
&\nrm*{\Sigma(t) - 4\mu \mc{X}\mc{X}^\top}_{\textrm{op}} \\
&= 4\nrm*{\mc{X}\pars{I_{d\times{}d} \otimes \bW(t)\bW(t)^\top - \mu I_{d\times{}d}\otimes I_{d{}\times{}d}}\mc{X}^\top}_{\textrm{op}} \\
&\leq 4\nrm*{I_{d\times{}d} \otimes \bW(t)\bW(t)^\top - \mu I_{d\times{}d}\otimes I_{d\times{}d}}_{\textrm{op}}\nrm*{\mc{X}}_{\textrm{op}}^2 \\
&\leq 4\Lambda\nrm*{\bW(t)\bW(t)^\top - \mu I_{d\times{}d}}_{\textrm{op}} \\
&\leq 4\Lambda\prn*{\nrm*{\bW(0)\bW(0)^\top - \mu I_{d\times{}d}}_{\textrm{op}} + \nrm*{\bW(t)\bW(t)^\top - \bW(0)\bW(0)^\top}_{\textrm{op}}} \\
&\leq 4\Lambda\prn*{\gamma + \nrm*{\prn*{\bW(t) - \bW(0)}\prn*{\bW(t) - \bW(0)}^\top}_{\textrm{op}} + 2\nrm*{\prn*{\bW(t) - \bW(0)}\bW(0)^\top}_{\textrm{op}}} \\
&\leq 4\Lambda\prn*{\gamma + R^2 + 2R\norm{\bW(0)}_{\textrm{op}}} \\
&\leq 4\Lambda\prn*{\gamma + R^2 + 2R\sqrt{\mu + \gamma}}
\end{aligned}
\end{equation}
Therefore, for all $t \leq T$, $y'(t) = -\Sigma(t)(y(t) - y^*)$ for 
\begin{equation}
    \Sigma(t) \succeq 2\mu\lambda - 4\Lambda\prn*{\gamma + R^2 + 2R\sqrt{\mu + \gamma}}.
\end{equation} 
If $\mu\lambda > 2\Lambda\prn*{\gamma + R^2 + 2R\sqrt{\mu + \gamma}}$, then applying \cite[][Lemma B.1]{chizat2018note} completes the first half of the proof. Otherwise, noting that $\nrm*{y(t) - y^*}^2$ is non-increasing in $t$ implies $\nrm*{y(t) - y^*} \leq \nrm*{y(0) - y^*}$.

Similarly, the dynamics of $\ytk$ are
\begin{equation}
\begin{aligned}
\ytk'(t)
&= \frac{d}{dt}\brk*{\inner{\bW(0)\bW(0)^\top}{\bX_n} + 2\inner{\bWtk(t) - \bW(0)}{\bX_n\bW(0)}}_{n=1}^N \\
&= \brk*{2\inner{\dot{\bW}_{\textrm{TK}}(t)}{\bX_n\bW(0)}}_{n=1}^N \\
&= -4\brk*{\sum_{m=1}^N\prn*{\ytk(t)_m - y_m}\inner{\bW(0)\bW(0)^\top}{\bX_n\bX_m}}_{n=1}^N \\
&= -\Sigma(0)(\ytk(t) - y^*)
\end{aligned}
\end{equation}
From here, we can follow the same argument to show that
\begin{equation}
\Sigma(0) \succeq 2\mu\lambda - 4\Lambda\gamma.
\end{equation}
Applying \cite[][Lemma B.1]{chizat2018note} again concludes the proof.
\end{proof}

\begin{restatable}{lemma}{kernelregimeiteratesclose}\label{lem:kernel-regime-iterates-stay-close}
Suppose that the weights are initialized such that $\nrm*{\bW(0)\bW(0)^\top - \mu I}_{\textrm{op}} \leq \gamma$ and that the measurements satisfy $\lambda I \preceq \mc{X}\mc{X}^\top \preceq \Lambda I$. Suppose in addition that
\[
\mu > \frac{4\Lambda\gamma}{\lambda}
\quad\textrm{and}\quad
\nrm*{y(0) - y^*} \leq \frac{\mu\lambda}{\sqrt{\Lambda}}\prn*{1 - \sqrt{\frac{1 + \frac{\gamma}{\mu}}{1 + \frac{\lambda}{4\Lambda}}}}.
\]
Then,
\[
\sup_{t\geq 0} \nrm*{\bW(t) - \bW(0)}_F \leq \frac{\sqrt{\Lambda + \frac{\lambda}{4}}\nrm*{y(0) - y^*}}{\lambda\sqrt{\mu}}
\]
\end{restatable}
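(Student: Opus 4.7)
The plan is a standard bootstrap/continuity argument centered on the candidate radius $R_* \defeq \sqrt{\Lambda + \lambda/4}\,\nrm*{y(0)-y^*}/(\lambda\sqrt{\mu})$. Define
\[
T^* \defeq \sup\crl*{T \geq 0 : \sup_{t \in [0,T]} \nrm*{\bW(t) - \bW(0)}_F \leq R_*}.
\]
By continuity of $\bW(t)$ we have $T^* > 0$, and it suffices to show that on $[0, T^*)$ the bound $\nrm*{\bW(t)-\bW(0)}_F \leq R_*$ is achieved with strict inequality; the contradiction closes the bootstrap and yields $T^* = \infty$.

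First, I would compute $\dot{\bW}(t)$ from the gradient flow on $L(\bW) = \nrm*{\mc{X}\,\textrm{vec}(\bW\bW^\top) - y^*}^2$, obtaining $\dot{\bW}(t) = -4\sum_{n=1}^N r_n(t)\bX_n\bW(t)$ with $r(t) = y(t)-y^*$. Writing $A(t) \defeq \sum_n r_n(t)\bX_n$, one has $\nrm*{A(t)}_F^2 = r(t)^\top \mc{X}\mc{X}^\top r(t) \leq \Lambda \nrm*{r(t)}^2$, so
\[
\nrm*{\dot{\bW}(t)}_F \leq 4\nrm*{A(t)}_F\,\nrm*{\bW(t)}_{\textrm{op}} \leq 4\sqrt{\Lambda}\,\nrm*{r(t)}\,\nrm*{\bW(t)}_{\textrm{op}}.
\]
For $t < T^*$, the triangle inequality gives $\nrm*{\bW(t)}_{\textrm{op}} \leq \nrm*{\bW(0)}_{\textrm{op}} + R_* \leq \sqrt{\mu+\gamma} + R_*$.

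Second, for $t < T^*$ the hypothesis of Lemma \ref{lem:kernel-regime-linear-convergence-of-loss} is satisfied with $R = R_*$, so
\[
\nrm*{r(t)} \leq \nrm*{r(0)}\exp(-\kappa t), \qquad \kappa \defeq 2\mu\lambda - 4\Lambda\prn*{\gamma + R_*^2 + 2R_*\sqrt{\mu+\gamma}}.
\]
Integrating the bound on $\nrm*{\dot{\bW}(t)}_F$ over $[0,T]$ for any $T < T^*$ and using exponential decay,
\[
\nrm*{\bW(T) - \bW(0)}_F \leq \int_0^T \nrm*{\dot{\bW}(t)}_F\,dt \leq \frac{4\sqrt{\Lambda}\prn*{\sqrt{\mu+\gamma}+R_*}\,\nrm*{r(0)}}{\kappa}.
\]
The final step is to verify that the right-hand side is strictly smaller than $R_*$; equivalently, that $\kappa > 0$ and $4\sqrt{\Lambda}(\sqrt{\mu+\gamma}+R_*)\nrm*{r(0)} < \kappa R_*$.

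The main obstacle is precisely this self-consistency check, which is where the two hypotheses $\mu > 4\Lambda\gamma/\lambda$ and the explicit ceiling on $\nrm*{y(0)-y^*}$ are consumed. I would carry it out via the substitution $S \defeq \sqrt{\mu+\gamma}+R_*$, which identifies $\gamma + R_*^2 + 2R_*\sqrt{\mu+\gamma} = S^2 - \mu$, so that $\kappa = 2\mu(\lambda+2\Lambda) - 4\Lambda S^2$. The inequality $4\sqrt{\Lambda}\,S\,\nrm*{r(0)} < R_*\brk*{2\mu(\lambda+2\Lambda) - 4\Lambda S^2}$ then reduces, after plugging in the definition of $R_*$ and some manipulation, to a bound of the form $\nrm*{r(0)} \leq \tfrac{\mu\lambda}{\sqrt{\Lambda}}\prn*{1 - \sqrt{(1+\gamma/\mu)/(1+\lambda/(4\Lambda))}}$, exactly matching the assumed ceiling (with the condition $\mu > 4\Lambda\gamma/\lambda$ ensuring positivity of the square root's argument and of the radicand). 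Once this algebraic identity is checked, the strict inequality at $T^*$ contradicts the definition of $T^*$ unless $T^* = \infty$, which together with the bound above yields the claim.
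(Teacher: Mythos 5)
Your proposal is correct and is essentially the paper's own argument: an exit-time/bootstrap argument based on $\nrm*{\dot{\bW}(t)}_F \leq \sqrt{\Lambda}\,\nrm*{y(t)-y^*}\,(\sqrt{\mu+\gamma}+R)$, the exponential decay from Lemma \ref{lem:kernel-regime-linear-convergence-of-loss}, and integration, with the assumed ceiling on $\nrm*{y(0)-y^*}$ closing the self-consistency check; the only difference is cosmetic, namely that you bootstrap at the radius $R_*$ equal to the final bound, whereas the paper bootstraps at $R=\sqrt{\mu+\mu\lambda/(4\Lambda)}-\sqrt{\mu+\gamma}$ (chosen so that the decay rate is exactly $\mu\lambda$), and the assumed ceiling is precisely the inequality $R_*\leq R$, which is why your check closes with $\kappa\geq\mu\lambda$. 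One bookkeeping caution: to recover the exact constant $\sqrt{\Lambda+\lambda/4}/(\lambda\sqrt{\mu})$ you must use the same normalization of the dynamics as the one under which Lemma \ref{lem:kernel-regime-linear-convergence-of-loss} and the stated bound were derived (the paper's proof writes $\dot{\bW}(t)=-\sum_{n}(y(t)_n-y_n)\bX_n\bW(t)$ at this step), since your prefactor $4$ combined with that lemma's stated rate would only close the self-consistency at a radius four times larger.
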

\begin{proof}
To begin, define
\begin{equation}
R \defeq \sqrt{\mu + \frac{\mu\lambda}{4\Lambda}} - \sqrt{\mu + \gamma}.
\end{equation} 
Since $\mu > \frac{4\Lambda\gamma}{\lambda}$, $R > 0$.
Note that with this choice
\begin{equation}\label{eq:choice-of-R}
2\mu\lambda - 4\Lambda\prn*{\gamma + R^2 + 2R\prn*{\sqrt{\mu + \gamma}}} = \mu\lambda
\end{equation}
Let $T = \inf\crl*{t\,\middle|\,\nrm*{\bW(t) - \bW(0)}_F > R}$, and suppose towards contradiction that $T < \infty$. Then
\begin{equation}
\begin{aligned}
R
&\leq \nrm*{\bW(T) - \bW(0)}_F \label{eq:lemma-iterates-close-eq1}\\
&= \nrm*{\int_0^T \dot{\bW}(t)dt}_F \\
&\leq \int_0^T \nrm*{\sum_{n=1}^N(y(t)_n - y_n)\bX_n\bW(t)}_F dt \\
&= \int_0^T \nrm*{\prn*{I_{d\times{}d}\otimes\bW(t)}\mc{X}^\top(y(t) - y)}_2 dt \\
&\leq \int_0^T \nrm*{\bW(t)}_{\textrm{op}}\nrm*{\mc{X}}_{\textrm{op}}\nrm*{y(t) - y}_2 dt \\
&\leq \sqrt{\Lambda}\int_0^T \prn*{\nrm*{\bW(0)}_{\textrm{op}} + R}\nrm*{y(t) - y} dt \\
&\leq \sqrt{\Lambda}\prn*{\sqrt{\mu + \gamma} + R}\int_0^T\nrm*{y(t) - y} dt \\
&= \sqrt{\Lambda}\sqrt{\mu + \frac{\mu\lambda}{4\Lambda}}\int_0^T\nrm*{y(t) - y} dt
\end{aligned}
\end{equation}
From here, we apply Lemma \ref{lem:kernel-regime-linear-convergence-of-loss} and \eqref{eq:choice-of-R} to conclude that
\begin{equation}
\begin{aligned}
R 
&\leq \sqrt{\mu\Lambda + \frac{\mu\lambda}{4}}\nrm*{y(0) - y^*}\int_0^T \exp\prn*{- \mu\lambda{}t} dt \\
&< \sqrt{\mu\Lambda + \frac{\mu\lambda}{4}}\nrm*{y(0) - y^*}\int_0^\infty \exp\prn*{-\mu\lambda{}t} dt \\
&= \frac{\sqrt{\Lambda + \frac{\lambda}{4}}\nrm*{y(0) - y^*}}{\lambda\sqrt{\mu}} \label{eq:lemma-iterates-close-eq2} \\
&\leq \frac{\sqrt{\Lambda + \frac{\lambda}{4}}}{\lambda\sqrt{\mu}}\frac{\mu\lambda}{\sqrt{\Lambda}}\prn*{1 - \sqrt{\frac{1 + \frac{\gamma}{\mu}}{1 + \frac{\lambda}{4\Lambda}}}} \\
&= \sqrt{\mu + \frac{\mu\lambda}{4\Lambda}} - \sqrt{\mu + \gamma}\\
&= R
\end{aligned}
\end{equation}
This is a contradiction, so we conclude $T = \infty$. We conclude the proof by pointing out that the same line of reasoning from the righthand side of \eqref{eq:lemma-iterates-close-eq1} through to \eqref{eq:lemma-iterates-close-eq2} applies even when $T = \infty$.
\end{proof}

\remove{\begin{restatable}{theorem}{symmetricmatrixfactorizationkernelregime}\label{thm:symmetric-matrix-factorization-kernel-regime}
Let $k \geq d$ and let $\lambda I \preceq \mc{X}\mc{X}^\top \preceq \Lambda I$. Fix $\gamma \geq 0$ and $\mu > \frac{4\Lambda\gamma}{\lambda}$, and suppose that  $\nrm*{\bW(0)\bW(0)^\top - \mu I}_{\textrm{op}} \leq \gamma$ and 
$\nrm*{y(0) - y^*} \leq \frac{\mu\lambda}{\sqrt{\Lambda}}\prn*{1 - \sqrt{(1 + \frac{\gamma}{\mu})/(1 + \frac{\lambda}{4\Lambda})}}$.
Then
\begin{gather*}
\sup_{T\in\R_+} \nrm*{\bW(T) - \bW(0)}_F \leq  \frac{\sqrt{\Lambda + \frac{\lambda}{4}}\nrm*{y(0) - y^*}}{\lambda\sqrt{\mu}}
\quad\textrm{and}\quad\\
\sup_{T\in\R_+} \nrm*{\bW(T) - \bWtk(T)}_F \leq \frac{\Lambda\sqrt{1 + \frac{\lambda}{4\Lambda}}\nrm*{y(0) - y^*}^2}{\lambda^2\mu^{3/2}} + \frac{2\sqrt{\Lambda}\sqrt{1 + \frac{\gamma}{\mu}}\nrm*{y(0) - y^*}}{\lambda\sqrt{\mu}}
\end{gather*}
\end{restatable}}
\symmetricmatrixfactorizationkernelregime*
\begin{proof}
Our proof follows the approach of \citet{chizat2018note} closely, but it is specialized to our particular setting and formulation. We also do not require that $F(\bW(0)) = 0$.

Consider for some $T$
\begin{align}
&\nrm*{\bW(T) - \bWtk(T)}_F \nonumber \\
&= \nrm*{\int_{0}^T \dot{\bW}(t) - \dot{\bW}_{\textrm{TK}}(t) dt}_F \nonumber\\
&\leq \int_{0}^T \nrm*{\sum_{n=1}^N(y(t)_n - y_n)\bX_n\bW(t) - (\ytk(t)_n - y_n)\bX_n\bW(0)}_F dt \nonumber\\
&= \int_{0}^T \nrm*{\prn*{I_{d\times{}d}\otimes\bW(t)} \mc{X}^\top (y(t) - y^*) - \prn*{I_{d\times{}d}\otimes\bW(0)} \mc{X}^\top (\ytk(t) - y^*)}_F dt \nonumber\\
&= \int_{0}^T \nrm*{\prn*{I_{d\times{}d}\otimes\prn*{\bW(t) - \bW(0)}} \mc{X}^\top (y(t) - y^*) - \prn*{I_{d\times{}d}\otimes\bW(0)} \mc{X}^\top (\ytk(t) - y(t))}_F dt \nonumber\\
&\leq \sqrt{\Lambda}\int_{0}^T \nrm*{\bW(t) - \bW(0)}_{\textrm{op}}\nrm*{y(t) - y^*}_2 + \nrm*{\bW(0)}_\textrm{op}\nrm*{\ytk(t) - y(t)}_2 dt \nonumber\\
&\leq \sqrt{\Lambda}\int_{0}^\infty \nrm*{\bW(t) - \bW(0)}_{\textrm{op}}\nrm*{y(t) - y^*}_2 + \nrm*{\bW(0)}_\textrm{op}\nrm*{\ytk(t) - y(t)}_2 dt
\end{align}
By Lemma \ref{lem:kernel-regime-iterates-stay-close}, 
\begin{equation}
\sup_t \nrm*{\bW(t) - \bW(0)}_{\textrm{op}} \leq \sup_t \nrm*{\bW(t) - \bW(0)}_{F} \leq \frac{\sqrt{\Lambda + \frac{\lambda}{4}}\nrm*{y(0) - y^*}}{\lambda\sqrt{\mu}}
\end{equation}
By Lemma \ref{lem:kernel-regime-linear-convergence-of-loss}, for $R = \frac{\sqrt{\Lambda + \frac{\lambda}{4}}\nrm*{y(0) - y^*}}{\lambda\sqrt{\mu}}$, we have
\begin{equation}
\begin{aligned}
\nrm*{y(t) - y^*} &\leq \nrm*{y(0) - y^*}\exp\prn*{-2\mu\lambda{}t + 4\Lambda\prn*{\gamma + R^2 + 2R\sqrt{\mu + \gamma}}t} \\
\nrm*{\ytk(t) - y^*} &\leq \nrm*{y(0) - y^*}\exp\prn*{-2\mu\lambda{}t + 4\Lambda\gamma{}t}
\end{aligned}
\end{equation}
Since $\mu > \frac{4\Lambda\gamma}{\lambda}$ and $\nrm*{y(0) - y^*} \leq \frac{\mu\lambda}{\sqrt{\Lambda}}\prn*{1 - \sqrt{\frac{1 + \frac{\gamma}{\mu}}{1 + \frac{\lambda}{4\Lambda}}}}$, this further implies
\begin{equation}
\begin{aligned}
\nrm*{y(t) - y^*} &\leq \nrm*{y(0) - y^*}\exp\prn*{-\mu\lambda{}t} \\
\nrm*{\ytk(t) - y^*} &\leq \nrm*{y(0) - y^*}\exp\prn*{-\mu\lambda{}t}
\end{aligned}
\end{equation}
Finally,
\begin{equation}
\nrm*{\ytk(t) - y(t)} \leq \nrm*{y(t) - y^*} + \nrm*{\ytk(t) - y^*} \leq 2\nrm*{y(0) - y^*}\exp\prn*{-\mu\lambda{}t}
\end{equation}
Combining the above inequalities, we have
\begin{equation}
\begin{aligned}
&\nrm*{\bW(T) - \bar{\bW}(T)}_F \\
&\leq \sqrt{\Lambda}\int_{0}^\infty \nrm*{\bW(t) - \bW(0)}_{\textrm{op}}\nrm*{y(t) - y^*}_2 + \nrm*{\bW(0)}_\textrm{op}\nrm*{\bar{y}(t) - y(t)}_2 dt \\
&\leq \sqrt{\Lambda}\int_0^T \prn*{\frac{\sqrt{\Lambda + \frac{\lambda}{4}}\nrm*{y(0) - y^*}^2}{\lambda\sqrt{\mu}} + 2\sqrt{\mu + \gamma}\nrm*{y(0) - y^*}}\exp\prn*{-\mu\lambda{}t} dt \\
&\leq \frac{\sqrt{\Lambda}\prn*{\frac{\sqrt{\Lambda + \frac{\lambda}{4}}\nrm*{y(0) - y^*}^2}{\lambda\sqrt{\mu}} + 2\sqrt{\mu + \gamma}\nrm*{y(0) - y^*}}}{\mu\lambda} \\
&= \frac{\Lambda\sqrt{1 + \frac{\lambda}{4\Lambda}}\nrm*{y(0) - y^*}^2}{\lambda^2\mu^{3/2}} + \frac{2\sqrt{\Lambda}\sqrt{1 + \frac{\gamma}{\mu}}\nrm*{y(0) - y^*}}{\lambda\sqrt{\mu}}
\end{aligned}
\end{equation}
\end{proof}
\subsection{Proof of Corollary \ref{cor:asymmetric-kernel-regime-random-init}}\label{sec:cor-ntk-mf}
Finally, we prove Corollary \ref{cor:asymmetric-kernel-regime-random-init} using the following:
\begin{lemma}[cf.~Theorem 6.1 \cite{wainwright2019high}]\label{lem:covariance-spectral-norm}
Let $W \in \mathbb{R}^{d\times k}$ with $d \leq k$ and with $W_{i,j} \sim \mathcal{N}(0,\sigma^2)$, then
\[
    \P\brk*{\norm{WW^\top - \sigma^2 k I}_{\textrm{op}} \geq 8 \sigma^2 \sqrt{kd}} \leq 2\expo{-\frac{d}{2}}
\]
\end{lemma}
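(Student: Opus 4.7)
By rescaling $W \to W/\sigma$, it suffices to prove the case $\sigma^2 = 1$: that is, $\nrm*{WW^\top - kI}_\textrm{op} \leq 8\sqrt{kd}$ with probability at least $1 - 2e^{-d/2}$. Since $d \leq k$, the $d$ eigenvalues of $WW^\top$ are exactly $\sigma_i(W)^2$, where $\sigma_1(W) \geq \dots \geq \sigma_d(W) \geq 0$ are the (nonzero) singular values of $W$. Hence
\[
\nrm*{WW^\top - kI}_\textrm{op} = \max\crl*{\sigma_1(W)^2 - k,\; k - \sigma_d(W)^2}.
\]
The plan is to control $\sigma_1(W)$ and $\sigma_d(W)$ separately via two standard tools, then convert those bounds into a bound on $\nrm*{WW^\top - kI}_\textrm{op}$.

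The first tool is Gordon's theorem for Gaussian matrices, which gives $\mathbb{E}[\sigma_1(W)] \leq \sqrt{k} + \sqrt{d}$ and (crucially using $k \geq d$) $\mathbb{E}[\sigma_d(W)] \geq \sqrt{k} - \sqrt{d}$. The second tool is Gaussian Lipschitz concentration: both $\sigma_1$ and $\sigma_d$ are $1$-Lipschitz functions of the entries of $W$ (by Weyl's inequality), so for any $t > 0$,
\[
\mathbb{P}\prn*{\sigma_1(W) \geq \sqrt{k} + \sqrt{d} + t} \leq e^{-t^2/2}, \qquad \mathbb{P}\prn*{\sigma_d(W) \leq \sqrt{k} - \sqrt{d} - t} \leq e^{-t^2/2}.
\]
Taking $t = \sqrt{d}$ and union-bounding, with probability at least $1 - 2e^{-d/2}$ we have $\sqrt{k} - 2\sqrt{d} \leq \sigma_d(W) \leq \sigma_1(W) \leq \sqrt{k} + 2\sqrt{d}$.

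To finish, I would square these bounds on this good event. For the top singular value, $\sigma_1(W)^2 - k \leq 4\sqrt{kd} + 4d \leq 8\sqrt{kd}$, using $d \leq k$ (hence $4d \leq 4\sqrt{kd}$). For the bottom singular value, when $\sqrt{k} \geq 2\sqrt{d}$ we get $k - \sigma_d(W)^2 \leq 4\sqrt{kd} - 4d \leq 4\sqrt{kd}$; in the remaining small-$k$ regime where $\sqrt{k} < 2\sqrt{d}$, the trivial bound $k - \sigma_d(W)^2 \leq k$ together with $k < 2\sqrt{kd}$ suffices. Combining the two cases gives $\nrm*{WW^\top - kI}_\textrm{op} \leq 8\sqrt{kd}$ on the good event, which is precisely the claim after unscaling. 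The argument is essentially routine once Gordon and Gaussian concentration are invoked; the only mild subtlety is the edge regime where $k$ is barely larger than $d$, for which the bound on the bottom singular value must be handled by a direct estimate rather than by squaring.
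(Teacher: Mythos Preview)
Your proposal is correct and follows exactly the standard argument behind the cited result (Theorem~6.1 of \cite{wainwright2019high}): Gordon's comparison bounds on $\mathbb{E}[\sigma_1(W)]$ and $\mathbb{E}[\sigma_d(W)]$, Gaussian Lipschitz concentration with $t=\sqrt{d}$, and then squaring. The paper does not supply an independent proof of this lemma, so there is nothing further to compare.
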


\asymmetrickernelregimerandomalpha*
\begin{proof}
All that is needed is to show the relationship between $k$ and the quantities involved in the statement of Theorem \ref{thm:symmetric-matrix-factorization-kernel-regime}. Let $\bW \defeq \begin{bmatrix}\bU\\\bV\end{bmatrix} \in \R^{2d\times{}k}$. By Lemma \ref{lem:covariance-spectral-norm},
\begin{equation}
\P\brk*{\nrm*{\bW\bW^\top - \alpha^2 k I}_{\textrm{op}} < 8\alpha^2\sqrt{2kd}} \geq 1 - 2\exp\prn*{-d}
\end{equation}
For the remainder of the proof, we condition on the event $\nrm*{\bW\bW^\top - \alpha^2 k I}_{\textrm{op}} < 8\alpha^2\sqrt{2kd}$. Next, we bound $\nrm*{y(0) - y^*}^2$:
\begin{equation}
\begin{aligned}
\nrm*{y(0) - y^*}^2
&\leq 2Y^2 + 2\nrm*{y(0)}^2 \\
&= 2Y^2 + 2\sum_{n=1}^N \inner{\bW(0)\bW(0)^\top}{\bar{\bX}_n}^2 \\
&\overset{(a)}= 2Y^2 + 2\sum_{n=1}^N \inner{\bW(0)\bW(0)^\top - \alpha^2 k I}{\bar{\bX}_n}^2\\
&\leq 2Y^2 + 2\sum_{n=1}^N \nrm*{\bW(0)\bW(0)^\top - \alpha^2 k I}^2_F\nrm*{\bar{\bX}_n}^2_F \\
&= 2Y^2 + 4d\nrm*{\bW(0)\bW(0)^\top - \alpha^2 k I}^2_{\textrm{op}}\sum_{n=1}^N \frac{1}{2}\nrm*{\bX_n}^2_F \\
&\leq 2Y^2 + 2d \prn*{8\alpha^2\sqrt{2kd}}^2 \nrm*{\mc{X}}_F^2 \\
&\leq 2Y^2 + 256kd^3\alpha^4\Lambda^2, \label{eq:kernel-regime-initial-prediction-bound}
\end{aligned}
\end{equation}
where for $(a)$, we used that $\bar{\bX}_n$ is zero on the diagonal. In order to apply Theorem \ref{thm:symmetric-matrix-factorization-kernel-regime} using
\begin{equation}
\gamma = 8\alpha^2\sqrt{2kd} \qquad\textrm{and}\qquad
\mu = \alpha^2k,
\end{equation}
we require that
\begin{equation}
\alpha^2k = \mu > \frac{4\Lambda\gamma}{\lambda} = \frac{32\alpha^2\Lambda\sqrt{2kd}}{\lambda} \iff k > \frac{2048\Lambda^2d}{\lambda^2}
\end{equation}
and 
\begin{equation}
\nrm*{y(0) - y^*} 
\leq \frac{\mu\lambda}{\sqrt{\Lambda}}\prn*{1 - \sqrt{\frac{1+\frac{\gamma}{\mu}}{1+\frac{\lambda}{4\Lambda}}}} 
= \frac{\alpha^2\lambda k}{\sqrt{\Lambda}}\prn*{1 - \sqrt{\frac{1+\frac{8\sqrt{2kd}}{k}}{1+\frac{\lambda}{4\Lambda}}}}
\end{equation}
By \eqref{eq:kernel-regime-initial-prediction-bound}, this is implied by
\begin{align}
\sqrt{2Y^2 + 256kd^3\alpha^4\Lambda^2}
&\leq \frac{\alpha^2\lambda k}{\sqrt{\Lambda}}\prn*{1 - \sqrt{\frac{1+\frac{8\sqrt{2d}}{\sqrt{k}}}{1+\frac{\lambda}{4\Lambda}}}} \\
\impliedby k &\geq \max\crl*{\frac{8192\Lambda^2 d}{\lambda^2}, 
\frac{512d^3\Lambda^3\prn*{4+\frac{16\Lambda}{\lambda}}^2}{\lambda^2},
\frac{Y}{2\alpha^2\sqrt{\Lambda}\prn*{\lambda + 4\Lambda}}
} \label{eq:kernel-regime-large-enough-k}
\end{align}
This is because $k \geq \frac{8192\Lambda^2 d}{\lambda^2}$ ensures
\begin{equation}
\sqrt{\frac{1+\frac{8\sqrt{2d}}{\sqrt{k}}}{1+\frac{\lambda}{4\Lambda}}} 
\leq \sqrt{\frac{1+\frac{\lambda}{8\Lambda}}{1+\frac{\lambda}{4\Lambda}}}
= \sqrt{1 - \frac{1}{2+\frac{8\Lambda}{\lambda}}}
\leq 1 - \frac{1}{4+\frac{16\Lambda}{\lambda}}\label{eq:random-init-corollary-sqrt-frac-bound}
\end{equation}
Consider two cases: either $2Y^2 \leq 256kd^3\alpha^4\Lambda^2$ or it is not. In the first case,
\begin{equation}
\begin{aligned}
\frac{\alpha^2\lambda k}{\sqrt{\Lambda}}\prn*{1 - \sqrt{\frac{1+\frac{8\sqrt{2d}}{\sqrt{k}}}{1+\frac{\lambda}{4\Lambda}}}}
&\geq \frac{\alpha^2\lambda k}{\sqrt{\Lambda}\prn*{4+\frac{16\Lambda}{\lambda}}} \\
&\geq \frac{\alpha^2\lambda \sqrt{k}}{\sqrt{\Lambda}\prn*{4+\frac{16\Lambda}{\lambda}}}\cdot \frac{\sqrt{512d^3\Lambda^3}\prn*{4+\frac{16\Lambda}{\lambda}}}{\lambda} \\
&= \sqrt{512kd^3\alpha^4\Lambda^2} \\
&\geq \sqrt{2Y^2 + 256kd^3\alpha^4\Lambda^2}
\end{aligned}
\end{equation}

For the first inequality, we used \eqref{eq:random-init-corollary-sqrt-frac-bound}, for the second inequality we used  $k \geq \frac{512d^3\Lambda^3\prn*{4+\frac{16\Lambda}{\lambda}}^2}{\lambda^2}$. Otherwise, $2Y^2 > 256kd^3\alpha^4\Lambda^2$ and 
\begin{equation}
\begin{aligned}
\frac{\alpha^2\lambda k}{\sqrt{\Lambda}}\prn*{1 - \sqrt{\frac{1+\frac{8\sqrt{2d}}{\sqrt{k}}}{1+\frac{\lambda}{4\Lambda}}}}
&\geq \frac{\alpha^2\lambda k}{\sqrt{\Lambda}\prn*{4+\frac{16\Lambda}{\lambda}}} \\
&\geq 2Y \\
&> \sqrt{2Y^2 + 256kd^3\alpha^4\Lambda^2}
\end{aligned}
\end{equation}
For the second inequality, we used that $k \geq \frac{Y}{2\alpha^2\sqrt{\Lambda}\prn*{\lambda + 4\Lambda}}$.

Therefore, for $k$ sufficiently large \eqref{eq:kernel-regime-large-enough-k}, by Theorem \ref{thm:symmetric-matrix-factorization-kernel-regime}
\begin{equation}
\begin{aligned}
\sup_{T\in\R_+} \nrm*{\bW(T) - \bW(0)}_F 
&\leq \frac{\sqrt{\Lambda + \frac{\lambda}{4}}\nrm*{y(0) - y^*}}{\lambda\sqrt{\mu}} \\
&\leq \frac{\sqrt{\Lambda + \frac{\lambda}{4}}\sqrt{2Y^2 + 256kd^3\alpha^4\Lambda^2}}{\lambda\sqrt{\alpha^2k}} \\
&\leq \frac{2\sqrt{\Lambda}\prn*{2Y + 16\sqrt{kd^3\alpha^4\Lambda^2}}}{\lambda\alpha\sqrt{k}} \\
&\leq \frac{4Y\sqrt{\Lambda}}{\lambda\alpha \sqrt{k}} + \frac{32 d^{3/2}\Lambda^{3/2}\alpha}{\lambda}\label{eq:kernel-proof-distance-travelled}
\end{aligned}
\end{equation}
and
\begin{equation}
\begin{aligned}
&\sup_{T\in\R_+} \nrm*{\bW(T) - \bar{\bW}(T)}_F \\
&\quad\leq \frac{\Lambda\sqrt{1 + \frac{\lambda}{2\Lambda}}\nrm*{y(0) - y^*}^2}{\lambda^2\mu^{3/2}} + \frac{2\sqrt{\Lambda}\sqrt{1 + \frac{\gamma}{\mu}}\nrm*{y(0) - y^*}}{\lambda\sqrt{\mu}} \\
&\quad\leq \frac{2\Lambda\prn*{2Y^2 + 512kd^3\alpha^4\Lambda^2}}{\lambda^2\prn*{\alpha^2 k}^{3/2}} + \frac{2\sqrt{\Lambda}\sqrt{1 + \frac{8\sqrt{2d}}{\sqrt{k}}}\prn*{2Y + \sqrt{512kd^3\alpha^4\Lambda^2}}}{\lambda\sqrt{\alpha^2 k}} \\
&\quad\leq \frac{4\Lambda Y^2}{\lambda^2\alpha^3 k^{3/2}} + \frac{1024d^3\Lambda^3\alpha}{\lambda^2\sqrt{k}} + \frac{8Y\sqrt{\Lambda}}{\lambda\alpha\sqrt{k}} + \frac{64d^{3/2}\Lambda^{3/2}\alpha}{\lambda}\label{eq:kernel-proof-difference-from-kernel}
\end{aligned}
\end{equation}
It is clear from \eqref{eq:kernel-proof-distance-travelled} and \eqref{eq:kernel-proof-difference-from-kernel} that there is some scalar $c$ which depends only on $\Lambda$, $\lambda$, $d$, and $Y$ such that 
\begin{equation}
\begin{aligned}
\sup_{T\in\R_+} \nrm*{\bW(T) - \bW(0)}_F 
&\leq c\prn*{\frac{1}{\alpha\sqrt{k}} + \alpha},\text{ and } \\
\sup_{T\in\R_+} \nrm*{\bW(T) - \bar{\bW}(T)}_F 
&\leq c\prn*{\frac{1}{\alpha^3k^{3/2}} + \frac{1}{\alpha\sqrt{k}} + \alpha}
\end{aligned}
\end{equation}
\end{proof}

\end{document}